\documentclass[11pt]{article}
\usepackage[margin=1.1in]{geometry}

\usepackage{amsthm}
\usepackage{natbib}
\usepackage[utf8]{inputenc} %
\usepackage[T1]{fontenc}    %
\usepackage{hyperref}       %
\usepackage{url}            %
\usepackage{booktabs}       %
\usepackage{amsmath,amssymb}
\usepackage{amsfonts}       %
\usepackage{graphicx}
\usepackage{algorithm}
\usepackage[noend]{algpseudocode}
\usepackage{nicefrac}       %
\usepackage{microtype}      %
\usepackage{mathtools}
\usepackage{xcolor}         %
\usepackage{macros}
\usepackage[capitalize]{cleveref}
\usepackage{enumerate}      %
\usepackage{mdframed}       %
\usepackage{subcaption}     %
\usepackage{pgfplots}
\usepgfplotslibrary{groupplots}
\pgfplotsset{compat=1.18}
\usepackage{pgfplotstable}

\usepackage{authblk}
\usepackage[toc,page]{appendix}
\usepackage{minitoc}

\usepackage{nameref}
\makeatletter
\let\orgdescriptionlabel\descriptionlabel
\renewcommand*{\descriptionlabel}[1]{%
	\let\orglabel\label
	\let\label\@gobble
	\phantomsection
	\edef\@currentlabel{#1}%
	\let\label\orglabel
	\orgdescriptionlabel{#1}%
}
\makeatother

\makeatletter
\newcommand{\leqnomode}{\tagsleft@true\let\veqno\@@leqno}
\makeatother

\newtheorem{theorem}{Theorem}[section]
\newtheorem{lemma}[theorem]{Lemma}
\newtheorem{fact}{Fact}
\newtheorem{definition}[theorem]{Definition}
\newtheorem{proposition}[theorem]{Proposition}
\newtheorem{corollary}{Corollary}[section]

\theoremstyle{definition}
\newtheorem{remark}[theorem]{Remark}
\newtheorem{assumption}{Assumption}
\newtheorem{claim}{Claim}

\AddToHook{env/definition/begin}{\crefalias{theorem}{definition}} %
\AddToHook{env/lemma/begin}{\crefalias{theorem}{lemma}} %
\AddToHook{env/proposition/begin}{\crefalias{theorem}{proposition}} %
\AddToHook{env/remark/begin}{\crefalias{theorem}{remark}} %

\newcommand{\loss}{\mathcal{L}}
\newcommand{\R}{\mathbb{R}}
\newcommand{\E}{\mathbb{E}}
\newcommand{\KL}[2]{D_{\mathrm{KL}}\!\left(#1\,\|\,#2\right)}

\newcommand{\const}{C_{U}}
\newcommand{\cfloor}{C_{\mathrm{floor}}}
\newcommand{\coll}{\mathrm{coll}}

\newcommand{\gdpolyak}{\texttt{GDPolyak}}

\newcommand{\disp}{\mathsf{D}}

\newcommand{\cavg}{\mathsf{Avg}}

\title{Local linear convergence of gradient methods for overparameterized Gaussian mixtures}

\author[1]{Jingxing Wang}
\author[1,2]{Vasileios Charisopoulos}
\author[1,3]{Maryam Fazel}
\affil[1]{Electrical \& Computer Engineering\\University of Washington, Seattle, WA}
\affil[2]{National Institute for Theory and Mathematics in Biology, Chicago, IL}
\affil[3]{Amazon, Inc.}
\affil[ ]{\texttt{\{jxwang1,vchariso,mfazel\}@uw.edu}}

\allowdisplaybreaks
\pgfplotsset{
	/pgfplots/errorplot/.style={
			width=0.99\linewidth,
			xmin=0,
			xmax=200,
			xlabel={Iteration},
			tick label style={scale=0.75, nodes={transform shape}},
			label style={scale=0.75, nodes={transform shape}},
			xtick pos={bottom},
			ytick pos={left},
			unbounded coords=jump,
			grid=major,
			legend style={
					nodes={scale=0.6, transform shape},
					legend cell align=left,
				},
			ylabel shift=-0.25em,
		}
}

\pgfplotsset{
	/pgfplots/gdptick/.style={
			green!50!black,
			only marks,
			mark=square*,
			mark options={fill=white},
			mark size=1.5pt,
		},
	/pgfplots/gdpplot/.style = {
			green!50!black,
			very thick,
			mark=none,
		},
	/pgfplots/grademplot/.style = {
			blue,
			very thick,
			densely dashed,
			mark=none,
		}
}

\doparttoc %

\begin{document}

\maketitle

\begin{abstract}
	We study the problem of learning Gaussian mixture models under %
	overparameterization.
	Prior work has shown that while overparameterization is essential for avoiding spurious local optima and enables global recovery of the ground-truth model using the gradient-EM (expectation-maximization) algorithm,
	it can dramatically slow down the local rate of convergence.
	Under certain assumptions on the mixture weights, we show that
	a standard divergence measure minimized by statistical
	learning procedures possesses a manifold of slow growth on which the well-known Polyak stepsize
	reduces the loss geometrically, and design a
	gradient-based method that converges to minimizers at a locally
	linear rate. Additionally, we show that our method converges to nearly optimal solutions --- up to a natural misspecification threshold ---
	for mixtures with arbitrary weights.
	At a high level, the method alternates between several ``short'' gradient descent steps that approach the
	manifold and ``long'' Polyak steps that contract the distance to minimizers. Our results suggest that slow
	convergence is not an intrinsic challenge of overparameterization, but can be overcome by exploiting
	the favorable structure of the loss landscape.

\end{abstract}

\faketableofcontents
\part{}

\vspace*{-2em}

\section{Introduction}

Gaussian mixture models (GMMs) are canonical latent variable models with a long history in statistics and machine learning, dating back to the work of Pearson in the late $\text{19}^{\text{th}}$ century. This framework posits that the unknown target distribution $p^{\star}$ is a mixture of $m$ Gaussian distributions (for simplicity, in this paper we consider isotropic Gaussians with identity covariance):
\begin{equation}
	p^{\star} = \sum_{i = 1}^{m} \pi_{i}^{\star} \phi(\cdot \mid \mu_{i}^\star) , \;\; %
	{\pi^{\star}} \geq 0, \; \sum_{i = 1}^{m} \pi^{\star}_{i} = 1, \;\; \text{where}
	\;\;
	\phi(x \mid \mu) := \frac{1}{(2 \pi)^{d/2}} e^{-\frac{\norm{x - \mu}^2}{2}}.
	\label{eq:gaussian-mixture-intro}
\end{equation}
Given samples from $p^{\star}$, the weights $\pi^{\star} \in \R^m$ and parameters $\mu_{i}^{\star}\in \R^d$
are often estimated via an iterative procedure known as the Expectation-Maximization (EM) algorithm~\citep{dempster1977maximum} and its gradient-based variants,
which aim to minimize the Kullback-Leibler (KL) divergence between the mixture and the unknown $p^{\star}$. Classical analyses on both population and sample-based EM show that EM enjoys local linear convergence under suitable regularity conditions~\citep{wu1983convergence,RW84}, such as well-separated components and a correctly specified number of components.

However, recent work has revealed a fundamental limitation of this classical perspective: if we
learn the $m$-component distribution by optimizing over an $n$-component model (with variables $\pi_i$ and $\mu_i$),
\begin{equation}
	p_{\mu, \pi} = \sum_{i = 1}^{n} \pi_{i} \phi(\cdot \mid \mu_{i}), \;\;
	\pi \geq 0, \; \sum_{i} \pi_i = 1,
	\label{eq:student-mixture-intro}
\end{equation}
then when the model is exactly parameterized ($n=m$), the KL divergence landscape can contain spurious local optima, and gradient-based methods fail to recover the ground-truth mixture; specifically, negative results for $m>2$ are shown in \citep{jin2016local}. In contrast, a growing line of work shows that overparameterization---fitting a model with more components than the ground truth; i.e., $n>m$---is essential for global convergence \citep{xu2024toward,zhou2025global}. In this regime, under a separation condition, gradient-EM dynamics provably recover the ground-truth GMM from a random initialization: each ground-truth component is captured by a cluster of fitted components, while redundant components are automatically pruned \citep{zhou2025global}.

This raises a natural question: if overparameterization is necessary for global recovery, what are its optimization consequences? A key challenge is that overparameterization fundamentally alters the local geometry of the loss. Near a solution, multiple fitted components may represent a single ground-truth component, leading to a singular Fisher Information Matrix (FIM) and flat directions in the loss landscape, which
translates to slow convergence rates near solutions~\citep{dwivedi2020singularity}. As a result, standard gradient-based methods exhibit sublinear convergence rates in the local phase \citep{xu2024toward,zhou2025global}, even after the global structure of the mixture has been correctly identified. A key question is whether this significant slowdown is unavoidable. We show that this is not the case, by designing and analyzing geometry-aware gradient-based methods that achieve the statistical recovery benefits \emph{and} favorable (linear) convergence rates simultaneously.

\subsection{Our contributions} \label{sec:subsec:our-contribution}
The works~\citep{xu2024toward,zhou2025global} outline two phases of convergence for the population gradient EM algorithm: a
\emph{global} phase, which drives the loss below a prescribed threshold, and a \emph{local} phase where all non-redundant
student means converge to the nearest teacher mean. In particular, both works show that the KL divergence between
$p^{\star}$ and $p_{\mu, \pi}$, which we denote by $\dkl{p^{\star}}{p_{\mu, \pi}}$, satisfies the celebrated {\L}ojasiewicz inequality~\citep{Lojasiewicz1963} near
solutions:
\begin{align}
	\norm{\grad_{\mu} \dkl{p^{\star}}{p_{\mu, \pi}}} \gtrsim (\dkl{p^{\star}}{p_{\mu, \pi}})^{\theta}, \;\; \text{for some $\theta \in (0, 1)$.}
	\label{eq:lojasiewicz-inequality}
\end{align}
For $\theta = \nicefrac{1}{2}$, the condition~\eqref{eq:lojasiewicz-inequality} is known as the \emph{Polyak-{\L}ojasiewicz} inequality~\citep{Polyak1963},
leading to local linear convergence of gradient methods~\citep{KNS16}. The exponent
$\theta$ is nonstandard and different from $\frac{1}{2}$ for gradient EM. Indeed, the singularity of the FIM
suggests that standard gradient methods should not achieve linear convergence rates, even in the case $n > m = 1$ where the student weights $\pi_{i}$ can be assigned arbitrarily.
In this work, we ask:

\begin{center}
	\begin{minipage}{0.8\textwidth}
		\centering
		\itshape Is it possible to design a (nearly) linearly convergent first-order method for learning overparameterized GMMs?
	\end{minipage}
\end{center}

We answer this question affirmatively by designing a two-stage gradient-based algorithm that leverages the landscape of the KL loss (as a function of the student means $\mu_{i}$) near minimizers. We show that once the global phase succeeds, the local slowdown due to overparameterization is not intrinsic and can be removed.
Our algorithm converges at a nearly \emph{linear}
rate to a collection of student means $\set{\widetilde{\mu}_i}_{i=1}^n$ that are optimal up to a natural misspecification threshold:
\begin{equation}
	\dkl{p^{\star}}{p_{\widetilde{\mu}, \pi}} \lesssim \Delta_{\pi}^2, \quad \text{where} \quad
	\Delta_{\pi} := \max_{\ell = 1, \dots, m} \set*{
		\frac{\abs[\big]{(\sum_{j: \text{$\mu_j$ is near $\mu_{\ell}^{\star}$}} \pi_{j}) - \pi_{\ell}^{\star}}}{\pi_{\ell}^{\star}}
	}.
	\label{eq:mismatch-high-level}
\end{equation}
Thus the parameters $\widetilde{\pi}_i$ are nearly optimal up to a threshold that depends on the worst-case
mismatch between the weight of the $\ell^{\text{th}}$ teacher component, $\pi_{\star}^{\ell}$, and the aggregate weight of
the cluster of students concentrating around $\mu_{\ell}^{\star}$. In the stylized setting of~\cite{xu2024toward} where
the teacher density is Gaussian, we have $\Delta_{\pi} = 0$. Consequently, our method converges
at a nearly linear rate to the unique solution $\widetilde{\mu}_1 = \dots = \widetilde{\mu}_n = 0$ --- an exponential
improvement over prior work.
More generally, our algorithm alternates between updates to student weights and student means using the methodology
in~\citep{zhou2025global}, gradually driving $\Delta_{\pi}$ towards $0$.
\Cref{fig:intro-acceleration} demonstrates the typical behavior of our algorithm on an instance with $\Delta_{\pi} = 0$,
comparing it with the gradient EM method.
As the plots illustrate, the latter settles into a sublinear rate of convergence; in contrast, our algorithm reduces both the KL loss $\dkl{p^{\star}}{p_{\mu, \pi}}$ and a natural measure of parameter
distance at a nearly linear rate.
In summary, our contributions include:
\begin{enumerate}[(i)]
	\item A geometric characterization of the slowdown (ravine structure) and decomposition of error;
	\item Local acceleration via a geometry-aware first-order method;
	\item Robustness to weight mismatch via perturbation analysis.
\end{enumerate}
Taken together, our results provide a unified picture: overparameterization is globally beneficial (enabling recovery), but locally singular (inducing flat directions). We show that these two effects can be reconciled by exploiting the induced geometric structure, yielding fast convergence in the local phase without sacrificing the benefits of overparameterization.

\pgfplotsset{
	/pgfplots/introplot/.style={
			width=0.95\linewidth,
			grid=major,
			xmin=0,
			xmax=200,
			xlabel={Iteration},
			tick label style={scale=0.75, nodes={transform shape}},
			label style={scale=0.75, nodes={transform shape}},
			xtick pos={bottom},
			ytick pos={left},
			unbounded coords=jump,
		}
}

\begin{figure}[t]
	\centering
	\begin{subfigure}[b]{0.45\linewidth}
		\centering
		\begin{tikzpicture}
			\begin{axis}[
					introplot,
					title={Loss},
					ymode=log,
					ylabel={$\dkl{p^{\star}}{p_{\mu, \pi}}$},
					legend style={
							nodes={scale=0.75, transform shape},
							legend cell align=left,
						},
				]
				\addplot+[blue, very thick, densely dashed, mark=none] table[
						x=iteration,
						y=mean_gd_kl,
						col sep=comma,
					] {figures/exp-no-mismatch.csv};
				\addlegendentry{Gradient EM}

				\addplot[green!50!black, very thick, mark=none] table[
						x=iteration,
						y=mean_gd_polyak_kl,
						col sep=comma,
					] {figures/exp-no-mismatch.csv};
				\addlegendentry{\gdpolyak}

				\addplot[green!50!black, only marks, mark=square*, mark options={fill=white}, mark size=1.5pt] table[
						x=iteration,
						y=mean_gd_polyak_kl,
						col sep=comma,
						restrict expr to domain={\thisrow{polyak_marker}}{1:1},
					] {figures/exp-no-mismatch.csv};

			\end{axis}
		\end{tikzpicture}
	\end{subfigure}~
	\begin{subfigure}[b]{0.45\linewidth}
		\centering
		\begin{tikzpicture}
			\begin{axis}[
					introplot,
					title={Parameter distance},
					ymode=log,
					ylabel={$
								\sum_{i = 1}^{n} \pi_{i} \norm{\mu_{i} - \mu_{i}^{\star}}^2
							$},
					legend cell align=left,
					legend style={
							nodes={scale=0.75, transform shape},
						},
				]
				\addplot[grademplot] table[
						x=iteration,
						y=mean_gd_params,
						col sep=comma,
					] {figures/exp-no-mismatch.csv};
				\addlegendentry{Gradient EM}

				\addplot[gdpplot] table[
						x=iteration,
						y=mean_gd_polyak_params,
						col sep=comma,
					] {figures/exp-no-mismatch.csv};
				\addlegendentry{\(\gdpolyak\)}

				\addplot[gdptick] table[
						x=iteration,
						y=mean_gd_polyak_params,
						col sep=comma,
						restrict expr to domain={\thisrow{polyak_marker}}{1:1},
					] {figures/exp-no-mismatch.csv};
			\end{axis}
		\end{tikzpicture}
	\end{subfigure}
	\caption{
		Local acceleration for loss and parameter distance. The square markers indicate iterates obtained with Polyak steps, which are interleaved with several steps of gradient descent. In this instance, we have dimension $d = 5$, $m = 3$ teacher components, and $n = 20$ student components.
	}
	\label{fig:intro-acceleration}
\end{figure}
\paragraph{Notation and basic constructions.}
\label{sec:subsec:notation}
We keep only the notation needed for the main text here; additional notation and standing assumptions are listed in Appendix~\ref{sec:notation-appendix}. Throughout, \(\norm{\cdot}\) denotes the Euclidean norm, \([k]:=\set{1,\ldots,k}\), and \(\Delta^{n-1}\) denotes the probability simplex. We write \(a\lesssim b\) to hide constants depending only on fixed model parameters and write \(a \asymp b\) when \(a \lesssim b\) and \(b \lesssim a\) simultaneously.
We work under the standard nondegeneracy, boundedness, and separation assumptions used in~\citet{zhou2025global}; for completeness, we list these in~\cref{assm:teacher-mixture} in the Appendix.

Given a set $\mathcal{X} \subset \Rbb^d$, we write $\dist_{\mathcal{X}}$ and $\proj_{\mathcal{X}}$ for its distance function and projection map:
\begin{equation}
	\dist_{\mathcal{X}}(x) = \inf_{y \in \mathcal{X}} \norm{x - y}, \qquad
	\proj_{\mathcal{X}}(x) = \argmin_{y \in \mathcal{X}} \norm{x - y}.
	\label{eq:distance-and-projection}
\end{equation}

Given a mean vector $\mu \in \R^{nd}$ partitioned into $\pmx{\mu_1^{\T} & \dots & \mu_{n}^{\T}}^{\T}$ with $\mu_{i} \in \R^d$, and corresponding
mixture weights $\pi = \pmx{\pi_1 & \dots & \pi_{n}}^{\T} \in \Delta^{n-1}$, we denote the mixture density by $p_{\mu, \pi}$, as defined in equation \eqref{eq:student-mixture-intro}.
Writing $\theta = (\mu, \pi)$ for the complete parameterization of the mixture, we let
\begin{equation}
	\psi_{i}(x; \theta) := \frac{\pi_{i} \phi(x \mid \mu_{i})}{\sum_{j = 1}^n \pi_{j} \phi(x \mid \mu_{j})}, \quad
	\set{\psi_{i}(x; \theta)}_{i=1}^{n} \in \Delta^{n-1}
	\label{eq:responsibilities}
\end{equation}
denote the so-called \emph{responsibility} of the $i^{\text{th}}$ mixture component.
We also write $\loss(\theta)$ for the population KL loss between $p^{\star}$ and $p_{\mu, \pi}$, with $\theta = (\mu, \pi)$:
\begin{equation}
	\loss(\theta) := \dkl{p^{\star}}{p_{\mu, \pi}} := \E_{X \sim p^{\star}}\Big[\log\Big(\frac{p^{\star}(X)}{p_{\mu,\pi}(X)}\Big)\Big].
	\label{eq:kl-loss}
\end{equation}

We also record a basic fact about $C^{2}$ manifolds that we use throughout our proofs.
\begin{fact}[Local expansion] \label{fact:local-C2-expansion}
	Let $\mathcal{M}$ be a $C^{2}$ manifold and fix $\bar{x} \in \mathcal{M}$. We have that
	\begin{equation}
		\label{eq:local-C2-expansion}
		x - \bar{x} \in \mathcal{T}_{\mathcal{M}}(\bar{x}) + O(\norm{x - \bar{x}}^2), \;\;
		\text{for all $x \in \mathcal{M}$ near $\bar{x}$.}
	\end{equation}
\end{fact}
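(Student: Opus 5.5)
The plan is to work in a local chart around $\bar{x}$. Since $\mathcal{M}$ is a $C^{2}$ embedded submanifold of $\R^{N}$, say of dimension $k$, there are a neighborhood $U$ of $\bar{x}$, an open set $V \subset \R^{k}$ containing $0$, and a $C^{2}$ map $\varphi : V \to \R^{N}$ with $\varphi(0) = \bar{x}$. This map is a homeomorphism onto $\mathcal{M} \cap U$, and $D\varphi(0)$ is injective with range $\mathcal{T}_{\mathcal{M}}(\bar{x})$. Every $x \in \mathcal{M}$ near $\bar{x}$ can then be written as $x = \varphi(v)$ with $v$ small. A second-order Taylor expansion with integral remainder gives
\[
x - \bar{x} = D\varphi(0)v + R(v), \qquad \norm{R(v)} \le \tfrac{1}{2}\sup_{\norm{w} \le \norm{v}} \norm{D^{2}\varphi(w)}\,\norm{v}^{2}.
\]
After shrinking $V$ to a compact ball, the supremum is bounded by a constant $C$. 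So $x - \bar{x} - D\varphi(0)v = O(\norm{v}^{2})$, and $D\varphi(0)v \in \mathcal{T}_{\mathcal{M}}(\bar{x})$.

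What remains is to turn $O(\norm{v}^{2})$ into $O(\norm{x - \bar{x}}^{2})$, and this is the only real step. Let $\sigma > 0$ be the smallest singular value of $D\varphi(0)$. From the expansion,
\[
\norm{x - \bar{x}} \ge \sigma \norm{v} - C\norm{v}^{2} \ge \tfrac{\sigma}{2}\norm{v}
\]
once $\norm{v} \le \sigma/(2C)$. This bound is valid for all $x$ in a small enough neighborhood, because $\varphi^{-1}$ is continuous, so $v \to 0$ as $x \to \bar{x}$ within $\mathcal{M}$. Combining the two estimates yields $\norm{R(v)} \le (4C/\sigma^{2})\norm{x - \bar{x}}^{2}$. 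This is exactly the claim that $x - \bar{x}$ lies within distance $O(\norm{x - \bar{x}}^{2})$ of $\mathcal{T}_{\mathcal{M}}(\bar{x})$.

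As an alternative that avoids parametrizations, one can use a $C^{2}$ local defining map $F : \R^{N} \to \R^{N-k}$ with $\mathcal{M} \cap U = F^{-1}(0)$ and $DF(\bar{x})$ surjective, so that $\mathcal{T}_{\mathcal{M}}(\bar{x}) = \ker DF(\bar{x})$. Taylor's theorem gives
\[
0 = F(x) - F(\bar{x}) = DF(\bar{x})(x - \bar{x}) + O(\norm{x - \bar{x}}^{2}).
\]
The distance from $x - \bar{x}$ to $\ker DF(\bar{x})$ is at most $\norm{DF(\bar{x})^{\dagger}}\,\norm{DF(\bar{x})(x - \bar{x})}$, which is $O(\norm{x - \bar{x}}^{2})$. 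This route is shorter and avoids the inverse-continuity argument, so I would present it as the main proof.
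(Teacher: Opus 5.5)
The paper states this as a standard fact and gives no proof, so there is nothing of the authors' to compare against. Both of your routes are correct, and either one would serve as the missing justification.

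In the parametrization argument, the key step is the one you flag. The lower bound $\|x-\bar{x}\|\ge\sigma\|v\|-C\|v\|^{2}\ge\tfrac{\sigma}{2}\|v\|$ converts the $O(\|v\|^{2})$ Taylor remainder into $O(\|x-\bar{x}\|^{2})$. Applying it to every $x\in\mathcal{M}$ near $\bar{x}$ uses continuity of $\varphi^{-1}$, i.e.\ that $\mathcal{M}$ is embedded rather than merely immersed. That hypothesis is genuinely needed. At the crossing point of an injectively immersed figure-eight, points on the other branch approach $\bar{x}$ transversally to the tangent line, so $\operatorname{dist}(x-\bar{x},\mathcal{T}_{\mathcal{M}}(\bar{x}))$ is of order $\|x-\bar{x}\|$ there. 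The hypothesis does hold in the paper, where $\mathcal{M}$ is a locally embedded ravine in $\mathbb{R}^{dn}$.

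The defining-map route builds the same hypothesis into the existence of a local $C^{2}$ submersion $F$ with $\mathcal{M}\cap U=F^{-1}(0)$. After that, the bound $\operatorname{dist}(z,\ker A)\le\|A^{\dagger}\|\,\|Az\|$ with $A=DF(\bar{x})$ finishes in one line. It yields the canonical choice $u=P_{\mathcal{T}_{\mathcal{M}}(\bar{x})}(x-\bar{x})$ with an explicit constant. The parametrization route gives $u=D\varphi(0)\varphi^{-1}(x)$ instead.

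Either choice suffices for how the paper uses the fact, for instance in the proof of Lemma~\ref{lem:relative-gradient-annulus}, where it needs $\|u\|\asymp\|y-\mu^{\star}\|$. Any $u$ with $\|x-\bar{x}-u\|\le C\|x-\bar{x}\|^{2}$ automatically satisfies $\bigl|\|u\|-\|x-\bar{x}\|\bigr|\le C\|x-\bar{x}\|^{2}$.
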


\begin{figure}[t]
	\centering
	\begin{subfigure}[b]{0.48\linewidth}
		\centering
		\includegraphics[width=0.98\linewidth]{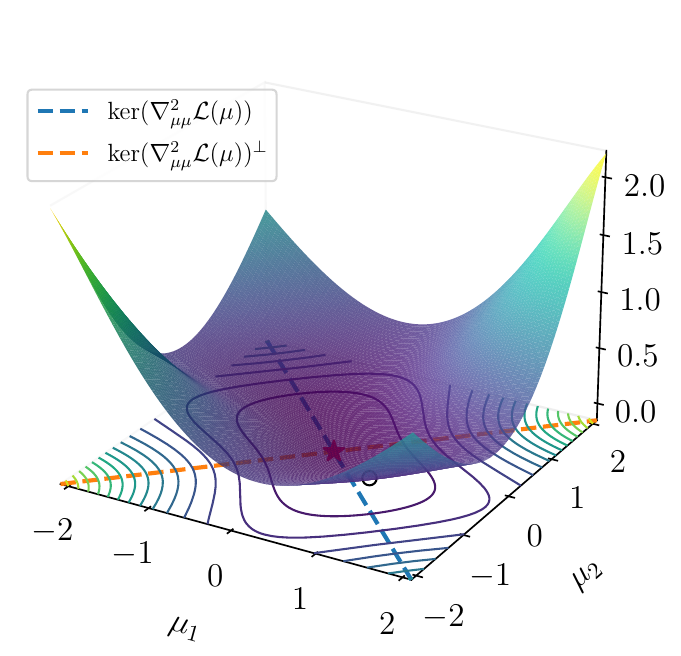}
		\subcaption{KL loss landscape}
		\label{fig:subfig:kl-loss-landscape}
	\end{subfigure}
	\hfill
	\begin{subfigure}[b]{0.48\linewidth}
		\centering
		\includegraphics[width=0.98\linewidth]{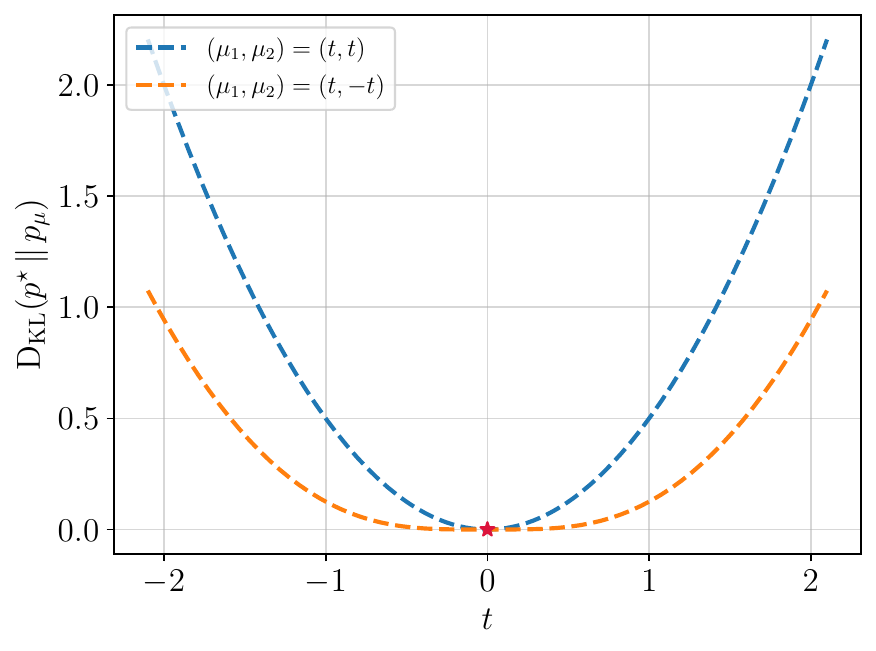}
		\subcaption{Loss along tangent and normal directions}
		\label{fig:subfig:loss-tangent-normal}
	\end{subfigure}
	\caption{
		Loss landscape for the example in~\eqref{eq:simple-example-2-student-intro}. \Cref{fig:subfig:kl-loss-landscape}: the KL landscape is steep across the tangent space and flat along it.
		\Cref{fig:subfig:loss-tangent-normal}: the loss grows quadratically along the normal direction and quartically along the tangent direction.
		Our algorithm alternates gradient descent steps, which lead towards the ravine, with Polyak steps near the ravine to reduce the loss.
	}
	\label{fig:ravine-toy}
\end{figure}

\section{Related work}

\label{sec:related-work}
\paragraph{EM and gradient EM for Gaussian mixtures.}
EM and gradient EM for Gaussian mixtures have been studied extensively, from classical convergence theory to modern population and finite-sample analyses~\citep{dempster1977maximum,wu1983convergence,balakrishnan2014statistical}.
Global convergence guarantees have been proved for two-component mixtures, while general multi-component mixtures 
allow only local analyses under separation assumptions~\citep{xu2016global,daskalakis2017ten,yan2017convergence,zhao2018statistical,kwon2020sample,segol2021improved}.
In contrast, exact-parameterized mixtures with three or more components can have bad local optima~\citep{jin2016local}, motivating the recent study of over-parameterized gradient EM.
The works by~\cite{xu2024toward} and~\cite{zhou2025global} are most closely related to our work. The first paper, addressing the simplest setting where $p^{\star} \sim \mathcal{N}(0, I_{d})$,
shows that the gradient EM algorithm converges (at a sublinear rate) from \emph{any} initialization.
The second paper~\citep{zhou2025global} extends this result to well-separated, multi-component Gaussian mixtures under mild overparameterization.
Our work shows how to significantly accelerate the ``local'' phase of convergence in the setting of~\cite{zhou2025global}, {by identifying and exploiting favorable geometry.}

\paragraph{Overparameterization, slowdown, and pruning.}

Overparameterization has emerged as a central feature of modern learning systems, with both statistical and algorithmic consequences. On the one hand, it enables interpolation without overfitting in classical settings \citep{bartlett2020benign} and underlies the double descent phenomenon \citep{belkin2019double, belkin2020two}. On the other hand, it can substantially alter optimization dynamics, even leading to provably slower convergence of gradient-based methods \citep{xu2023overparameterization}, where the rate slows down from exponential to polynomial $O(T^{-3})$.

Overparameterization creates singular or weakly identifiable directions, leading to %
slow learning dynamics~\cite{dwivedi2020singularity,dwivedi2021sharp} for the EM algorithm.
Related work on overspecified mixtures studies identifiability, redundant components, and vanishing weights~\citep{honguyen2015identifiability,rousseau2011overfitted}.
In the gradient-EM setting,~\citep{xu2024toward,zhou2025global} show that while overparameterization is essential for global convergence, it also leads to slow optimization dynamics near solutions.
We focus on the local difficulty: even after identification or pruning, a teacher may still be represented by multiple active students, producing flat directions that standard gradient descent traverses slowly.

\paragraph{Loss landscapes and manifold identification.}
The algorithm proposed in this paper is grounded in a long line of work on exploiting favorable
structure in optimization problems, including Wright's work on identifiable surfaces~\citep{Wright1993}, the partial smoothness framework of~\cite{Lewis2002}, and the closely related $\mathcal{V}\mathcal{U}$-framework of~\cite{Lemarechal2000}.
The main message from these works is that general optimization problems admit distinguished geometric structures which behave ``favorably'';
moreover, simple gradient-based methods tend to accelerate once they identify these structures~\citep{DDJ24,DDJ25,Mifflin2002,Hare2004,Drusvyatskiy2013,Lewis2016}.
The structure identified in~\cite{davis2024gradient}, which underpins our main results, differs from these results in the sense that the manifold
actually \emph{slows down} gradient-based methods. Our analysis
complements recent work studying local minima in general Gaussian mixtures~\citep{CSXZ24}, which does not show how to leverage the local loss landscape to achieve
acceleration.

\section{Main results}

\label{sec:main_result}
We present an overview of our main results, which can be roughly divided into two parts:

\paragraph{Ravine geometry.} First, we study the loss landscape near minimizers under the assumption that student clusters concentrating
around teacher components have the same aggregate weights as the corresponding teachers. We show that the KL loss
admits an algorithmically exploitable decomposition relative to a
manifold of slow growth, named the \emph{ravine} in recent work by~\cite{davis2024gradient}, and propose an algorithm that contracts
the distance to the optimal means at a linear rate.

\paragraph{Convergence on arbitrary mixtures.}
Second, we use a careful trajectory analysis to show that our algorithm behaves nearly identically (up to a weight-mismatch-dependent perturbation) under small mismatches between the aggregate weights of student clusters and their nearest teacher means---a setting that corresponds to the ``local identifiability'' phase analyzed in~\cite{zhou2025global}.

\subsection{Overparameterization, slowdown and acceleration}
Before we present our results in full generality, we consider a simple warm-up example that illustrates how overparameterization
slows down gradient methods, while simultaneously revealing the key mechanism behind our acceleration method, in a geometrically
transparent way. Let
\begin{equation}
	p^{\star} = \mathcal{N}(0, I_d), \;\; \text{and} \;\;
	p_{\mu} = \frac{1}{2} \mathcal{N}(\mu_1, I_d) + \frac{1}{2} \mathcal{N}(\mu_2, I_d).
	\label{eq:simple-example-2-student-intro}
\end{equation}
Clearly, the optimal student model has $\mu_{1} = \mu_{2} = 0$. A routine calculation shows that
\begin{equation}
	\grad_{\mu\mu}^2 \dkl{p^{\star}}{p_{\mu}}|_{\mu = (0, 0)} = \frac{1}{4} \begin{bmatrix}
		1 & 1 \\ 1 & 1
	\end{bmatrix},
\end{equation}
matching our intuition that the Hessian at the optimal solution is rank-deficient; its kernel is spanned by the vector $(1, -1)$. Indeed, the
KL loss grows quadratically along the ``normal'' direction $(1, 1)$, while it
grows much slower (at a \emph{quartic} rate) along the ``tangent'' direction $(1, -1)$, slowing down gradient descent steps; see Figure~\ref{fig:ravine-toy}
for an illustration.

In order to effectively deal with ``slow'' directions for gradient descent, we turn to the model
problem $f(x, y) := x^{2} + y^{4}$, mimicking the KL loss near $(\mu_1, \mu_2) = 0$.
When $x = 0$, a single step of gradient descent using the well-known Polyak stepsize~\citep{Polyak69} leads to
\[
	(x_+, y_+) = (x, y)
	- \frac{f(x, y) - f^{\star}}{\norm{\grad f(x, y)}^2} \cdot \grad f(x, y) = \big(0, {3y}/{4}\big).
\]
In other words, the gradient method equipped with the Polyak stepsize contracts the distance to $(x^{\star}, y^{\star}) = (0, 0)$
geometrically. Indeed, the work of~\cite{davis2024gradient} suggests that several optimization problems whose objectives grow quartically
away from minimizers possess a manifold of slow growth --- called the \emph{ravine} --- that contains the set of minimizers and satisfies the following properties: (i) the objective function behaves like a pure quartic, $x \mapsto \norm{x}^4$ along the ravine; and (ii) gradient descent with
constant stepsize approaches the ravine at a \emph{geometric} rate.

Building on the above, \cite{davis2024gradient} design an algorithm, dubbed \gdpolyak, that repeatedly interleaves
several gradient descent steps (approaching the ravine) with a single Polyak step (reducing the objective
function geometrically); one epoch of the \(\gdpolyak\) method for an arbitrary loss $f: \mathbb{R}^d \to \R$ (with $f_{\textrm{lb}} \leq \min f$) initialized at $x$ implements the following steps:
\begin{equation}\leqnomode
	\tag{$\gdpolyak(f, f_{\textrm{lb}}, x, \eta, K)$}
	\boxed{
		\begin{aligned}
			y^{(0)}   & := x - \frac{f(x) - f_{\textrm{lb}}}{\norm{\grad f(x)}^2} \grad f(x);           \\
			y^{(i+1)} & := y^{(i)} - \eta \cdot \grad f(y^{(i)}), \quad \text{for $i = 0, \dots, K-1$.}
		\end{aligned}
	}
	\label{eq:gdpolyak-epoch}
\end{equation}

\subsection{From single-teacher to arbitrary GMMs}
\label{sec:main-no-mismatch}
The results of~\cite{davis2024gradient} suggest that a ravine exists
for \emph{any} loss with isolated solutions that grows locally quartically away from its minimizers,
and it tangent to the nullspace of the Hessian at optimal solutions.
In the single-teacher setting of~\cite{xu2024toward}, the solution is unique and the KL loss~\eqref{eq:kl-loss} satisfies quartic growth: indeed, we show
that (see Prop.~\ref{prop:single-teacher-quartic-general} for a precise statement):
\begin{equation}
	\label{eq:single-teacher-quartic-growth-intro}
	\loss(\mu) \gtrsim \Bigl(
	\pi_{\min} \sum_{i = 1}^{n} \norm{\mu_{i} - \mu^{\star}}^2
	\Bigr)^2, \quad
	\text{where} \;\; \pi_{\min} := \min_{i = 1, \dots, n} \pi_{i}.
\end{equation}
To understand the KL loss geometry for arbitrary mixtures, we first pass to an idealized setting where every teacher component is represented by a cluster of students whose aggregate weight matches the corresponding teacher weight.
Such configurations arise during the local convergence phase of gradient EM~\citep{zhou2025global} near minimizers ${\theta}^{\star} = (\widetilde{\mu}, \pi)$ that induce partitions of the following form (cf.~\cref{definition:exact-grouping}):
\begin{equation}
	[n]= \bigsqcup_{\ell=1}^m S_\ell,
	\quad
	\widetilde{\mu}_i = \mu_{\ell}^\star, \;\;
	\text{for $i \in S_\ell$},
	\;\; \text{with} \;\;
	\sum_{i \in S_\ell}\pi_i=\pi_\ell^\star.
	\label{eq:partition}
\end{equation}
Our main result in this section characterizes the Hessian (for fixed $\pi$) at optimal solutions and shows that the loss indeed admits a ravine.
Inspecting the Hessian nullspace, we find that the ``slow growth'' directions are precisely those along which student cluster means remain unchanged.

\begin{theorem}[Ravine geometry of clustered loss (informal); see~\cref{thm:exact-grouped-ravine-geometry}]
	\label{theorem:clustered-loss-geometry-informal}
	Fix a minimizer $\theta^{\star}$ inducing a partition~\eqref{eq:partition}.
	The Hessian of the KL loss at $\theta^{\star}$ satisfies
	\begin{equation}
		\ker(\grad^2_{\mu\mu} \loss(\theta^{\star})) = \set{
			u \in \R^{nd} \mid \sum\limits_{i \in S_{\ell}} \pi_{i} u_{i} = 0, \;
			\text{for all $\ell \in [m]$}
		}. \label{eq:hessian-at-reference-optimum}
	\end{equation}
	Moreover, define the aggregate cluster weights and corresponding cluster means as
	\begin{equation}
		\widehat{\pi}_{\ell} := \sum_{i \in S_{\ell}} \pi_{i}, \quad
		\bar{\mu}_{\ell} := \frac{1}{\widehat{\pi}_{\ell}} \sum_{i \in S_{\ell}} \pi_{i} \mu_{i}.
		\label{eq:cluster-weights-and-means}
	\end{equation}
	Then, near the optimal solution $\theta^{\star}$, the KL loss satisfies quartic growth via the decomposition
	\begin{equation}
		\sqrt{\loss(\mu, \pi)} \gtrsim
		\sum_{\ell=1}^m
		\sum_{i\in S_\ell}
		\pi_i \norm{\mu_i-\mu_\ell^\star}^2 =
		\underbrace{
			\sum_{\ell=1}^m\widehat\pi_\ell
			\norm{
				\bar{\mu}_{\ell} - \mu_{\ell}^{\star}
			}^2
		}_{\text{cluster bias}}
		+
		\underbrace{
			\sum_{\ell=1}^m\sum_{i\in S_\ell}
			\pi_{i} \norm{
				\mu_{i} - \bar{\mu}_{\ell}
			}^2
		}_{\text{intra-cluster dispersion}}.
		\label{eq:bias-dispersion-informal}
	\end{equation}
\end{theorem}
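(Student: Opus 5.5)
The plan has three parts: compute the Hessian at the optimum, show quartic growth near the optimum, and check the bias--dispersion identity. Throughout, write $\theta^\star = (\widetilde{\mu}, \pi)$ with $\widetilde{\mu}_i = \mu^\star_{\ell}$ for $i \in S_\ell$. At $\theta^\star$ we have $p_{\widetilde{\mu},\pi} = p^\star$, since $\sum_{i\in S_\ell}\pi_i = \pi^\star_\ell$.

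\textbf{Hessian computation.} I would differentiate $\loss(\mu,\pi) = \E_{p^\star}[\log p^\star(X) - \log p_{\mu,\pi}(X)]$ twice in $\mu$. Because the model matches the truth at $\theta^\star$, the Hessian reduces to the Fisher information:
\[
\grad^2_{\mu\mu}\loss(\theta^\star) = \E_{p^\star}\bigl[g(X) g(X)^{\T}\bigr],
\qquad g_i(x) = \psi_i(x;\theta^\star)(x - \widetilde{\mu}_i).
\]
The remaining term, $\E_{p^\star}\bigl[\grad^2_{\mu\mu} p_{\mu,\pi}/p_{\mu,\pi}\bigr] = \int \grad^2_{\mu\mu} p_{\mu,\pi}$, vanishes because $\int p_{\mu,\pi} \equiv 1$.

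It follows that $u \in \ker$ if and only if $\sum_i \psi_i(x;\theta^\star)\langle x - \widetilde{\mu}_i, u_i\rangle = 0$ for a.e.\ $x$. Multiplying through by $p^\star(x)$, this becomes
\[
\sum_\ell \phi(x\mid\mu^\star_\ell)\,\Bigl\langle x - \mu^\star_\ell,\ \sum_{i\in S_\ell}\pi_i u_i\Bigr\rangle = 0 .
\]
The functions $\phi(x\mid\mu^\star_\ell)(x-\mu^\star_\ell)_k$ are linearly independent whenever the teacher means $\mu^\star_\ell$ are distinct. This is a standard fact; one can see it by comparing Fourier transforms, or from exponential decay rates in a generic direction. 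Linear independence gives exactly the characterization in \eqref{eq:hessian-at-reference-optimum}.

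\textbf{Quartic growth.} The identity in \eqref{eq:bias-dispersion-informal} is the parallel-axis (weighted variance) decomposition, expanded around $\bar\mu_\ell$; it is routine. The substance is the inequality $\loss \gtrsim \bigl(\sum_\ell\sum_{i\in S_\ell}\pi_i\|\mu_i-\mu^\star_\ell\|^2\bigr)^2$.

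First, by Pinsker, or a Hellinger or $\chi^2$-type lower bound valid locally, I reduce to lower bounding a squared $L^2(p^\star)$-type distance $\|p_{\mu,\pi} - p^\star\|^2$. Second, I Taylor expand each cluster's contribution around $\mu^\star_\ell$ to second order:
\[
\sum_{i\in S_\ell}\pi_i\phi(x\mid\mu_i) - \pi^\star_\ell\phi(x\mid\mu^\star_\ell)
= \widehat\pi_\ell\, \grad\phi\cdot(\bar\mu_\ell-\mu^\star_\ell) + \tfrac12\Bigl\langle \grad^2\phi,\ \sum_{i\in S_\ell}\pi_i (\mu_i-\mu^\star_\ell)(\mu_i-\mu^\star_\ell)^{\T}\Bigr\rangle + O\Bigl(\sum_{i\in S_\ell}\pi_i\|\mu_i-\mu^\star_\ell\|^3\Bigr).
\]
Set $b_\ell := \bar\mu_\ell - \mu^\star_\ell$ and $\Sigma_\ell := \sum_{i\in S_\ell}\pi_i(\mu_i-\mu^\star_\ell)(\mu_i-\mu^\star_\ell)^{\T} \succeq 0$. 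The key estimate is then
\[
\Bigl\|\sum_\ell \bigl[\widehat\pi_\ell\grad\phi_\ell\cdot b_\ell + \tfrac12\langle\grad^2\phi_\ell,\Sigma_\ell\rangle\bigr]\Bigr\| \gtrsim \sum_\ell\bigl(\|b_\ell\| + \operatorname{tr}\Sigma_\ell\bigr).
\]
This follows from linear independence of the first- and second-order Hermite-type functions $\{\partial_k\phi_\ell,\ \partial_{kl}\phi_\ell\}$ across distinct centers, together with compactness/homogeneity on the unit sphere of $(b,\Sigma)$. Note that $\operatorname{tr}\Sigma_\ell = \sum_{i\in S_\ell}\pi_i\|\mu_i-\mu^\star_\ell\|^2$, which is exactly the target quantity.

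The cubic remainder is $o(\operatorname{tr}\Sigma_\ell)$ near $\theta^\star$, so it is absorbed. Squaring gives $\loss \gtrsim (\sum_\ell \operatorname{tr}\Sigma_\ell)^2$, since $\|b_\ell\| \ge \|b_\ell\|^2$ locally. Taking square roots yields the claim.

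\textbf{Main obstacle.} I expect the hardest step to be this linear-independence-plus-compactness argument. A direct application is not uniform, because $b_\ell$ and $\Sigma_\ell$ are not independent: $\widehat\pi_\ell\, b_\ell b_\ell^{\T} \preceq \Sigma_\ell$. I would handle this by normalizing by $\sum_\ell(\|b_\ell\| + \operatorname{tr}\Sigma_\ell)$. I would then argue that a vanishing limit forces every $b_\ell = 0$ and every $\Sigma_\ell = 0$. The point is that $\langle\grad^2\phi,\Sigma\rangle$ with $\Sigma \succeq 0$ nonzero is never a combination of first derivatives: compare second Hermite moments, whose trace part is $\operatorname{tr}\Sigma > 0$.

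A secondary difficulty is converting KL to an $L^2$ norm locally. For this I would use $\loss \ge 2\,\mathrm{TV}^2$ and lower bound TV by testing against bounded Hermite-weighted functions.
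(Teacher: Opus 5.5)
Your proposal is correct, but both halves take a different route from the paper.

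\textbf{Hessian kernel.} The paper derives the same Fisher-type formula $\grad^2_{\mu_i\mu_j}\loss(\theta^\star)=\E[\psi_i\psi_j(X-\mu_i^\star)(X-\mu_j^\star)^{\T}]$. It then factors it through the reduced responsibilities as $(Hu)_i=\frac{\pi_i}{\pi_\ell^\star}\sum_k\Gamma_{\ell k}v_k$, with $v_k=\frac{1}{\pi_k^\star}\sum_{j\in S_k}\pi_ju_j$. It concludes from positive definiteness of the collapsed $m$-component Hessian $(\Gamma_{\ell k})$, which it obtains by citing identifiability of Gaussian mixtures and Fisher-information results.

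Your quadratic-form argument, combined with linear independence of $\phi(\cdot\mid\mu^\star_\ell)(\cdot-\mu^\star_\ell)_k$, is precisely a direct proof of that positive definiteness. It is the sturdier justification, since identifiability alone does not force a nonsingular Fisher matrix. It also needs only distinct teacher means.

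\textbf{Quartic growth.} The paper performs no expansion. It imports \citet[Theorem~B.1]{zhou2025global}, which states $\sum_\ell\sum_{i\in S_\ell}\pi_i\|\mu_i-\mu_\ell^\star\|^2\le C\sqrt{\loss(\theta)}$ outright, and pairs it with the parallel-axis identity.

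Your Pinsker, second-order Taylor and linear-independence argument is the classical second-order strong-identifiability proof. It is self-contained and free of the separation and boundedness assumptions. However, it yields only an unquantified neighborhood, and as written (with no zeroth-order term) it uses $\widehat\pi_\ell=\pi^\star_\ell$. The cited theorem instead covers the whole regime $\loss\le\varepsilon_0$ with arbitrary weights, and it also bounds $|\widehat\pi_\ell-\pi_\ell^\star|$, which the paper reuses in its perturbation analysis. Adding $(\widehat\pi_\ell-\pi^\star_\ell)\phi(\cdot\mid\mu^\star_\ell)$ to your linear map would extend your argument to nearby weights.

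\textbf{Simplifications.} The obstacle you anticipate is not there. The map $(b,\Sigma)\mapsto\sum_\ell[\widehat\pi_\ell\grad\phi_\ell\cdot b_\ell+\frac12\langle\grad^2\phi_\ell,\Sigma_\ell\rangle]$ is linear and injective on all of $\R^{md}\times\mathrm{Sym}(d)^m$, hence bounded below there. No normalization or compactness argument is needed, and the coupling $\widehat\pi_\ell b_\ell b_\ell^{\T}\preceq\Sigma_\ell$ plays no role. Positive semidefiniteness enters only through $\|\Sigma_\ell\|\gtrsim\operatorname{tr}\Sigma_\ell$.

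You can also work in $L^1$ throughout. Pinsker gives $\loss\ge\frac12\|p_{\mu,\pi}-p^\star\|_{L^1}^2$, and all norms are equivalent on the finite-dimensional span of the derivative functions. The cubic remainder has an integrable envelope, so neither Hermite test functions nor an $L^2(p^\star)$ comparison is needed.
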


\begin{remark}[Local convergence]
	The above theorem implies that the KL loss has fourth-order growth near the (isolated) minimizer $\theta^{\star}$. Therefore, the \(\gdpolyak\) method of~\cite{davis2024gradient} applied to $\loss(\mu)$ converges locally nearly-linearly to $\theta^{\star}$.
\end{remark}

\subsection{Trajectory analysis under weight mismatch}
The condition~\eqref{eq:partition} may not hold in practice. Aggregate weights of clusters could only approximately match the corresponding teacher weights;
similarly, student mixtures may include redundant components.
Therefore, our forthcoming analysis focuses on the ``local'' phase of gradient EM~\citep{zhou2025global}, where weight mismatches are sufficiently small and redundant student components can be pruned via simple thresholding. We
develop a perturbative analysis that relates the trajectory of $\gdpolyak$ for
the actual objective $\loss(\mu, \pi)$ to its trajectory on a ``reference''
objective $\loss(\mu, \bar{\pi})$, suitably reweighted so that~\eqref{eq:partition} takes
hold for its minimizers.

To state our results, we need some notation. We define the cluster weight mismatch:
\[
	\Delta_\pi :=
	\max_{\ell\in[m]}
	\frac{\abs[\big]{\sum_{i\in S_\ell}\pi_i-\pi_\ell^\star}}{\pi_\ell^\star}.
\]
Moreover, we define the reference weights $\bar{\pi}_i := \pi_{i} \pi_{\ell}^{\star} / \widehat{\pi}_{\ell}$ for each $i \in S_{\ell}$.
Clearly, $\Delta_{\bar{\pi}} = 0$.

\begin{lemma}[Perturbed loss (informal); see~\cref{proposition:loss-perturbation}]
	\label{thm:mismatched-frozen-weights}
	Suppose that $\min_{i} \pi_{i} > 0$ and that $\loss(\mu, \pi)$ is sufficiently small. Then, for every $\mu$ on a compact neighborhood $U$ of $\mu^{\star}$, it holds that
	\[
		\abs{\loss(\mu,\pi) - \loss(\mu,\bar\pi)},
		\;\;
		\norm{\grad_{\mu} \loss(\mu, \pi) - \grad_{\mu} \loss(\mu, \bar{\pi})},
		\;\;
		\opnorm{
			\grad_{\mu\mu}^2 \loss(\mu,\pi) -
			\grad_{\mu\mu}^2 \loss(\mu,\bar\pi)
		}
		\lesssim \Delta_{\pi}.
	\]
	Moreover, the function value at the minimizer of the perturbed loss satisfies
	\begin{equation}
		\min_{\mu \in U} \loss(\mu,\pi) - {\loss}^\star \lesssim \Delta_\pi^2.
		\label{eq:perturbed-loss-at-optimum}
	\end{equation}
\end{lemma}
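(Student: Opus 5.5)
The plan is to write the perturbed mixture as an explicit reweighting of the reference mixture and expand the loss to first order in the weight perturbation. For $i \in S_\ell$ we have $\pi_i = \bar\pi_i\,\widehat\pi_\ell/\pi_\ell^\star = \bar\pi_i(1+\varepsilon_\ell)$, where $\varepsilon_\ell := (\widehat\pi_\ell - \pi_\ell^\star)/\pi_\ell^\star$ and $\abs{\varepsilon_\ell}\le \Delta_\pi$. Hence
\[
	p_{\mu,\pi}(x) = p_{\mu,\bar\pi}(x)\Bigl(1 + \sum_{\ell} \varepsilon_\ell\, \Psi_\ell(x;\mu)\Bigr), \qquad \Psi_\ell(x;\mu) := \sum_{i\in S_\ell}\psi_i(x;(\mu,\bar\pi)) \in [0,1].
\]
Since $\sum_\ell \Psi_\ell = 1$, the factor in parentheses lies in $[1-\Delta_\pi, 1+\Delta_\pi]$. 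Consequently
\[
	\loss(\mu,\pi) - \loss(\mu,\bar\pi) = -\E_{p^\star}\Bigl[\log\bigl(1+\textstyle\sum_\ell\varepsilon_\ell\Psi_\ell\bigr)\Bigr],
\]
which is at most of order $\Delta_\pi$ in absolute value, uniformly in $\mu$, once $\Delta_\pi\le 1/2$. I expect "$\loss(\mu,\pi)$ sufficiently small" to guarantee this smallness of $\Delta_\pi$, via the local identifiability results of~\cite{zhou2025global}.

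For the gradient and Hessian bounds, I would differentiate the identity $\loss(\mu,\pi) - \loss(\mu,\bar\pi) = -\E_{p^\star}[\log(1+\sum_\ell \varepsilon_\ell\Psi_\ell(X;\mu))]$ in $\mu$. The derivatives of the responsibilities are
\[
	\grad_{\mu_j}\psi_i = \psi_i\bigl(\mathbb{1}[i=j]-\psi_j\bigr)(x-\mu_j),
\]
and the second derivatives are similarly polynomial in $(x-\mu_j)$ times responsibilities. These are bounded by $C(1+\norm{x}^2)$ uniformly over $\mu$ in the compact set $U$. Every term of the first and second derivatives of $\log(1+\sum\varepsilon_\ell\Psi_\ell)$ carries at least one factor $\varepsilon_\ell$: the $\Psi$-derivatives are multiplied by $\varepsilon$, and the denominators are bounded below by $1-\Delta_\pi$. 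Because $p^\star$ has all moments finite, dominated convergence justifies exchanging derivative and expectation. This yields both the gradient bound and the operator-norm Hessian bound $\lesssim \Delta_\pi$, with constants depending on $U$ and the moments of $p^\star$.

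For \eqref{eq:perturbed-loss-at-optimum}, the first-order bound alone only gives $\Delta_\pi$, so a sharper argument is needed, and this is the main obstacle. My plan is to evaluate at the reference minimizer $\widetilde\mu$ (where $\widetilde\mu_i=\mu_\ell^\star$ for $i\in S_\ell$), using $\min_U \loss(\cdot,\pi)\le \loss(\widetilde\mu,\pi)$. At $\widetilde\mu$ we have $p_{\widetilde\mu,\bar\pi}=p^\star$, and $\Psi_\ell(x;\widetilde\mu)$ is exactly the teacher responsibility $\psi^\star_\ell(x)=\pi^\star_\ell\phi(x\mid\mu^\star_\ell)/p^\star(x)$. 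Therefore
\[
	\loss(\widetilde\mu,\pi) = -\E_{p^\star}\bigl[\log(1+\textstyle\sum_\ell\varepsilon_\ell\psi^\star_\ell)\bigr].
\]
Its linear term is $-\sum_\ell\varepsilon_\ell\E_{p^\star}[\psi^\star_\ell] = -\sum_\ell \varepsilon_\ell \pi^\star_\ell = -\sum_\ell(\widehat\pi_\ell-\pi^\star_\ell)=0$, since both weight vectors sum to one. Using $-\log(1+t)\le -t + t^2$ for $\abs t\le 1/2$ then gives $\loss(\widetilde\mu,\pi)\le \Delta_\pi^2$, and since $\loss^\star=0$ this proves the claim.

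One subtlety remains: $\widetilde\mu$ must lie in $U$, and $\loss^\star$ must be interpreted as the reference optimal value, which is $0$. I expect the compact neighborhood in the formal \cref{proposition:loss-perturbation} to be chosen around $\widetilde\mu=\mu^\star$, which would settle this point.
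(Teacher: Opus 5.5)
Your proposal is correct. For the floor bound you follow the paper's route. You evaluate at the clustered configuration with every student on its teacher mean, so the loss depends only on the aggregate weights $\widehat\pi$, and you show the first-order term cancels. You make that cancellation explicit, $\sum_\ell \varepsilon_\ell \pi_\ell^\star = \sum_\ell(\widehat\pi_\ell - \pi_\ell^\star) = 0$, and close with $-\log(1+t)\le -t+t^2$, which gives $\cfloor = 1$ for every $\Delta_\pi \le 1/2$. The paper instead Taylor-expands $\rho \mapsto \loss_{\coll}(\nu^\star,\rho)$ at $\pi^\star$ and asserts that its gradient vanishes there. That holds only for the component tangent to the simplex (the gradient of the natural extension is $-\mathbf{1}$), so your computation is the precise form of that step.

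For the three $O(\Delta_\pi)$ bounds your route differs. The paper writes the two densities additively as $\sum_\ell \widehat\pi_\ell q_\ell$ and $\sum_\ell \pi_\ell^\star q_\ell$. It controls the log-ratio with the Lipschitz bound for $\log$, using density lower bounds of order $\pi^\star_{\min}$. It then compares responsibilities, $\abs{\psi_i(X;\mu,\pi) - \psi_i(X;\mu,\bar\pi)} \le C_0 \Delta_\pi$, inside the explicit gradient and Hessian formulas, and finishes with Cauchy--Schwarz and the block-matrix norm lemma. You instead use the exact factorization $p_{\mu,\pi} = p_{\mu,\bar\pi}\bigl(1+\sum_\ell \varepsilon_\ell \Psi_\ell\bigr)$. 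You then differentiate the single function $h(\mu) = -\E_{p^\star}\bigl[\log\bigl(1+\sum_\ell \varepsilon_\ell\Psi_\ell\bigr)\bigr]$, noting that every term of $\grad h$ and $\grad^2 h$ carries a factor $\varepsilon_\ell$.

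The two mechanisms are the same at heart, since $\psi_i(x;\mu,\pi) = \psi_i(x;\mu,\bar\pi)(1+\varepsilon_\ell)/(1+\sum_k \varepsilon_k\Psi_k)$ for $i \in S_\ell$. Yours handles value, gradient and Hessian in one computation, gives a value bound free of $1/\pi^\star_{\min}$, and reuses the same identity for the floor. The paper's version buys fully explicit constants $\const^{(0)}, \const^{(1)}, \const^{(2)}$ in terms of $m_2$, $M_U$, $n$ and $\pi^\star_{\min}$, which it invokes in the later perturbation arguments.
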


The reference weights \(\bar\pi\) preserve the relative weights within each cluster but adjust the cluster aggregate weights. Consequently, \(\loss(\mu,\bar\pi)\) satisfies the conditions
of~\cref{theorem:clustered-loss-geometry-informal},
whence it possesses a ravine. In what follows, we let $\mathcal{M}$
denote the ravine of the reference objective.

\begin{theorem}[Contraction of Polyak step (informal); see~\cref{thm:tangential-annulus-contraction-or-core}]
	\label{thm:mismatched-polyak-contraction}
	Suppose that the assumptions of~\cref{thm:mismatched-frozen-weights} %
	hold and
	that $\mu$ satisfies the following condition:
	\begin{equation}
		\max\set*{
			\dist_{\mathcal{M}}(\mu), \Delta_{\pi}
		} \lesssim \left( \sum_{\ell = 1}^{m}
		\sum_{i \in S_{\ell}} \norm{\mu_{i} - \bar{\mu}_{\ell}}^2
		\right)^{3/2}.
		\label{eq:in-annulus-informal}
	\end{equation}
	Then, the point $\mu^{+} := \mu - \frac{\loss(\mu, \pi)}{\norm{\grad \loss(\mu, \pi)}^2} \grad \loss(\mu, \pi)$ satisfies one of the following conditions:
	\begin{enumerate}[(i)]
		\item $\norm{P_{\mathcal{M}}(\mu^{+}) - \mu^{\star}} \lesssim \Delta_{\pi}^{1/3}$; or
		\item $\norm{P_{\mathcal{M}}(\mu^{+}) - \mu^{\star}} \leq (1 - \gamma) \norm{P_{\mathcal{M}}(\mu) - \mu^{\star}}, \; \text{for some $\gamma \in (0, 1)$.}$
	\end{enumerate}
\end{theorem}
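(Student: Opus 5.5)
We plan to reduce the statement to the Polyak-step contraction of~\cite{davis2024gradient} for the reference objective $f(\mu) := \loss(\mu,\bar\pi)$, and then treat the true objective $g(\mu) := \loss(\mu,\pi)$ as an $O(\Delta_\pi)$ perturbation. By~\cref{theorem:clustered-loss-geometry-informal}, $f$ has an isolated minimizer $\mu^{\star}$ with quartic growth. Its Hessian has the explicit kernel~\eqref{eq:hessian-at-reference-optimum}, so the ravine $\mathcal{M}$ is a $C^2$ manifold through $\mu^\star$ tangent to that kernel. Along this kernel the cluster means stay fixed and only the dispersion $\disp(\mu) := \sum_\ell\sum_{i\in S_\ell}\norm{\mu_i-\bar\mu_\ell}^2$ varies.

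First, we record the local ravine estimates for $f$ near $\mu^\star$. Write $y := P_{\mathcal{M}}(\mu)$ and $r := \norm{y-\mu^\star}$. On $\mathcal{M}$ we have $f(y) \asymp r^4$ and $\norm{\grad f(y)} \asymp r^3$. Moreover, $\grad f(y)$ is tangent to $\mathcal{M}$ up to $O(r^4)$ and is positively aligned with $y-\mu^\star$. Off the ravine, $f$ grows quadratically in the normal direction: $f(\mu) = f(y) + O(\dist_{\mathcal{M}}(\mu)^2) + O(r^3 \dist_{\mathcal{M}}(\mu))$, with the corresponding first-order expansion of $\grad f$. We also need $r^2 \asymp \disp(\mu)$ near the ravine. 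We get this from~\eqref{eq:bias-dispersion-informal} together with~\cref{fact:local-C2-expansion}, because on $\mathcal{M}$ the cluster bias is of higher order than the dispersion. Consequently, condition~\eqref{eq:in-annulus-informal} reads $\max\{\dist_{\mathcal{M}}(\mu),\Delta_\pi\} \lesssim r^3$.

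Second, we transfer these estimates to $g$ via~\cref{thm:mismatched-frozen-weights}. We get $g(\mu) = f(\mu) + O(\Delta_\pi)$ and $\grad g(\mu) = \grad f(\mu) + O(\Delta_\pi)$. Under~\eqref{eq:in-annulus-informal}, every error term (namely $\dist_{\mathcal{M}}$, the normal gradient component, and $\Delta_\pi$) is at most $c\,r^3$ for a small constant $c$. Hence $\grad g(\mu) = \grad f(y) + O(c\,r^3)$. Applying the numerator of the Polyak step to $g$ leaves the additive gap $\loss^\star$ vs. $\min g$, which by~\eqref{eq:perturbed-loss-at-optimum} is $O(\Delta_\pi^2) \ll r^4$.

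Third, we compute the step length. It is $\frac{g(\mu)}{\norm{\grad g(\mu)}^2}\norm{\grad g(\mu)} = \frac{f(y)}{\norm{\grad f(y)}}(1+O(c))$. The step direction is within angle $O(c)$ of $\grad f(y)$, which in turn is nearly aligned with $y-\mu^\star$. For the pure quartic model $f(y) \approx Q(y-\mu^\star)$ with $Q$ homogeneous of degree $4$, Euler's identity gives $\langle \grad f(y), y-\mu^\star\rangle = 4f(y)$. So the Polyak step moves a fraction of roughly $1/4$ of the radial component toward $\mu^\star$, as in the toy example $y \mapsto 3y/4$. The displacement is $O(r)$, so $\mu^+$ stays in the neighborhood. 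Projecting with the Lipschitz map $P_{\mathcal{M}}$ then gives $\norm{P_{\mathcal{M}}(\mu^+)-\mu^\star} \le (1-\gamma)r$, with $\gamma$ bounded away from $0$ once $c$ is small. This is alternative (ii).

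The dichotomy comes from the regime where the smallness of $c$ fails. If $r^3 \lesssim \Delta_\pi$, then $r \lesssim \Delta_\pi^{1/3}$. The step has length $O(r)$, so $\norm{P_{\mathcal{M}}(\mu^+)-\mu^\star} \lesssim \Delta_\pi^{1/3}$, which is alternative (i).

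We expect the main obstacle to be the Euler-type estimate in the third step. It requires showing that $f$ restricted to $\mathcal{M}$ is a genuine quartic form up to $O(r^5)$ corrections. It also requires the ratio $\langle \grad f, y-\mu^\star\rangle / f$ and the ratio $\norm{\grad f}\,r/f$ to be controlled uniformly, so that the radial contraction is not undone by a tangential drift. Our first attempt will be a Taylor expansion of the KL loss to fourth order at $\mu^\star$ restricted to the kernel~\eqref{eq:hessian-at-reference-optimum}. On that kernel the fourth-order term is a positive form determined by the intra-cluster dispersion, and its positivity is what underlies the quartic growth in~\cref{theorem:clustered-loss-geometry-informal}.
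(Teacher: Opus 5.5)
Your second and third steps have a genuine gap in how you handle the Polyak numerator. You transfer function values from $f=\loss(\cdot,\bar\pi)$ to $g=\loss(\cdot,\pi)$ using only the first-order bound $\abs{g(\mu)-f(\mu)}\lesssim\Delta_\pi$ from Lemma~\ref{thm:mismatched-frozen-weights}. Under \eqref{eq:in-annulus-informal} that error can be as large as $c\,r^3$, while the numerator itself is $f(\mu)\asymp r^4$. Hence $g(\mu)/f(y)=1+O(c/r)$, not $1+O(c)$. Your claimed step length $\frac{f(y)}{\norm{\grad f(y)}}(1+O(c))$ therefore does not follow. Your estimates only give a step length of order $r+\Delta_\pi/r^3\lesssim r+c$, which does not shrink with $r$. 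So you obtain neither contraction nor even that $\mu^+$ stays in the region where $P_{\mathcal M}$ is defined. The $O(\Delta_\pi^2)$ bound you cite only handles $\loss^\star$ versus $\min g$, which is not the issue. Your gradient transfer is fine, since $\norm{\grad f}\asymp r^3$ makes that relative error genuinely $O(c)$.

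The missing ingredient is the refined estimate of Lemma~\ref{lemma:relative-value-annulus}: $h:=g-f$ satisfies $\abs{h(\mu)}\lesssim\Delta_\pi\bigl(\Delta_\pi+\disp(y)+\dist_{\mathcal M}(\mu)\bigr)$. It rests on three facts.
\begin{itemize}
\item $h(\mu^\star)=\loss(\mu^\star,\pi)=O(\Delta_\pi^2)$, because the collapsed loss is stationary in the cluster weights at $\pi^\star$.
\item $\grad h(\mu^\star)$ annihilates the tangent space $\ker(\cavg)$. At $\mu^\star$ the responsibilities inside each cluster are proportional to $\pi_i$, so the gradient only sees cluster averages.
\item The $O(\Delta_\pi)$ Hessian and gradient perturbation bounds control the quadratic term along $\mathcal M$ and the normal offset.
\end{itemize}
On the annulus this yields $\abs{h}\lesssim\kappa^{-3/2}\disp(y)^{5/2}\asymp r^5\ll r^4$, so the two Polyak numerators agree to relative order $o(1)$.

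With this in hand, the paper does not need your Euler-type quartic computation. It applies the contraction of \citet{davis2024gradient} to the reference step $\bar\mu^+$ as a black box. It then shows $\norm{\mu^+-\bar\mu^+}\lesssim\kappa^{-1}\norm{P_{\mathcal M}(\mu)-\mu^\star}$ and uses local Lipschitzness of $P_{\mathcal M}$.

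Your final dichotomy is also unsupported as written. When $\Delta_\pi\gtrsim r^3$ you have no lower bound on $\norm{\grad g(\mu)}$, and hence no $O(r)$ step-length bound. The paper instead takes the implicit constant in \eqref{eq:in-annulus-informal} small ($\kappa\ge\kappa_0$), under which contraction holds throughout the annulus.
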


\paragraph{Proof sketch.}
Our proof separates the neighborhood of the minimizer into three regions: (i) a tube $\mathcal{T}$ around the manifold $\mathcal{M}$, where $\dist_{\mathcal{M}}(\mu)$ is small relative to the distance to minimizers; (ii) a ``core'' region $\mathcal{C}$, where $\Delta_{\pi}$ dominates the dispersion term from~\cref{theorem:clustered-loss-geometry-informal}; and (iii) an annulus $\mathcal{A} := \mathcal{T} \setminus \mathcal{C}$, wherein Polyak steps make algorithmic progress. We then show (see~\cref{fig:manifold}):
\begin{enumerate}
	\item Starting from $\mathcal{A}$, the projection of a single Polyak step onto the manifold either reduces the distance to $\mu^{\star}$ geometrically or already falls into $\mathcal{C}$ (\cref{thm:tangential-annulus-contraction-or-core}).
	\item The Polyak iterate can escape
	      $\mathcal{A}$; in that case, a few steps of gradient descent with sufficiently small stepsize restore proximity to the manifold (\cref{lem:short-gd-ravine-recovery}).
	\item Finally, we show that the gradient descent trajectory cannot undo the progress
	      towards $\mu^{\star}$ achieved by the Polyak step (\cref{cor:distance-contraction-after-gdpolyak-epochs}).
\end{enumerate}
To prove the above results for the general loss $\loss(\mu, \pi)$, we show that they hold for the ``reference'' loss $\loss(\mu, \bar{\pi})$
and establish a stronger version of the perturbation estimates in~\cref{lemma:relative-value-annulus}, using~\cref{proposition:loss-perturbation} as a boostrap. This allows us to directly relate the progress achieved by a single Polyak step on the two different objective functions (\cref{lemma:polyak-step-comparison}).

\begin{figure*}
	\centering
	\includegraphics[width=0.65\linewidth]{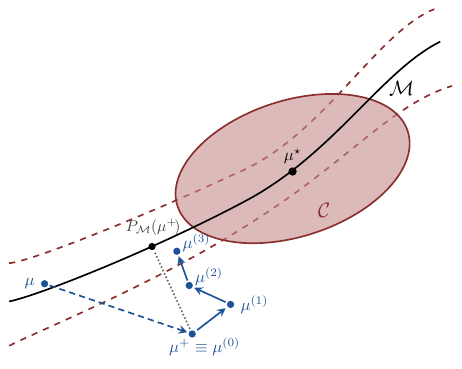}
	\caption{Illustration of algorithmic behavior. The Polyak step, $\mu^{+}$, makes significant progress towards $\mu^{\star}$ along the ravine $\mathcal{M}$ but can
		escape the tube surrounding it. A few ``short steps'' of gradient descent rapidly
		restore proximity to $\mathcal{M}$, wherein a new Polyak step can be attempted.}
	\label{fig:manifold}
\end{figure*}

\section{Implementation and numerical study}
\label{sec:experiments}

In this section, we formally describe our two phase method (\cref{alg:switching-gdpolyak}) and conduct a numerical study to
validate our theoretical predictions and demonstrate the local acceleration mechanism.
While our theory covers the population gradient EM setting, our experiments use a large \emph{fixed} batch
of $N = 10^{7}$ samples to approximate all quantities,
treating the empirical dynamics as a finite-sample perturbation of population-level behavior~\citep{balakrishnan2014statistical}.
Each experiment took less than five minutes on a single NVIDIA L40S GPU node with 48GB memory.

\begin{algorithm}[h]
    \caption{Phase-switching \(\gdpolyak\)}
    \begin{algorithmic}[1]
        \State \textbf{Input}: threshold $0 < \varepsilon < \varepsilon_{\text{id}}$, stepsize $\eta > 0$, epochs $T_{\mathrm{II}}$, gradient EM steps $K$.
        \State \textbf{Initialize}: $\mu_{i}^{(0)} \sim p^{\star}$,
        $\pi^{(0)} = \tfrac{1}{n} \cdot {\bm{1}_n}$.
        \For{$t = 0, 1, \dots$ until $\loss(\mu^{(t)}, \pi^{(t)}) \leq \varepsilon_{\text{id}}$}
            \State $\pi^{(t+1)} := \argmin_{\pi \in \Delta^{n-1}} \loss(\mu^{(t)}, \pi)$
            \State $\mu^{(t+1)} := \mu^{(t)} - \eta \cdot \grad_{\mu} \loss(\mu^{(t)}, \pi^{(t+1)})$
            \Comment{Gradient EM}
        \EndFor
        \For{$\tau = 0, \dots, T_{\mathrm{II}}-1$}
            \Comment{See~\eqref{eq:gdpolyak-epoch}}
            \State $\mu^{(t+\tau+1)} := \gdpolyak(\mu \mapsto \loss(\mu, \pi^{(t)}), 0, \mu^{(t+\tau)}, \eta, K)$
        \EndFor
    \end{algorithmic}
    \label{alg:switching-gdpolyak}
\end{algorithm}
\textbf{Weight updates.} Following~\cite{zhou2025global}, we update weights by approximately solving the convex subproblem \(\argmin_{\pi \in \Delta^{n-1}} \loss(\mu^{(t)}, \pi)\).
We employ the standard EM weight update, which can be viewed as the fixed-point iteration on the KKT system, as the minimization oracle:
\begin{equation}
    \pi_{i}^{+} \gets
    \frac{1}{N} \sum_{k = 1}^{N} \frac{\pi_{i} \phi(X_{k} \mid \mu_{i}^{(t)})}{
        \sum_{j = 1}^{n} \pi_{j} \phi(X_{k} \mid \mu_{j}^{(t)})
    }.
    \label{eq:weight-update-em}
\end{equation}
We found that $10$ steps of~\eqref{eq:weight-update-em} were sufficient for stable performance in our experiments.

\paragraph{Experiment: acceleration with exact clustering.}
In our first experiment, we demonstrate the slowdown of gradient EM and the acceleration achieved by our method on an instance with $m = 3$ teachers with $\pi^{\star} = (0.35, 0.35, 0.3)$, $n = 20$ students with $\pi_i = \tfrac{1}{20}$ for all $i$, and ambient dimension $d = 5$; we initialize the student means artificially close to $\mu^{\star}$ so that~\eqref{eq:partition} is satisfied.
We compare the fixed-weight gradient EM method with \(\gdpolyak\), using one Polyak step every $20$ gradient EM steps, and terminate both methods upon reaching a target KL loss of $\varepsilon = 10^{-6}$.
The results are illustrated in~\cref{fig:no-mismatch-bias-dispersion}; we find that the intra-cluster dispersion term drives the slowdown of gradient EM, but is reduced sharply by the interleaved Polyak steps, as prescribed by our convergence analysis.

\pgfplotsset{
  /pgfplots/emsubplot/.style={
    width=\linewidth,
    height=0.82\linewidth,
    xmin=0,
    xmax=200,
    xlabel={Iteration},
    grid=major,
    major grid style={draw=gray!35, line width=0.25pt},
    tick label style={scale=0.72, nodes={transform shape}},
    label style={scale=0.82, nodes={transform shape}},
    title style={scale=0.92, nodes={transform shape}},
    xtick pos={bottom},
    ytick pos={left},
    unbounded coords=jump,
  }
}

\begin{figure}[htbp]
\centering
\begin{subfigure}[b]{0.33\textwidth}
\centering
\begin{tikzpicture}
\begin{axis}[
    errorplot,
    ymode=log,
    ylabel={
        $\dkl{p^{\star}}{p_{\mu, \pi}}$
    },
    ylabel style={
        scale=0.85,
        nodes={transform shape},
    }
]
\addplot[grademplot] table[
    x=iteration,
    y=mean_gd_kl,
    col sep=comma,
] {figures/exp-no-mismatch.csv};
\addlegendentry{Gradient EM}

\addplot[gdpplot] table[
    x=iteration,
    y=mean_gd_polyak_kl,
    col sep=comma,
] {figures/exp-no-mismatch.csv};
\addlegendentry{\(\gdpolyak\)}

\addplot[gdptick] table[
    x=iteration,
    y=mean_gd_polyak_kl,
    col sep=comma,
    restrict expr to domain={\thisrow{polyak_marker}}{1:1},
] {figures/exp-no-mismatch.csv};
\end{axis}
\end{tikzpicture}
\caption{Loss}
\label{fig:subfig:loss-nomismatch}
\end{subfigure}~%
\begin{subfigure}[b]{0.33\textwidth}
\centering
\begin{tikzpicture}
\begin{axis}[
    errorplot,
    ymode=log,
    ylabel={
        $\sum\limits_{\ell} \sum_{i \in S_{\ell}}
        \pi_{i} \norm{\mu_{i} - \bar{\mu}_{\ell}}^2$
    },
    ylabel style={
        scale=0.85,
        nodes={transform shape},
    }
]
\addplot[grademplot] table[
    x=iteration,
    y=mean_gd_dispersion,
    col sep=comma,
] {figures/exp-no-mismatch.csv};
\addlegendentry{Gradient EM}

\addplot[gdpplot] table[
    x=iteration,
    y=mean_gd_polyak_dispersion,
    col sep=comma,
] {figures/exp-no-mismatch.csv};
\addlegendentry{\(\gdpolyak\)}

\addplot[gdptick] table[
    x=iteration,
    y=mean_gd_polyak_dispersion,
    col sep=comma,
    restrict expr to domain={\thisrow{polyak_marker}}{1:1},
] {figures/exp-no-mismatch.csv};
\end{axis}
\end{tikzpicture}
\caption{Dispersion}
\label{fig:subfig:dispersion-nomismatch}
\end{subfigure}~%
\begin{subfigure}[b]{0.33\textwidth}
\centering
\begin{tikzpicture}
\begin{axis}[
    errorplot,
    ymode=log,
    ylabel={
        $\sum\limits_{\ell} \widehat{\pi}_{\ell} \norm{\bar{\mu}_{\ell} - \mu^{\star}_{\ell}}^2$
    },
    ylabel style={
        scale=0.85,
        nodes={transform shape},
    },
]
\addplot[grademplot] table[
    x=iteration,
    y=mean_gd_bias,
    col sep=comma,
] {figures/exp-no-mismatch.csv};
\addlegendentry{Gradient EM}

\addplot[gdpplot] table[
    x=iteration,
    y=mean_gd_polyak_bias,
    col sep=comma,
] {figures/exp-no-mismatch.csv};
\addlegendentry{\(\gdpolyak\)}

\addplot[gdptick] table[
    x=iteration,
    y=mean_gd_polyak_bias,
    col sep=comma,
    restrict expr to domain={\thisrow{polyak_marker}}{1:1},
] {figures/exp-no-mismatch.csv};
\end{axis}
\end{tikzpicture}
\caption{Bias}
\label{fig:subfig:bias-nomismatch}
\end{subfigure}
\caption{
Experiment on an instance with $\Delta_{\pi} = 0$. Gradient EM struggles to reduce the intra-cluster dispersion, which contributes the majority of the loss near minimizers. Polyak steps (indicated by square marks) sharply reduce the dispersion, but slightly increase the cluster bias due to leaving the vicinity of the ravine; interleaved gradient descent steps restore proximity to the manifold.
}
\label{fig:no-mismatch-bias-dispersion}
\end{figure}

\paragraph{Experiment: acceleration with arbitrary mixtures.}
We turn to an experiment under a more realistic setting where we update both means and mixture weights.
We maintain the previous setup but now sample ${\mu_i}^{(0)}$ from $p^{\star}$,
and compare gradient EM with~\cref{alg:switching-gdpolyak} using $\varepsilon_{\text{id}} = 10^{-3}$; the results are shown in \cref{fig:mismatch-freeze}, where the ``freeze'' threshold is indicated by a vertical line. Again, we find that Polyak steps are essential for reducing
the dispersion and achieving local linear convergence.

\newcommand{\freezeline}{%
  \draw[red!75!black, densely dashed, line width=0.85pt]
    ({axis cs:10,0} |- {rel axis cs:0,0}) --  ({axis cs:10,0} |- {rel axis cs:0,1.0});
}
\newcommand{\freezelabel}{%
  \node[red!75!black, scale=0.62, nodes={transform shape}, anchor=north west]
    at ({axis cs:10,0} |- {rel axis cs:0,0.98}) {freeze};
}

\begin{figure}[htbp]
\centering
\begin{subfigure}[b]{0.33\textwidth}
\centering
\begin{tikzpicture}
\begin{axis}[
    errorplot,
    ymode=log,
    ylabel={
        $\dkl{p^{\star}}{p_{\mu, \pi}}$
    },
    ylabel style={
        scale=0.85,
        nodes={transform shape},
    }
]
\addplot[grademplot] table[
    x=iteration,
    y=gradient_em_kl,
    col sep=comma,
] {figures/exp-freeze_10.csv};
\addlegendentry{Gradient EM}

\addplot[gdpplot] table[
    x=iteration,
    y=freeze_gd_polyak_kl,
    col sep=comma,
] {figures/exp-freeze_10.csv};
\addlegendentry{Alg.~\ref{alg:switching-gdpolyak}}

\addplot[gdptick] table[
    x=iteration,
    y=freeze_gd_polyak_kl,
    col sep=comma,
    restrict expr to domain={\thisrow{polyak_marker}}{0.5:1.5},
] {figures/exp-freeze_10.csv};

\freezeline
\freezelabel
\end{axis}
\end{tikzpicture}
\end{subfigure}~%
\begin{subfigure}[b]{0.33\textwidth}
\centering
\begin{tikzpicture}
\begin{axis}[
    errorplot,
    ymode=log,
    ylabel={
        $\sum\limits_{\ell} \sum_{i \in S_{\ell}}
        \pi_{i} \norm{\mu_{i} - \bar{\mu}_{\ell}}^2$
    },
    ylabel style={
        scale=0.85,
        nodes={transform shape},
    },
]
\addplot[grademplot] table[
    x=iteration,
    y=gradient_em_dispersion,
    col sep=comma,
] {figures/exp-freeze_10.csv};
\addlegendentry{Gradient EM}

\addplot[gdpplot] table[
    x=iteration,
    y=freeze_gd_polyak_dispersion,
    col sep=comma,
] {figures/exp-freeze_10.csv};
\addlegendentry{Alg.~\ref{alg:switching-gdpolyak}}

\addplot[gdptick] table[
    x=iteration,
    y=freeze_gd_polyak_dispersion,
    col sep=comma,
    restrict expr to domain={\thisrow{polyak_marker}}{0.5:1.5},
] {figures/exp-freeze_10.csv};

\freezeline
\freezelabel
\end{axis}
\end{tikzpicture}
\end{subfigure}~%
\begin{subfigure}[b]{0.33\textwidth}
\centering
\begin{tikzpicture}
\begin{axis}[
    errorplot,
    ymode=log,
    ylabel={
        $\sum\limits_{\ell} \widehat{\pi}_{\ell} \norm{\bar{\mu}_{\ell} - \mu^{\star}_{\ell}}^2$
    },
    ylabel style={
        scale=0.85,
        nodes={transform shape},
    },
]
\addplot[grademplot] table[
    x=iteration,
    y=gradient_em_bias,
    col sep=comma,
] {figures/exp-freeze_10.csv};
\addlegendentry{Gradient EM}

\addplot[gdpplot] table[
    x=iteration,
    y=freeze_gd_polyak_bias,
    col sep=comma,
] {figures/exp-freeze_10.csv};
\addlegendentry{Alg.~\ref{alg:switching-gdpolyak}}

\addplot[gdptick] table[
    x=iteration,
    y=freeze_gd_polyak_bias,
    col sep=comma,
    restrict expr to domain={\thisrow{polyak_marker}}{0.5:1.5},
] {figures/exp-freeze_10.csv};

\freezeline
\freezelabel
\end{axis}
\end{tikzpicture}
\end{subfigure}
\caption{
Experiment with general mixtures. Both methods are initialized identically; as in~\cref{fig:no-mismatch-bias-dispersion}, gradient EM struggles to decrease the dispersion term, in contrast with \(\gdpolyak\). The vertical dashed line marks the iteration upon which~\cref{alg:switching-gdpolyak} enters its second phase.
}
\label{fig:mismatch-freeze}
\end{figure}

\paragraph{Experiment: effect of cluster weight mismatch.}
We probe the effect of the mismatch $\Delta_{\pi}$ on the accuracy of the proposed method, as well as the acceleration mechanism itself.
We introduce a controlled mismatch $\Delta_{\pi} \in \set{10^{-1}, 10^{-2}, \dots, 10^{-4}}$ by perturbing the teacher weights from the setup of~\cref{fig:no-mismatch-bias-dispersion}, freeze student weights, and optimize the means using \(\gdpolyak\).
Plotting the results in~\cref{fig:mismatch-effect}, we observe that smaller $\Delta_{\pi}$ lead to similar geometric contraction factors for \(\gdpolyak\),
while larger mismatches can interfere with acceleration.\
All configurations induce a visible loss ``barrier'' proportional to the prediction $\Delta_{\pi}^2$ (cf.~\cref{thm:mismatched-frozen-weights}).

\definecolor{misLarge}{HTML}{D62728}
\definecolor{misMed}{HTML}{F28E2B}
\definecolor{misSmallA}{HTML}{4E79A7}
\definecolor{misSmallB}{HTML}{59A14F}

\begin{figure}[htbp]
	\centering
    \begin{subfigure}[b]{0.45\textwidth}
	\begin{tikzpicture}
    \begin{axis}[
        width=0.99\textwidth,
        height=0.77\textwidth,
        tick label style={
            scale=0.85,
            nodes={transform shape},
        },
        label style = {
            scale=0.75,
            nodes={transform shape},
        },
        ymode=log,
        ylabel={$\dkl{p^{\star}}{p_{\mu, \pi}}$},
        xlabel={Iteration},
        xmin=0,
        xmax=200,
        ymax=1,
        ymin=2e-7,
        grid=major,
        legend style={
            legend pos=south east,
            legend cell align=left,
            nodes={scale=0.55, transform shape},
        },
    ]
			\addplot[
				misLarge,
				very thick,
				mark=triangle*,
                mark size=1.5pt,
                mark repeat=10,
			] table[
					x=iteration,
					y=mismatch_0_kl,
					col sep=comma,
				] {figures/mismatch_GDP.csv};
			\addlegendentry{\(\Delta_{\pi} = 10^{-1}\)}

			\addplot[
				misMed,
				very thick,
                mark=diamond,
                mark size=1.5pt,
                mark repeat=10,
			] table[
					x=iteration,
					y=mismatch_1_kl,
					col sep=comma,
				] {figures/mismatch_GDP.csv};
			\addlegendentry{\(\Delta_{\pi} = 10^{-2}\)}

			\addplot[
				misSmallB,
				very thick,
				mark=o,
				mark size=1.5pt,
				mark repeat=10,
			] table[
					x=iteration,
					y=mismatch_2_kl,
					col sep=comma,
				] {figures/mismatch_GDP.csv};
			\addlegendentry{\(\Delta_{\pi} = 10^{-3}\)}

			\addplot[
				misSmallA,
				very thick,
                mark=square,
                mark size=1.5pt,
                mark repeat=20,
			] table[
					x=iteration,
					y=mismatch_3_kl,
					col sep=comma,
				] {figures/mismatch_GDP.csv};
			\addlegendentry{\(\Delta_{\pi} = 10^{-4}\)}

			\addplot[
				misLarge!45,
				densely dashed,
                very thick,
				mark=none,
			] coordinates {(0,1e-2) (200,1e-2)};
            \addplot[
				misMed!45,
				densely dashed,
                very thick,
				mark=none,
			] coordinates {(0,1e-4) (200,1e-4)};
			\addplot[
				misSmallB!45,
				densely dashed,
                very thick,
				mark=none,
			] coordinates {(0,1e-6) (200,1e-6)};  
        \end{axis}
	\end{tikzpicture}
    \caption{Loss for varying mismatch $\Delta_{\pi}$}
    \label{fig:subfig:loss-by-mismatch}
    \end{subfigure}~%
    \begin{subfigure}[b]{0.45\textwidth}
	\begin{tikzpicture}
    \begin{axis}[
        width=0.99\textwidth,
        height=0.77\textwidth,
        tick label style={
            scale=0.85,
            nodes={transform shape},
        },
        label style = {
            scale=0.75,
            nodes={transform shape},
        },
        ymode=log,
        ylabel={Mismatch $\Delta_{\pi}$},
        xlabel={Iteration},
        xmin=0,
        xmax=20,
        ymax=1,
        ymin=2e-5,
        grid=major,
        legend style={
            legend cell align=left,
            nodes={scale=0.65, transform shape},
        },
    ]
			\addplot[
				grademplot
			] table[
					x=iteration,
					y=max_group_mismatch,
					col sep=comma,
				] {figures/mismatch_small.csv};
			\addlegendentry{Gradient EM}

        \end{axis}
	\end{tikzpicture}
    \caption{Mismatch reduction during gradient EM}
    \label{fig:subfig:gradient-em-mismatch-reduction}
    \end{subfigure}
	\caption{
		Effect of weight mismatch $\Delta_{\pi}$ on acceleration. (\textbf{\ref{fig:subfig:loss-by-mismatch}}): small $\Delta_{\pi}$ maintain the contraction rate and final loss essentially unaffected, while larger $\Delta_{\pi}$ stall at higher loss floors. (\textbf{\ref{fig:subfig:gradient-em-mismatch-reduction}}): gradient EM rapidly reduces $\Delta_{\pi}$, which stabilizes within less than 15 iterations. 
	}
	\label{fig:mismatch-effect}
\end{figure}

\paragraph{Experiment: model remains overparameterized.}
Our last experiments examines whether gradient EM can prune the student model down to a ``minimal'' parameterization with $m$ active student components. We sample a mixture with $m = 3$ teachers and fit a student mixture with $n = 20$ components using 50 iterations of gradient EM over several trials in dimension \(d \in \set{2,5,10}\);
we count the number of active student components, defined as satisfying \(\pi_{i} > \tfrac{1}{100}\). \Cref{fig:active-components} shows the number of active components is typically larger than \(m=3\), especially in low dimensions.

\begin{figure}[htbp]
	\centering
	\begin{tikzpicture}
		\begin{groupplot}[
				group style={
						group size=3 by 1,
						horizontal sep=0.5cm,
					},
				width=0.265\textwidth,
				height=0.215\textwidth,
				scale only axis,
				ybar,
				/pgf/bar width=6.5pt,
				xmin=2.5,
				xmax=13.5,
				ymin=0,
				ymax=13.5,
				xlabel={Active components},
				xtick={3,5,7,9,11,13},
				xticklabels={3,5,7,9,11,13+},
				ytick={0,5,10},
				tick label style={scale=0.75, nodes={transform shape}},
				label style={scale=0.75, nodes={transform shape}},
				title style={scale=0.75, nodes={transform shape}},
				axis line style={black!70},
				ymajorgrids=true,
				xmajorgrids=true,
				major grid style={draw=gray!25, line width=0.25pt},
				unbounded coords=jump,
				xtick pos=bottom,
			]

			\nextgroupplot[
				ylabel={Trials},
			]
			\addplot+[
				fill=blue!45,
				draw=black,
				line width=0.35pt,
				mark=none,
			] table[
					x expr={\thisrow{active_count} >= 13 ? 13 : \thisrow{active_count}},
					y=trial_count,
					col sep=comma,
					restrict expr to domain={\thisrow{dimension}}{2:2},
				] {figures/histogram.csv};

			\draw[black!75, densely dashed, line width=0.8pt]
			(axis cs:3,0) -- (axis cs:3,13.5);
			\node[anchor=north west] at (axis cs:3.25,13.1) {\scalebox{0.75}{$>m$}};
			\node[draw, fill=white, anchor=north east] at (rel axis cs:0.99,0.99) {\scalebox{0.75}{$d = 2$}};

			\nextgroupplot[
			]
			\addplot+[
				fill=blue!45,
				draw=black,
				line width=0.35pt,
				mark=none,
			] table[
					x expr={\thisrow{active_count} >= 13 ? 13 : \thisrow{active_count}},
					y=trial_count,
					col sep=comma,
					restrict expr to domain={\thisrow{dimension}}{5:5},
				] {figures/histogram.csv};

			\draw[black!75, densely dashed, line width=0.8pt]
			(axis cs:3,0) -- (axis cs:3,13.5);
            \node[draw, fill=white, anchor=north east] at (rel axis cs:0.99,0.99) {\scalebox{0.75}{$d = 5$}};

			\nextgroupplot[
			]
			\addplot+[
				fill=blue!45,
				draw=black,
				line width=0.35pt,
				mark=none,
			] table[
					x expr={\thisrow{active_count} >= 13 ? 13 : \thisrow{active_count}},
					y=trial_count,
					col sep=comma,
					restrict expr to domain={\thisrow{dimension}}{10:10},
				] {figures/histogram.csv};

			\draw[black!75, densely dashed, line width=0.8pt]
			(axis cs:3,0) -- (axis cs:3,13.5);
            \node[draw, semithick, fill=white, anchor=north east] at (rel axis cs:0.99,0.99) {\scalebox{0.75}{$d = 10$}};
		\end{groupplot}
	\end{tikzpicture}

	\caption{
		Active components after \(50\) iterations of gradient EM for a student with \(n=20\) components and a teacher with \(m = 3\) components. The histogram suggests that pruning can remove redundant student components, but does not completely eliminate overparameterization.
	}
	\label{fig:active-components}
\end{figure}

\subsection*{Discussion and future work}
We introduced a locally accelerated method for learning overparameterized GMMs, leveraging the ravine
geometry of the loss landscape near minimizers. Our work leaves open  several exciting research directions, such as: (i) extending our guarantees to the finite-sample setting;
(ii) obtaining precise quantitative estimates of the neighborhood of local acceleration; and (iii) analyzing a practical variant of~\cref{alg:switching-gdpolyak} that does not artificially separate the local stage by freezing the student weights.

\paragraph{Acknowledgements.}
The authors are grateful to Liwei Jiang, Mo Zhou and Weihang Xu for useful discussions. The work of VC was supported in part by grants from the NSF (DMS-2235451) and Simons Foundation (MP-TMPS-00005320) to the NSF-Simons National Institute for Theory and Mathematics in Biology (NITMB). The work of MF was
supported in part by awards NSF TRIPODS II
2023166, NSF CCF 2212261, NSF CCF 2312775, and by the Moorthy Family professorship at UW.

\bibliographystyle{plainnat}
\bibliography{main}

\clearpage

\appendix

\part{Appendix} %
\parttoc %

\section{Background}

\subsection{Notation and standing assumptions}
\label{sec:notation-appendix}

In this section we record additional notation used in the appendix.

We write $\ip{x, y} = \tr(x^{\T} y)$ for the Euclidean inner product and $\norm{x} = \sqrt{\ip{x, x}}$ for the induced norm.
We write $\mathbb{S}^{d-1}$ for the unit sphere in $d$ dimensions and $\Delta^{n-1}$ for the probability simplex in $n$ dimensions. We let $\opnorm{A} :=
	\sup_{x \in \mathbb{S}^{n-1}} \norm{Ax}$ denote the $\ell_2 \to \ell_2$ operator norm of a matrix $A \in \R^{m \times n}$, and write $A \otimes B$
for the Kronecker product between matrices $A$ and $B$. Finally, we write $[k]$ for the set $\set{1, \dots, k}$ and use
the notation $\mathcal{S} = \bigsqcup_{j} \mathcal{S}_j$ to indicate a union of disjoint sets: $\mathcal{S} = \bigcup_{j} \mathcal{S}_j$
with $\mathcal{S}_{j} \cap \mathcal{S}_{k} = \emptyset$.

Following~\cite{zhou2025global}, we make the following assumptions about $p^{\star}$.
\begin{assumption}[Teacher mixture] \label{assm:teacher-mixture}
	The ground-truth density $p^{\star} = p_{\mu^{\star}, \pi^{\star}}$
	is parameterized by $\mu^{\star} = ({\mu^{\star}_1}^{\T}, \dots, {\mu_{m}^{\star}}^{\T})^{\T} \in \R^{md}$ and
	$\pi^{\star} = (\pi_1^{\star}, \dots, \pi_{m}^{\star})^{\T} \in \Delta^{m-1}$ satisfying:
	\begin{description}
		\item[(A1)\label{assm:item:nondegeneracy}] (\textbf{Nondegeneracy}). Let $M^{\star} := \sum_{i = 1}^m \pi_{i}^{\star} \mu_{i}^{\star} {\mu_{i}^{\star}}^{\T}$.
		      There exist $\lambda_{\max}, \lambda_{\min} \geq 0$ such that

		      \[
			      0 < \lambda_{\min} =
			      \lambda_{m}(M^{\star})
			      \leq
			      \lambda_{1}(M^{\star})
			      = \lambda_{\max},
		      \]

		      where $\lambda_{1}(M^{\star}) \geq \dots \geq \lambda_{m}(M^{\star})$ are the nonzero eigenvalues of $M^{\star}$.
		\item[(A2)\label{assm:item:boundedness}] (\textbf{Boundedness}). The components of $\mu^{\star}$ satisfy
		      \[
			      D_{\max} \geq \norm{\mu_{i}^{\star}} \geq D_{\min} \geq
			      4 \cdot \frac{\lambda_{\max}}{\lambda_{\min}} \sqrt{dn}, \;\; \text{for all $i \in [m]$.}
		      \]
		\item[(A3)\label{assm:item:separation}] (\textbf{Separation}). Let $\Delta := \min_{i \neq j} \norm{\mu_{i}^{\star} - \mu_{j}^{\star}}$ and
		      $\pi_{\min}^{\star} = \min_{j} \pi_{j}^{\star}$; we have
		      \[
			      \Delta \geq C \cdot \max\set*{
				      \sqrt{D_{\max}} (dn)^{1/4}, \sqrt{\frac{d}{\pi_{\min}^{\star}}},
				      \sqrt{\log\left(
					      \frac{D_{\max} \cdot d n m}{\lambda_{\min} \pi_{\min}^{\star}}
					      \right)}
			      },
		      \]
		      for a sufficiently large constant $C > 0$.
	\end{description}
\end{assumption}

Note that Item~\ref{assm:item:nondegeneracy} implies that $\pi_{\min}^{\star} > 0$.

\subsection{Auxiliary results}
\label{sec:subsec:auxiliary-results}

In this section, we collect technical results about the KL loss used throughout our analysis.

\begin{lemma}[{Gradient of KL loss;~\citep[Lemma A.2]{zhou2025global}}] \label{lemma:mu-gradient}
	We have that

	\begin{equation}
		\grad_{\mu_i}\loss(\theta) =
		\mathbb{E}_{X \sim p^{\star}}\left[
			\psi_{i}(X; \theta)(\mu_{i} - X)
			\right] =
		\sum_{j = 1}^{m} \pi_{j}^{\star}\,\mathbb{E}_{X \sim \mathcal{N}(\mu_{j}^{\star}, \id_{d})}[\psi_{i}(X; \theta)(\mu_{i} - X)].
		\label{eq:mu-gradient}
	\end{equation}

	where $\loss(\theta)$ and $\psi_{i}$ are defined in~\eqref{eq:kl-loss} and~\eqref{eq:responsibilities}, respectively.
\end{lemma}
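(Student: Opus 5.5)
The plan is to differentiate the KL loss directly under the integral sign and then rewrite the result as a mixture over teacher components.

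Since $p^{\star}$ does not depend on $\theta$, we have
\[
	\loss(\theta) = \E_{X \sim p^{\star}}[\log p^{\star}(X)] - \E_{X \sim p^{\star}}[\log p_{\mu,\pi}(X)].
\]
Only the second term depends on $\mu_i$, so
\[
	\grad_{\mu_i}\loss(\theta) = -\grad_{\mu_i} \E_{X\sim p^{\star}}[\log p_{\mu,\pi}(X)].
\]

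The first step is to justify exchanging the gradient and the expectation. I would use dominated convergence. For $\mu$ in a compact set, the pointwise gradient computed below is bounded in norm by $\norm{\mu_i} + \norm{X}$, because $\psi_i \in [0,1]$. This bound is integrable under the Gaussian mixture $p^{\star}$, which has finite first moments. This justification is the only point needing care, and it is routine.

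The second step is the pointwise computation. We have
\[
	\grad_{\mu_i}\log p_{\mu,\pi}(x) = \frac{\pi_i \grad_{\mu_i}\phi(x\mid\mu_i)}{p_{\mu,\pi}(x)}.
\]
Moreover, $\grad_{\mu_i}\phi(x\mid\mu_i) = \phi(x\mid\mu_i)(x-\mu_i)$. Combining the two gives
\[
	\grad_{\mu_i}\log p_{\mu,\pi}(x) = \psi_i(x;\theta)(x-\mu_i)
\]
by the definition~\eqref{eq:responsibilities}. Negating this and taking the expectation over $X \sim p^{\star}$ yields the first equality in~\eqref{eq:mu-gradient}.

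The second equality follows by writing $p^{\star} = \sum_{j} \pi_j^{\star}\phi(\cdot\mid\mu_j^{\star})$ (Assumption~\ref{assm:teacher-mixture}). An expectation under this mixture is the $\pi^{\star}$-weighted sum of the expectations under each component $\mathcal{N}(\mu_j^{\star},\id_d)$, by linearity of the integral.
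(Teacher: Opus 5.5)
Your proof is correct. You differentiate $-\E_{X\sim p^{\star}}[\log p_{\mu,\pi}(X)]$ under the integral sign, which is justified by the dominating function $\sup_{\mu\in K}\norm{\mu_i}+\norm{X}$ (integrable under the Gaussian mixture $p^{\star}$, with $p_{\mu,\pi}>0$ everywhere because some $\pi_k>0$), and then split $p^{\star}$ into its components by linearity; this is the standard argument. The paper does not prove this lemma itself but imports it from \citet[Lemma A.2]{zhou2025global}, and your pointwise computation matches the quotient-rule derivative calculations the paper carries out in the proof of \cref{prop:hessian-block-identified}.
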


\begin{theorem}[Identifiability; {\citet[Theorem~B.1]{zhou2025global}}]
	\label{thm:mo-identifiability-input}
	Suppose~\cref{assm:teacher-mixture} is in force. There exist constants \(\varepsilon_0,C>0\) such that the following holds. Let \(\theta=(\mu,\pi)\), and suppose there is a partition
	\[
		[n]=\bigsqcup_{\ell=1}^m S_\ell
	\]
	such that every student in \(S_\ell\) lies in the local neighborhood of the \(\ell\)-th teacher mean. If
	\[
		\loss(\theta)\le \varepsilon \le \varepsilon_0,
	\]
	then for every \(\ell\in[m]\):
	\begin{subequations}
		\begin{align}
			\sum_{i\in S_\ell}\pi_i\|\mu_i-\mu_\ell^\star\|_2^2                      & \le C\sqrt{\loss(\theta)}, \label{eq:id-mean-quartic-growth}      \\
			\abs[\Big]{\sum_{i \in S_{\ell}} \pi_{i} -\pi_\ell^\star}                & \le C\sqrt{\loss(\theta)}, \label{eq:id-group-weight-quad-growth} \\
			\opnorm[\Big]{\sum_{i\in S_\ell}\pi_i\mu_i-\pi_\ell^\star\mu_\ell^\star} & \le C\sqrt{\loss(\theta)}. \label{eq:id-avg-mean-quad-growth}
		\end{align}
	\end{subequations}
	Moreover, letting $\delta := C \varepsilon^{1/4}$, we have the following inequality:
	\[
		\sum_{i\in S_\ell^{\mathrm{close}}(\delta)} \pi_i \ge \frac{1}{2}\pi_\ell^\star,
		\quad \text{for} \quad
		S_\ell^{\mathrm{close}}(\delta)
		:=
		\{i\in S_\ell:\|\mu_i-\mu_\ell^\star\|_2\le \delta\}.
	\]
\end{theorem}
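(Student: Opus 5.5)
The plan is to reduce all three estimates to comparisons of localized moments of $p^{\star}$ and $p_{\mu,\pi}$, and to control those comparisons by $\sqrt{\loss(\theta)}$. By Pinsker's inequality, $\norm{p^{\star} - p_{\mu,\pi}}_{\mathrm{TV}} \le \sqrt{\loss(\theta)/2}$. Hence for any bounded test function $f$,
\[
\abs[\big]{\E_{p^{\star}} f - \E_{p_{\mu,\pi}} f} \le 2\norm{f}_{\infty}\sqrt{\loss(\theta)}.
\]
For each $\ell$, I would use test functions localized near $\mu_\ell^{\star}$. One choice is $f = g \cdot w_\ell$, where $w_\ell(x) = \exp(-\norm{x-\mu_\ell^{\star}}^2/4)$ (or a smooth cutoff supported on the ball of radius $\Delta/2$). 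The factor $g$ ranges over $1$, the coordinates of $x - \mu_\ell^{\star}$, and $\norm{x-\mu_\ell^{\star}}^2$. Each such $f$ is bounded by an absolute constant. Its expectation under a single Gaussian $\mathcal{N}(\nu, \id_d)$ is an explicit function of $\nu - \mu_\ell^{\star}$. For $\nu$ near $\mu_\ell^{\star}$ it admits the expansions $c_0 + O(\norm{\nu-\mu_\ell^\star}^2)$, $c_1(\nu - \mu_\ell^{\star}) + \dots$, and $c_2 d + c_2'\norm{\nu-\mu_\ell^{\star}}^2 + \dots$. For $\nu$ near a different teacher $\mu_k^{\star}$ it is of order $e^{-c\Delta^2}$ by~\ref{assm:item:separation}.

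The key steps, in order, are as follows. First, apply the TV bound to the zeroth-order test functions for all $\ell$ simultaneously. This gives a linear system in the unknown differences $\sum_{i\in S_k}\pi_i - \pi_k^{\star}$ (up to quadratic corrections in the in-cluster spread). The system matrix is a multiple of the identity plus off-diagonal entries of size $e^{-c\Delta^2}$ coming from cross-cluster contamination. Separation makes it well conditioned, so inverting it yields~\eqref{eq:id-group-weight-quad-growth}. I would treat these corrections jointly with the second-moment step below, since they are controlled by the dispersion quantity.

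Second, apply the trace second-moment test function. The student side contributes $\sum_{i\in S_\ell}\pi_i(c_2 d + c_2'\norm{\mu_i - \mu_\ell^{\star}}^2)$ plus higher-order and cross-cluster terms, while the teacher side contributes $\pi_\ell^{\star} c_2 d$. Combining this with a suitable linear combination of the zeroth-order identity cancels the weight term. Since $c_2' > 0$ after this combination, the result is~\eqref{eq:id-mean-quartic-growth}. Third, the first-moment test functions give $\norm{\sum_{i\in S_\ell}\pi_i(\mu_i-\mu_\ell^\star)} \lesssim \sqrt{\loss}$. Together with the weight bound and $\norm{\mu_\ell^\star}\le D_{\max}$, this yields~\eqref{eq:id-avg-mean-quad-growth}. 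Finally, the close-set claim follows from Markov's inequality applied to~\eqref{eq:id-mean-quartic-growth}:
\[
\sum_{i\in S_\ell\setminus S_\ell^{\mathrm{close}}(\delta)}\pi_i \le \frac{C\sqrt{\varepsilon}}{\delta^2} = \frac{1}{C}.
\]
Choosing the constant in $\delta$ large and $\varepsilon_0$ small, this is at most $\pi_\ell^{\star}/4$. The weight bound gives $\sum_{i \in S_\ell}\pi_i \ge \tfrac34\pi_\ell^{\star}$, so the remaining close weight is at least $\pi_\ell^{\star}/2$.

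The main obstacle is that the students in $S_\ell$ are only assumed to lie in a ``local neighborhood'', not within $o(1)$ of $\mu_\ell^{\star}$. Consequently, the Taylor remainders in the moment expansions are not a priori small, and the cross-cluster contamination terms are small constants but do not vanish with $\loss$. Both must be absorbed as multiplicative perturbations of the main linear system rather than as additive errors. I would handle this by using the exact Gaussian-integral formulas for the Gaussian-weighted test functions instead of Taylor expansions. With these formulas, each student's contribution is a strictly monotone function of $\norm{\mu_i - \mu_\ell^{\star}}^2$ that is comparable to it on the neighborhood. The off-diagonal contamination is then bounded by $e^{-c\Delta^2}$ times the same quantities for the other clusters. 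The large constant $C$ in~\ref{assm:item:separation} makes the resulting system diagonally dominant, and this is exactly where that constant is consumed.
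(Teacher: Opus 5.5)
The paper contains no proof of this statement. It is imported from \citet[Theorem~B.1]{zhou2025global} and used as a black box, so there is no in-paper argument to compare against. On its own terms your route is viable, and you correctly identified the key structural move. That route is Pinsker plus Gaussian-localized moment test functions, with cross-cluster terms paired (student cluster $S_k$ against teacher $k$) and Taylor-expanded around $\mu_k^\star$. This makes them enter as exponentially small multiples of the unknowns $a_k=\sum_{i\in S_k}\pi_i-\pi_k^\star$, $b_k=\sum_{i\in S_k}\pi_i(\mu_i-\mu_k^\star)$ and $D_k=\sum_{i\in S_k}\pi_i\|\mu_i-\mu_k^\star\|^2$.

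The rates match, because all three quantities are linear in a total-variation discrepancy, which is $O(\sqrt{\loss})$. With your width, the exact formulas are as follows, with $s=\|\nu-\mu_\ell^\star\|^2$:
\begin{itemize}
\item $\E_{\mathcal N(\nu,I_d)}[w_\ell]=(2/3)^{d/2}e^{-s/6}$;
\item for the combined test function $(\|x-\mu_\ell^\star\|^2-\tfrac{2d}{3})w_\ell(x)$, the expectation is $(2/3)^{d/2}\tfrac49\, s\, e^{-s/6}$.
\end{itemize}
The second vanishes at the teacher and is nonnegative, as you need. It is not monotone past $s=6$, but only its comparability with $s$ on a bounded neighborhood matters. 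The Markov step for the close set is fine once you note that enlarging $C$ preserves the three bounds, and then shrink $\varepsilon_0$.

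What your sketch leaves implicit, and what is load-bearing, is the radius $R$ of the ``local neighborhood'', meaning $\|\mu_i-\mu_k^\star\|\le R$ for $i\in S_k$.

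\textbf{$R$ must be bounded.} Otherwise \eqref{eq:id-mean-quartic-growth} fails: a student of weight $w\le\varepsilon_0$ far from every teacher can cost as little as $\loss\le-\log(1-w)$. The bound would then force $C\gtrsim r^2\sqrt{w}$ at distance $r$.

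\textbf{Diagonal dominance needs a quantitative condition on $R$.} Your claim that the constant in \ref{assm:item:separation} makes the system diagonally dominant is not automatic. In the combined second-moment identity:
\begin{itemize}
\item the diagonal coefficient of $D_\ell$ is only $\gtrsim e^{-R^2/6}$;
\item the Taylor remainder of the cross term from $S_k$ is governed by $\sup_{B(\mu_k^\star,R)}\|\nabla^2 h_\ell\|\lesssim \mathrm{poly}(\Delta)\,e^{-(\Delta-R)^2/6}$, where $h_\ell(\nu)\propto s\,e^{-s/6}$.
\end{itemize}
Their ratio is $\mathrm{poly}(\Delta)\,e^{-\Delta(\Delta-2R)/6}$, and it must beat a factor of $m$. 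So you need $R$ a definite margin below $\Delta/2$. For example, with $R\le\Delta/4$ the logarithmic term in \ref{assm:item:separation} does suffice.

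If the neighborhoods were full Voronoi cells, as in \eqref{eq:nearest-mean-assignment}, this breaks. A student at a midpoint contributes identically to the identities of two clusters, and no separation constant restores dominance. You would then need an extra test function supported away from all teachers, to control the weight of such students separately.

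State the radius hypothesis explicitly and your argument closes. The resulting constants are exponential in $d$ and $R^2$, which is acceptable for the statement as written.
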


\begin{corollary}[{\citet[Corollary~4.6]{davis2024gradient}}]
	\label{cor:ddj-morse-ravine-input}
	Let \(f\) be \(C^5\) near a minimizer \(\bar x\), and let \(\mathcal{S}^{\star}\) be the local solution set of \(f\) near \(\bar x\). Suppose that the following hold:
	\begin{itemize}
		\item the Hessian \(\nabla^2 f\) has constant rank on \(\mathcal{S}^{\star}\) near \(\bar x\);
		\item there exists \(D_{\mathsf{lb}}>0\) such that, for all $x$ near $\bar{x}$, it holds that

		      \begin{equation}
			      D_{\mathsf{lb}}\, \dist(x,\mathcal{S}^{\star})^4 \le f(x) - \min f.
			      \label{eq:ddj-4th-order-growth}
		      \end{equation}
	\end{itemize}
	Then \(f\) admits a local \(C^5\) Morse ravine near \(\bar x\), and satisfies~\cite[Assumption A]{davis2024gradient} there.
\end{corollary}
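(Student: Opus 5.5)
Since this corollary is quoted from~\cite{davis2024gradient}, the plan below is a reconstruction of its argument. The strategy is to build the ravine by a Lyapunov--Schmidt style reduction. Then I would check that the two hypotheses, constant Hessian rank and quartic growth~\eqref{eq:ddj-4th-order-growth}, are exactly what is needed to verify~\cite[Assumption A]{davis2024gradient}.

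\textbf{Step 1: construct the ravine.} Write $H := \nabla^2 f(\bar x)$, and let $\mathcal{U} := \ker H$ and $\mathcal{V} := \operatorname{range} H$. Decompose $x = \bar x + u + v$ with $u \in \mathcal{U}$ and $v \in \mathcal{V}$. Consider the map $G(u,v) := P_{\mathcal{V}} \nabla f(\bar x + u + v)$. Its partial Jacobian in $v$ at $(0,0)$ is $H|_{\mathcal{V}}$, which is invertible. The implicit function theorem then gives a unique smooth map $v(u)$ near $0$ with $G(u, v(u)) = 0$. I would define
\[
\mathcal{M} := \set{\bar x + u + v(u) : u \in \mathcal{U} \text{ near } 0}.
\]
Two properties follow directly:
\begin{enumerate}[(i)]
\item $\mathcal{M}$ inherits the smoothness of $\nabla f$, and, with the precise bookkeeping of the cited work, the stated regularity.
\item $\mathcal{M}$ is tangent to $\mathcal{U}$ at $\bar x$.
\end{enumerate}
Every point of $\mathcal{S}^{\star}$ near $\bar x$ has $\nabla f = 0$, so it solves $G = 0$. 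By uniqueness in the implicit function theorem, $\mathcal{S}^{\star} \subset \mathcal{M}$ locally. The constant rank hypothesis ensures that $\mathcal{S}^{\star}$ is itself a manifold near $\bar x$. It also ensures that the Hessian kernel at nearby solutions varies smoothly and stays close to $\mathcal{U}$. This keeps the construction valid uniformly along $\mathcal{S}^{\star}$, rather than only at $\bar x$.

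\textbf{Step 2: quantitative decomposition of $f$.} I would establish three estimates near $\bar x$.
\begin{enumerate}[(a)]
\item \emph{Normal decomposition.} Taylor-expand $f$ in the normal direction around $y = P_{\mathcal{M}}(x)$. Since $P_{\mathcal{V}}\nabla f(y) = 0$ and the normal Hessian block is close to $H|_{\mathcal{V}} \succ 0$, this should give
\[
f(x) - f(y) \asymp \dist_{\mathcal{M}}(x)^2,
\]
up to cross terms of size $\norm{\nabla f(y)}\,\dist_{\mathcal{M}}(x)$.
\item \emph{Quartic growth along $\mathcal{M}$.} The bound $f(y) - \min f \ge D_{\mathsf{lb}}\dist(y, \mathcal{S}^{\star})^4$ on $\mathcal{M}$ is inherited directly from~\eqref{eq:ddj-4th-order-growth}. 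A matching upper bound $O(\dist(y,\mathcal{S}^{\star})^4)$ should follow from Taylor's theorem once the tangential first, second and third derivatives are shown to vanish suitably on $\mathcal{S}^{\star}$.
\item \emph{Gradient lower bound on $\mathcal{M}$.} Here $\nabla f(y)$ lies approximately in $\mathcal{U}$. I would aim for $\norm{\nabla f(y)} \gtrsim (f(y) - \min f)^{3/4}$, which is what the Polyak step needs. It is also what yields the $y \mapsto \tfrac34 y$-type contraction of the model problem $x^2 + y^4$ from the warm-up example.
\end{enumerate}

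\textbf{Main obstacle.} I expect the hardest part to be item (c): the tangential gradient bound together with the cubic-term cancellation. The reduced function $g(u) := f(\bar x + u + v(u))$ has vanishing Hessian at $\bar x$ in the directions transverse to $\mathcal{S}^{\star}$, but its third-order term need not vanish a priori. My first attempt would use the growth condition itself. Quartic growth forces the cubic term of $g$ transverse to $\mathcal{S}^{\star}$ to be zero, since otherwise $g$ would take values below $\min f$ along one of the two opposite directions. With the cubic term gone, $g$ behaves like a nonnegative quartic form plus $O(\norm{u}^5)$, and this is where $C^5$ regularity enters. A \L{}ojasiewicz-type argument for homogeneous quartics that are bounded below by $D_{\mathsf{lb}}\norm{u}^4$ then gives $\norm{\nabla g} \gtrsim \norm{u}^3$.

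Finally, I would assemble (a)--(c) into the list of conditions in~\cite[Assumption A]{davis2024gradient}. That last step should be routine.
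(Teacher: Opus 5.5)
The paper gives no argument for this statement. It imports the corollary from \citet{davis2024gradient} and invokes it only in Corollary~\ref{cor:exact-fixed-weight-ravine}, at an isolated minimizer $\mathcal{S}^{\star}=\{\mu^{\star}\}$, where constant rank is automatic. In that isolated case your Lyapunov--Schmidt route works:
\begin{itemize}
\item on your $\mathcal{M}$ one has $\nabla g(u)=\nabla f(\bar x+u+v(u))$, because $\nabla f\in\mathcal{U}$ there and $Dv(u)$ maps into $\mathcal{V}$;
\item $\nabla^2 g(0)=0$, and the cubic term vanishes by minimality;
\item the quartic form $Q$ of $g$ satisfies $Q(w)\ge D_{\mathsf{lb}}\norm{w}^4$, since $\norm{u+v(u)}\ge\norm{u}$;
\item Euler's identity $\ip{\nabla Q(u),u}=4Q(u)$ then gives $\norm{\nabla g(u)}\gtrsim\norm{u}^3$.
\end{itemize}

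The gap is the general case, which is what the constant-rank hypothesis is about. There your step~(c) is false as written. First, $\nabla g$ vanishes on all of $\mathcal{S}^{\star}$, so $\norm{\nabla g(u)}\gtrsim\norm{u}^3$ cannot hold. Second, the quartic form at $\bar x$ degenerates along $\mathcal{S}^{\star}$. Third, an $O(\norm{u}^5)$ remainder taken at $\bar x$ is not dominated by $\dist(y,\mathcal{S}^{\star})^4$ for points $y\in\mathcal{M}$ close to $\mathcal{S}^{\star}$ but comparatively far from $\bar x$. The same objection applies to the upper bound in~(b).

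What is missing is a re-centering at a nearest solution, uniformly along $\mathcal{S}^{\star}$. This is where constant rank actually enters. It is not used to make $\mathcal{S}^{\star}$ a manifold; you assert that without proof, and it is not needed.

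\begin{itemize}
\item Since $\mathcal{M}$ is the zero set of $P_{\mathcal{V}}\nabla f$, for $s\in\mathcal{S}^{\star}$ near $\bar x$ you have $T_s\mathcal{M}=\ker(P_{\mathcal{V}}\nabla^2 f(s))\supseteq\ker\nabla^2 f(s)$.
\item These two spaces have equal dimension exactly when $\operatorname{rank}\nabla^2 f(s)=\operatorname{rank}H$. So constant rank gives $T_s\mathcal{M}=\ker\nabla^2 f(s)$: $f|_{\mathcal{M}}$ has vanishing Hessian at every nearby solution, and its cubic term there vanishes by minimality.
\item For $y\in\mathcal{M}$, pick a nearest $s\in\mathcal{S}^{\star}$ and write $y=s+t+\varphi_s(t)$ with $t\in T_s\mathcal{M}$ and $\norm{t}\asymp\dist(y,\mathcal{S}^{\star})$.
\item Apply the growth bound at this particular $t$, together with Euler's identity for the quartic form at $s$, with remainders uniform in $s$.
\end{itemize}
This yields $f(y)-\min f\asymp\dist(y,\mathcal{S}^{\star})^4$ and $\norm{P_{T_y\mathcal{M}}\nabla f(y)}\asymp\dist(y,\mathcal{S}^{\star})^3$, with no structure assumed on $\mathcal{S}^{\star}$.

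Without constant rank the construction breaks. For $f(x_1,x_2)=x_1^2x_2^2+x_2^4$ at the origin, fourth-order growth holds but $H=0$. Your $\mathcal{M}$ is then the whole plane, and $f$ is not $O(\dist(\cdot,\mathcal{S}^{\star})^4)$ on it.

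Two smaller points.
\begin{itemize}
\item The implicit function theorem applied to the $C^4$ map $P_{\mathcal{V}}\nabla f$ gives a $C^4$ manifold. The stated $C^5$ regularity therefore does not come out of your construction. A $C^3$ manifold, i.e.\ a $C^2$ projection, is what the results quoted later in the paper require.
\item The final verification of \cite[Assumption A]{davis2024gradient} is deferred rather than carried out.
\end{itemize}
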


\begin{theorem}[{\normalfont\citep[Theorem~5.1]{davis2024gradient}}]
	\label{thm:ddj-local-convergence-input}
	Let \(f\) be \(C^2\) near a minimizer \(\bar x\),
	and suppose that Assumption~A of \citet{davis2024gradient} holds at \(\bar x\) with respect to a ravine \(\mathcal{M}\), with $C^{2}$ projection \(P_{\mathcal{M}}\) near \(\bar x\). Then there exist constants \(\delta_0,\eta_0,c,C>0\) such that for every initial point \(x_0\in B_{\delta_0}(\bar x)\), every stepsize \(\eta\in(0,\eta_0)\), and every pair of iteration counts \(K,I\in\mathbb N\), the output \(x_{\mathrm{out}}\) of the GD-Polyak method of \citet{davis2024gradient} satisfies
	\[
		f(x_{\mathrm{out}})-f^\star \le C e^{-c\eta \cdot \min\{K,I\}}.
	\]
	Moreover, the total number of gradient and function evaluations is at most \(I(K+1)\).
\end{theorem}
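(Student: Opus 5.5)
This result is quoted from \citet{davis2024gradient}, and we would follow their argument. The idea is to track two potentials along the \(\gdpolyak\) iterates. The first is the normal distance \(\dist_{\mathcal{M}}(x)\). The second is the tangential error \(r(x) := \norm{P_{\mathcal{M}}(x) - \bar x}\). We take from Assumption~A the three ravine properties. First, \(f\) restricted to \(\mathcal{M}\) grows like a quartic: \(f(y) - f^\star \asymp \norm{y - \bar x}^4\), and \(\grad f(y)\) is close to the gradient of a homogeneous quartic in the tangential directions. Second, in the normal directions \(f\) is locally strongly convex, with curvature bounded below by some \(\alpha > 0\). Third, \(P_{\mathcal{M}}\) is \(C^2\). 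With these, the proof reduces to a one-epoch contraction estimate followed by an induction that keeps the iterates inside \(B_{\delta_0}(\bar x)\).

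\textbf{Step 1: short gradient steps approach the ravine.} For \(\eta < \eta_0 \lesssim 1/\beta\), where \(\beta\) is a local Lipschitz constant of \(\grad f\), we expand \(\grad f(y)\) around \(P_{\mathcal{M}}(y)\). The normal component behaves like \(\grad^2 f \cdot (y - P_{\mathcal{M}}(y))\), and the tangential part has size \(O(r^3)\). Using the \(C^2\) regularity of \(P_{\mathcal{M}}\) (Fact~\ref{fact:local-C2-expansion}), this gives a one-step recursion
\[
	\dist_{\mathcal{M}}(y^{+}) \le (1 - c\eta)\,\dist_{\mathcal{M}}(y) + O\bigl(\eta\, r(y)^3\bigr),
	\qquad
	\abs{r(y^{+}) - r(y)} \le O\bigl(\eta\, r(y)^3 + \eta\, \dist_{\mathcal{M}}(y)^2\bigr).
\]
Iterating \(K\) times yields \(\dist_{\mathcal{M}} \lesssim e^{-c\eta K}\dist_{\mathcal{M}}(y^{(0)}) + r^3\). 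Meanwhile \(r\) drifts by at most a constant factor, since \(f\) is nonincreasing along gradient descent and, by quartic growth, \(f \gtrsim r^4\).

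\textbf{Step 2: the Polyak step contracts along the ravine.} Suppose \(x\) lies in the tube \(\dist_{\mathcal{M}}(x) \le \epsilon\, r(x)^3\), where \(\epsilon\) is small. Then \(f(x) - f^\star = f(P_{\mathcal{M}}x) - f^\star + O(\dist_{\mathcal{M}}^2 + r^3 \dist_{\mathcal{M}})\), and similarly \(\grad f(x) = \grad f(P_{\mathcal{M}} x) + O(\dist_{\mathcal{M}})\). For a homogeneous quartic \(q\), Euler's identity gives \(\frac{q}{\norm{\grad q}^2}\grad q\), whose component along \(y - \bar x\) is \(\frac{1}{4}(y - \bar x)\). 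In the model case this is exactly the \(y \mapsto 3y/4\) computation from the introduction. Combining the two, the step \(x^{+}\) satisfies \(r(x^{+}) \le (1-\gamma)\, r(x)\) for a fixed \(\gamma > 0\), provided \(\epsilon\) and \(\delta_0\) are small. We also need to record that \(\dist_{\mathcal{M}}(x^{+}) \lesssim r(x)\): the Polyak step may leave the tube, but only by an amount proportional to \(r\). Step 1 then repairs this within \(K \gtrsim \eta^{-1}\log(1/r)\) steps. More precisely, Step 1 shows the deficit shrinks by \(e^{-c\eta K}\) per epoch.

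\textbf{Step 3: combining the epochs.} We define the potential \(\Phi = r + \dist_{\mathcal{M}}^{1/3}\) (or \(\max\) of the two). Steps 1 and 2 show that each epoch multiplies \(\Phi\) by \(\max\set{1-\gamma, e^{-c\eta K/3}}\). This factor is at most \(e^{-c'\eta}\) per epoch, and more strongly it caps the achievable accuracy at \(e^{-c\eta K}\). Either way, after \(I\) epochs we obtain \(\Phi \lesssim e^{-c\eta\min\{K, I\}}\). Quartic growth on \(\mathcal{M}\) together with normal smoothness then gives \(f - f^\star \lesssim \Phi^4\). The evaluation count \(I(K+1)\) is immediate from the definition of an epoch.

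\textbf{Main obstacle.} The hardest part is Step 2 when \(x\) is \emph{not} yet in the tube, together with the invariance argument: we must rule out a Polyak step, taken with a large stepsize, throwing the iterate out of \(B_{\delta_0}(\bar x)\). Our first attempt would be to bound the Polyak stepsize by \(f/\norm{\grad f}^2 \lesssim r^{-2}\) using the gradient lower bound that follows from quartic growth together with the ravine's normal strong convexity. This gives \(\norm{x^{+} - x} \lesssim f/\norm{\grad f} \lesssim \Phi\). Then we would shrink \(\delta_0\) so that the geometric decay of \(\Phi\) keeps every iterate inside the ball.
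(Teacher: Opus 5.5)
The paper does not prove this statement. It is quoted from \citet[Theorem~5.1]{davis2024gradient} and used as a black box. The paper's own local analysis (Section~\ref{sec:local-scheduled-acceleration}) instead rebuilds a perturbed version from finer ingredients: Theorem~\ref{thm:ddj-ravine-step-input} and Corollary~\ref{cor:ravine-short-step-input}. Your outline has the right architecture, but the argument has genuine gaps.

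\textbf{Tangential drift during the short steps.} You control $r$ along gradient descent using monotonicity of $f$ together with $f-f^\star\gtrsim r^4$. This breaks right after a Polyak step.
In the tube the normal gradient can have size $\epsilon r^3$, while the Polyak stepsize is of order $r^{-2}$. So $\dist_{\mathcal{M}}(x^+)$ can genuinely be of order $\epsilon r$. Normal quadratic growth then makes $f(x^+)-f^\star$ of order $\epsilon^2 r^2\gg r^4$, and monotonicity only yields $r\lesssim(\epsilon r)^{1/2}$ at the end of the epoch.
Moreover, even the constant-factor drift you claim would cancel the $(1-\gamma)$ gain, and your Step~3 tacitly assumes no drift at all. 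What is needed is an additive drift of at most $\varepsilon' r$ with $\varepsilon'$ small. The mechanism is as follows. The projected iterate moves by the tangential descent step $-\eta\grad f_T(P_{\mathcal{M}}x)$, which does not increase the distance to $\mathcal{S}^\star$, plus an error $\eta\,o(1)\dist_{\mathcal{M}}(x)$ (up to higher-order terms). Summing these errors against the geometric decay of $\dist_{\mathcal{M}}$ from Step~1 gives total drift $\lesssim o(1)\dist_{\mathcal{M}}(x^+)\lesssim o(1)\,r$. This is exactly the argument of Claim~\ref{claim:short-gd-projected-drift}; see \eqref{eq:short-gd-progress-bound} and \eqref{eq:manifold-distance-geometric-series}.
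Your two-sided per-step bound $\abs{r(y^+)-r(y)}\lesssim\eta r^3+\eta\dist_{\mathcal{M}}^2$ cannot substitute for this. Summed over $K$ steps it gives $K\eta r^3$, which is unbounded because $K$ is arbitrary, so the sign of the tangential term must be used.

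\textbf{Tube re-entry and epoch bookkeeping.} Your normal recursion has forcing term $O(\eta r^3)$ with a fixed constant $C$, so its fixed point is $\dist_{\mathcal{M}}\approx(C/c)r^3$. Step~2, however, needs $\dist_{\mathcal{M}}\le\epsilon r^3$ with $\epsilon$ small (the $\tfrac{1}{100}$ condition in Theorem~\ref{thm:ddj-ravine-step-input}(ii)). So nothing guarantees that the short steps ever re-enter the tube.
The missing fact is that the forcing term is $\rho\eta r^3$ with $\rho$ arbitrarily small after shrinking the neighbourhood (Corollary~\ref{cor:ravine-short-step-input}). The reason is that the $r^3$-size tangential part of $\grad f$ moves the iterate along $\mathcal{M}$. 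It changes $\dist_{\mathcal{M}}$ only through the curvature of $\mathcal{M}$ and the misalignment between its normal space and the stiff Hessian eigenspace, both of which are $o(1)$.

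Also, $\Phi=r+\dist_{\mathcal{M}}^{1/3}$ is not contracted per epoch. The Polyak step raises $\dist_{\mathcal{M}}^{1/3}$ from at most $\epsilon^{1/3}r$ to order $(\epsilon r)^{1/3}\gg r$. A workable bookkeeping tracks $r$ at Polyak times:
\begin{itemize}
\item the short steps are guaranteed to reach the tube once $e^{-c\eta K}r\lesssim r^3$, i.e.\ once $r\gtrsim e^{-c\eta K/2}$;
\item above this floor, $r$ contracts by a factor $q+\varepsilon'$ per epoch;
\item below it, Polyak steps may be taken outside the tube, and one must show separately that they keep $r$ (hence $f-f^\star$) at the floor level.
\end{itemize}
Your step-length bound $f/\norm{\grad f}\lesssim r+\dist_{\mathcal{M}}$ is the right tool in that last regime, but ``geometric decay of $\Phi$'' is not available there.

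Finally, in the setting of \citet{davis2024gradient} the solution set need not be a singleton and the growth order $p$ is general (cf.\ Theorem~\ref{thm:ddj-ravine-step-input}). So $r$ should be $\dist(P_{\mathcal{M}}x,\mathcal{S}^\star)$. For this paper's use, with $\mathcal{S}^\star=\{\mu^\star\}$ and $p=4$, your specialization is harmless.
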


\begin{lemma}[Local Lipschitzness of logarithm]
	\label{lemma:log-lipschitz}
	Suppose that $x, y \geq \zeta > 0$. Then we have that
	\begin{align*}
		\abs{\log(x) - \log(y)} \leq \frac{\abs{x - y}}{\zeta}.
	\end{align*}
\end{lemma}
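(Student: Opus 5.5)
This follows from the mean value theorem applied to $t\mapsto\log t$ on the interval between $x$ and $y$. Assume without loss of generality that $x \neq y$; otherwise both sides vanish. Since $\log$ is continuously differentiable on $(0,\infty)$ and both $x, y$ lie in $[\zeta,\infty)$, the closed segment between them lies in $[\zeta,\infty)$.

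The mean value theorem gives a point $\xi$ strictly between $x$ and $y$ with
\[
	\log(x) - \log(y) = \frac{1}{\xi}\,(x - y).
\]
Because $\xi \geq \min\{x,y\} \geq \zeta > 0$, we have $1/\xi \leq 1/\zeta$. Taking absolute values then yields
\[
	\abs{\log(x) - \log(y)} = \frac{\abs{x-y}}{\xi} \leq \frac{\abs{x - y}}{\zeta}.
\]

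An alternative route is the integral representation $\abs{\log x - \log y} = \abs{\int_y^x t^{-1}\,dt}$, bounding the integrand by $1/\zeta$ on the integration range. There is no real obstacle here. The only point requiring care is that the lower bound $\zeta$ must hold on the whole segment, which is automatic since the segment lies between two numbers that are each at least $\zeta$.
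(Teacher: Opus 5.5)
Your proof is correct and follows the paper's argument: both apply the mean value theorem to $t \mapsto \log t$ between $x$ and $y$, then bound the derivative $1/\xi$ by $1/\zeta$ using $\xi \geq \min\{x,y\} \geq \zeta$. The only difference is cosmetic: you take absolute values directly, while the paper assumes $x \geq y$ and then swaps the roles of $x$ and $y$.
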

\begin{proof}
	Let $h(t)=\log t$. Fix $x \geq y$ without loss of generality. By the mean-value theorem, there is $t \in (y, x)$ such that

	\[
		h(x) - h(y) = h'(t) \cdot (x - y) = \frac{x - y}{t} \leq \frac{x - y}{\zeta}
	\]

	Repeating the above with the role of $x$ and $y$ reversed completes the proof.
\end{proof}

\begin{lemma}[Spectral norm of block matrix] \label{lemma:block-matrix-spectral-norm}
	Suppose that $A \in \mathbf{S}^{dn \times dn}$ is partitioned as
	\[
		A = \begin{bmatrix}
			A_{11}      & A_{12}      & \dots & A_{1n} \\
			A_{12}^{\T} & A_{22}      & \dots & A_{2n} \\
			\vdots      &             &       & \vdots \\
			A_{1n}^{\T} & A_{2n}^{\T} & \dots & A_{nn}
		\end{bmatrix}, \;\;
		\text{where} \;\; A_{ij} \in \mathbb{R}^{d \times d}.
	\]
	Then it holds that $\opnorm{A} \leq n \cdot \max_{i, j} \opnorm{A_{ij}}$.
\end{lemma}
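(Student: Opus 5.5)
We bound the operator norm through its variational definition and reduce everything to a scalar quadratic form in the block norms. Fix a unit vector $x \in \mathbb{S}^{dn-1}$ and partition it conformally as $x = (x_1^{\T}, \dots, x_n^{\T})^{\T}$ with $x_i \in \R^d$. The $i$-th block of $Ax$ is $(Ax)_i = \sum_{j=1}^n A_{ij} x_j$, where $A_{ji} = A_{ij}^{\T}$ for $j > i$ by symmetry. Transposition preserves the operator norm, so every block appearing in the sum has operator norm at most $M := \max_{i,j} \opnorm{A_{ij}}$.

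\textbf{Step 1.} Bound each output block by the triangle inequality. This gives
\[
	\norm{(Ax)_i} \leq \sum_{j=1}^n \opnorm{A_{ij}} \norm{x_j} \leq M \sum_{j=1}^n \norm{x_j} = M \ip{\bm{1}_n, a},
\]
where $a := (\norm{x_1}, \dots, \norm{x_n})^{\T} \in \R^n$. Note that $\norm{a} = \norm{x} = 1$.

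\textbf{Step 2.} Apply Cauchy--Schwarz in $\R^n$ to get $\ip{\bm{1}_n, a} \leq \sqrt{n}\, \norm{a} = \sqrt{n}$, so $\norm{(Ax)_i} \leq M\sqrt{n}$ for every $i$.

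\textbf{Step 3.} Sum over blocks:
\[
	\norm{Ax}^2 = \sum_{i=1}^n \norm{(Ax)_i}^2 \leq n \cdot M^2 n = n^2 M^2 .
\]
Taking the supremum over unit $x$ yields $\opnorm{A} \leq nM$, which is the claim.

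There is no real obstacle here. The only point needing care is that the blocks below the diagonal are transposes of those above, and this is harmless because $\opnorm{B^{\T}} = \opnorm{B}$. One could sharpen the constant by bounding $\opnorm{A}$ with the spectral norm of the $n \times n$ matrix $(\opnorm{A_{ij}})_{ij}$, but the crude entrywise bound $M$ already gives the stated factor $n$.
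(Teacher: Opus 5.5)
Your proof is correct and is essentially the paper's argument: both bound each block row of the product by the triangle inequality, apply Cauchy--Schwarz to get a factor $\sqrt{n}$ per row, and sum the $n$ squared row bounds. The only difference is that the paper starts from $\opnorm{A} = \sup_{\norm{u}=1}\abs{\ip{u,Au}}$, which uses symmetry, and then immediately passes to $\norm{Au}$ by Cauchy--Schwarz; you bound $\norm{Ax}$ directly, which skips that detour and shows symmetry is not needed.
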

\begin{proof}
	The spectral norm of $A$ is given by $\opnorm{A} = \sup_{u: \norm{u} = 1} \abs{\ip{u, Au}}$. For any such $u$,
	\begin{align*}
		\abs{\ip{u, Au}} & = \abs[\Big]{\sum_{i = 1}^{n} \ip{u_{i}, \sum_{j = 1}^{n} A_{ij} u_{j}}}             \\
		                 & \leq \sum_{i = 1}^{n} \norm{u_{i}} \norm[\Big]{\sum_{j = 1}^{n} A_{ij} u_{j}}        \\
		                 & \leq \sqrt{\sum_{i = 1}^{n} \norm[\Big]{\sum_{j = 1}^{n} A_{ij} u_j}^2}              \\
		                 & \leq \sqrt{\sum_{i = 1}^{n} \Big(\sum_{j = 1}^{n} \opnorm{A_{ij}} \norm{u_j}\Big)^2} \\
		                 & \leq \sqrt{\sum_{i = 1}^{n} n \cdot \max_{j} \opnorm{A_{ij}}^2}                      \\
		                 & \leq n \cdot \max_{i, j} \opnorm{A_{ij}},
	\end{align*}
	by repeatedly applying the Cauchy-Schwarz and H{\"o}lder inequalities.
\end{proof}

\begin{theorem}[{\citep[Corollary~5.4 and Lemma~5.9]{davis2024gradient}}]
	\label{thm:ddj-ravine-step-input}
	Let \(f\) be \(C^2\) near a minimizer \(\bar x\), let \(\mathcal{S}^{\star}\) be the set of minimizers of \(f\), and suppose that
	Assumption~A of \citet{davis2024gradient} holds at \(\bar x\) with respect to a ravine \( \mathcal{M} \), with \(C^{2}\) projection \( P_{\mathcal{M}} \)
	near \(\bar x\). Let
	\[
		f(x) - f^\star = \underbrace{f(P_{\mathcal{M}}(x))}_{f_{T}(x)} + \underbrace{f(x) - f(P_{\mathcal{M}}(x))}_{f_{N}(x)}
	\]
	denote the corresponding normal/tangent decomposition, and let \(p\) be the order appearing in
	Assumption~A. Then there exist a neighborhood \(U\) of \(\bar x\) and constants
	\[
		D_{\mathrm{lb}},D_{\mathrm{ub}},\beta_{\mathrm{lb}},\beta_{\mathrm{ub}}>0,
		\qquad
		q_{\mathrm P}\in(0,1),
		\qquad
		C_{\mathrm P}>0,
	\]
	such that the following hold:
	\begin{enumerate}[(i)]
		\item \label{item:tangent-part-p-growth} For every \(y\in U\cap \mathcal{M}\), the following holds:
		      \begin{subequations}
			      \begin{align}
				      D_{\mathrm{lb}} \dist(y,\mathcal{S}^{\star})^p        & \le f_T(y)-f^\star \le D_{\mathrm{ub}} \dist(y,\mathcal{S}^{\star})^p,                \\
				      \beta_{\mathrm{lb}}\dist(y,\mathcal{S}^{\star})^{p-1} & \leq \norm{\grad f_T(y)} \leq \beta_{\mathrm{ub}} \dist(y,\mathcal{S}^{\star})^{p-1}.
			      \end{align}
		      \end{subequations}
		\item \label{item:small-tangent-gradient-implies-descent} For every \(x\in U\), setting \( y := P_{\mathcal{M}}x \), let
		      \[
			      x^+ := x- \frac{f(x)-f^\star}{\norm{\nabla f(x)}^2} \nabla f(x),
			      \quad
			      y^+ := P_{\mathcal{M}} x^+,
		      \]

		      Then if $\norm{\grad f_{N}(x)} \leq \tfrac{1}{100} \norm{\grad f_{T}(y)}$, the following holds:

		      \begin{equation*}
			      \dist(y^+,\mathcal{S}^{\star}) \leq q_{\mathrm{P}} \dist(y,\mathcal{S}^{\star}),
			      \quad \text{and} \quad
			      \dist(x^+,\mathcal{M}) \leq C_{\mathrm{P}} \dist(y,\mathcal{S}^{\star}).
		      \end{equation*}
	\end{enumerate}
\end{theorem}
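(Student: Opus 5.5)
Since this statement is imported from \citet{davis2024gradient}, the plan is to re-derive it from the ingredients of their Assumption~A. We use those ingredients in the following form.
\begin{enumerate}[(a)]
	\item Growth of order $p$ along $\mathcal{M}$.
	\item An \emph{aiming} inequality $\ip{\grad f(y), y - \proj_{\mathcal{S}^{\star}}(y)} \geq c\,(f(y) - f^\star)$ for $y \in \mathcal{M}$.
	\item Quadratic growth of $f$ in the directions normal to $\mathcal{M}$.
\end{enumerate}
Throughout we work in a small neighborhood $U$ of $\bar x$ on which $P_{\mathcal{M}}$ is $C^{2}$. The basic identity is that, for $y \in \mathcal{M}$, $\grad P_{\mathcal{M}}(y)$ is the orthogonal projector onto $\mathcal{T}_{\mathcal{M}}(y)$. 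Hence $\grad f_T(y)$ is the tangential component of $\grad f(y)$.

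\textbf{Part~\eqref{item:tangent-part-p-growth}.} For $y\in\mathcal{M}$ we have $f_T(y)=f(y)$, so the two-sided value bounds reduce to the following.
\begin{itemize}
	\item The lower bound is the assumed growth (a) along $\mathcal{M}$.
	\item The upper bound follows from a Taylor expansion of $f$ along $\mathcal{M}$ around $\proj_{\mathcal{S}^{\star}}(y)$: all derivatives of order $<p$ restricted to $\mathcal{M}$ vanish there, which is the defining property of the ravine.
\end{itemize}
For the gradient bounds:
\begin{itemize}
	\item The upper bound $\norm{\grad f_T(y)}\lesssim \dist(y,\mathcal{S}^\star)^{p-1}$ follows by the same expansion applied to the tangential derivative.
	\item For the lower bound, note that $y-\proj_{\mathcal{S}^\star}(y)$ is tangent to $\mathcal{M}$ up to $O(\dist(y,\mathcal{S}^{\star})^2)$, by \cref{fact:local-C2-expansion}. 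Plugging this into the aiming inequality (b) and using Cauchy--Schwarz gives $\norm{\grad f_T(y)}\,\dist(y,\mathcal{S}^\star)\gtrsim f(y)-f^\star\gtrsim \dist(y,\mathcal{S}^\star)^p$. The tangential error term is absorbed for $U$ small.
\end{itemize}

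\textbf{Part~\eqref{item:small-tangent-gradient-implies-descent}.} Write $\grad f(x)=\grad f_T(x)+\grad f_N(x)$ and $\grad f_T(x)=\grad P_{\mathcal{M}}(x)^{\T}\grad f(y)$.
\begin{itemize}
	\item \emph{Reduce to the ravine.} Quadratic normal growth (c) gives $0\le f_N(x)\lesssim \norm{\grad f_N(x)}^2$, and $\dist(x,\mathcal{M})\lesssim\norm{\grad f_N(x)}$. Under the hypothesis $\norm{\grad f_N(x)}\le\tfrac1{100}\norm{\grad f_T(y)}$, this yields
	\[
		f(x)-f^\star = f_T(y)\,(1+o(1)), \qquad \grad f(x)=\grad f_T(y)+E, \quad \norm{E}\le \tfrac{1}{50}\norm{\grad f_T(y)}.
	\]
	The second estimate uses the Lipschitz continuity of $\grad P_{\mathcal{M}}$ to pass from $x$ to $y$.
	\item \emph{Compare with an idealized step.} The Polyak step is therefore a small relative perturbation of the ideal step
	\[
		\tilde y=y-\frac{f_T(y)}{\norm{\grad f_T(y)}^2}\grad f_T(y).
	\]
	Its length is $f_T(y)/\norm{\grad f_T(y)}\asymp \dist(y,\mathcal{S}^\star)$ by part~\eqref{item:tangent-part-p-growth}. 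Consequently $\dist(x^+,\mathcal{M})\le \dist(x,\mathcal{M})+\norm{x^+-x}\le C_{\mathrm P}\dist(y,\mathcal{S}^\star)$, which is the second claim.
	\item \emph{Contraction.} Write $e=y-\proj_{\mathcal{S}^\star}(y)$. Expand
	\[
		\norm{\tilde y-\proj_{\mathcal{S}^\star}(y)}^2=\norm{e}^2-2\,\frac{f_T(y)}{\norm{\grad f_T}^2}\ip{\grad f_T,e}+\frac{f_T(y)^2}{\norm{\grad f_T}^2}.
	\]
	Then use the aiming inequality in the sharpened form $\ip{\grad f_T(y),e}\ge (1+c')f_T(y)$, the form Assumption~A provides for growth of order $p>1$. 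This gives $\norm{\tilde y-\proj_{\mathcal{S}^\star}(y)}^2\le\bigl(1-(1+2c')\,f_T(y)^2/(\norm{\grad f_T}^2\norm{e}^2)\bigr)\norm{e}^2$. The subtracted fraction is bounded below by a constant because $f_T(y)/\norm{\grad f_T}\asymp\norm{e}$.
	\item \emph{Project back to $\mathcal{M}$.} Since $P_{\mathcal{M}}$ is Lipschitz and fixes $\mathcal{S}^\star$, we get $\dist(y^+,\mathcal{S}^\star)\le \norm{P_{\mathcal{M}}(x^+)-P_{\mathcal{M}}(\proj_{\mathcal{S}^\star}(y))}$. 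The remaining perturbation, namely $x^+-\tilde y$ together with the curvature error from projecting, is at most a small fraction of $\norm{e}$. This gives the contraction factor $q_{\mathrm P}<1$.
\end{itemize}

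\textbf{Main obstacle.} The delicate step is obtaining a contraction factor strictly below one, rather than merely bounded. This requires the \emph{strict} aiming constant $1+c'$ and not just $c>0$. It also requires checking that every error is a \emph{relative} error of size $O(1/100)$ or $O(\mathrm{diam}\,U)$ compared with $\norm{e}$. The cross term $\ip{E,e}$ and the curvature of $\mathcal{M}$ under $P_{\mathcal{M}}$ must be tracked explicitly. The threshold $\tfrac1{100}$ is what I would tune to make these errors absorbable.
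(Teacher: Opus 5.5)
The paper gives no proof of this statement. It is imported from \citet{davis2024gradient} (Corollary~5.4 and Lemma~5.9), so your proposal is a reconstruction rather than an alternative route. It has a genuine gap, located exactly at the point you flag as the main obstacle. In Part~(ii) you treat $x^+$ as an unstructured relative perturbation of $\tilde y$ with $\|E\|\le\tfrac{1}{50}\|\nabla f_T(y)\|$, and then pass $\|x^+-\tilde y\|$ through the Lipschitz constant of $P_{\mathcal M}$. Put $\kappa:=f_T(y)/(\|\nabla f_T(y)\|\,\|e\|)$; Part~(i) only confines it to $[D_{\mathrm{lb}}/\beta_{\mathrm{ub}},\,D_{\mathrm{ub}}/\beta_{\mathrm{lb}}]$. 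The contraction margin of $\tilde y$ is of order $\kappa^2\|e\|$. A $\tfrac{1}{50}$ relative error in the direction and length of the step, however, displaces the iterate by order $\kappa\|e\|/50$. Likewise, your bound $|\langle E,e\rangle|\le\tfrac{1}{50}\|\nabla f_T(y)\|\,\|e\|$ is $1/(50\alpha\kappa)$ times the aiming term $\alpha f_T(y)$. Nothing in the hypotheses makes $\kappa$ large, so this bookkeeping yields $q_{\mathrm P}<1$ only if the threshold is shrunk to order $\alpha\kappa_{\min}$. That is the ``tuning'' you propose, but $\tfrac{1}{100}$ is fixed in the statement, so you would be proving a weaker result.

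The missing idea is that the perturbation is structured. Since $f_N\ge 0$ vanishes on $\mathcal M$, we have $\nabla f_N=P_{N_y\mathcal M}\nabla f\equiv 0$ there. Hence $\nabla f(y)=\nabla f_T(y)\in T_y\mathcal M$, and $\nabla^2 f_N(y)$ annihilates $T_y\mathcal M$ and maps into $N_y\mathcal M$. So $\nabla f_N(x)$ is normal to $T_y\mathcal M$ up to $o(\dist(x,\mathcal M))$, and $(\nabla P_{\mathcal M}(x)-\nabla P_{\mathcal M}(y))^{\top}\nabla f(y)=o(1)\|\nabla f_T(y)\|$. Now split $x^+=w+\nu'$ with $w-y\in T_y\mathcal M$ and $\nu'\in N_y\mathcal M$. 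Then $\|P_{\mathcal M}(w+\nu')-P_{\mathcal M}(w)\|=O(\|e\|\,\|\nu'\|)$ and $\dist(w,\mathcal M)=O(\|e\|^2)$. Consequently the $\tfrac{1}{100}$-sized normal gradient acts only through the step length $t_x=r\,t_y$. Moreover $r\le 1+o(1)$, because the normal gradient only enlarges $\|\nabla f(x)\|^2$ while $f_N(x)=o(f_T(y))$ for $p>2$. Keeping $r$ inside the exact quadratic expansion gives $\|w-z\|^2\le\bigl(1-r(2\alpha-r)\kappa^2\bigr)\|e\|^2+o(1)\|e\|^2$, where $z$ is the projection of $y$ onto $\mathcal S^\star$. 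This contracts once $U$ is small, for any aiming constant $\alpha>\tfrac{1}{2}$ (you do not need $1+c'$), and with no condition tying the threshold to the growth constants.

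The same tangency fact is needed in Part~(i). If you only know $\|\nabla f(y)\|=O(\|e\|)$, your error term $\langle P_{N_y\mathcal M}\nabla f(y),P_{N_y\mathcal M}e\rangle$ is $O(\|e\|^3)$, which is not absorbed by $\|e\|^p$ when $p=4$; with tangency it is exactly zero. Finally, the upper bounds in Part~(i) cannot come from an order-$p$ Taylor expansion when $f$ is only $C^2$ and $p=4$. They must be read off from the constant-order growth in Assumption~A.
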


\medskip

\begin{corollary}[{\cite[Lemma 6.2]{davis2024gradient}}]
	\label{cor:ravine-short-step-input}
	Under the assumptions of Theorem~\ref{thm:ddj-ravine-step-input}, for every \(\rho>0\), after shrinking \(U\) if necessary there exist
	constants \(c_{\mathrm G}, \eta_{\mathrm G}>0\) such that the following holds for every \(x\in U\) and
	every \(0<\eta<\eta_{\mathrm G}\):
	\[
		\norm{x - \eta \nabla f(x) - \proj_{\mathcal{M}}(x)}
		\leq
		(1 - c_{\mathrm{G}} \eta) \dist_{\mathcal{M}}(x)
		+
		\rho \eta \cdot \dist^{p-1}(P_{\mathcal{M}}(x), \mathcal{S}^{\star}).
	\]
\end{corollary}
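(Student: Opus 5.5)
Write $y := \proj_{\mathcal{M}}(x)$ and $v := x - y$, so that $\norm{v} = \dist_{\mathcal{M}}(x)$ and $v$ lies in the normal space of $\mathcal{M}$ at $y$ (for $x$ near $\bar x$, the $C^2$ projection is well defined). Then
\[
	x - \eta \grad f(x) - \proj_{\mathcal{M}}(x) = \bigl(v - \eta(\grad f(x) - \grad f(y))\bigr) - \eta \grad f(y).
\]
I would bound the two pieces separately. The first piece carries the contraction of the normal error. The second is the drift produced by the gradient evaluated on the ravine.

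\textbf{The contraction term.} I would use the ingredients of Assumption~A of \citet{davis2024gradient}, available through \cref{thm:ddj-ravine-step-input}: $f$ is $C^2$ near $\bar x$, and the ravine is built so that $f$ is uniformly strongly convex in the normal directions. Concretely, after shrinking $U$ there is $\alpha > 0$ with
\[
	\ip{\grad f(y+v) - \grad f(y), v} = \ip{v, \grad^2 f(y) v} + O(\norm{v}^3) \ge \alpha \norm{v}^2 .
\]
Here I use that $\grad^2 f(y)$, restricted to the normal space of $\mathcal{M}$ at $y$, is close to its value at $\bar x$. That value is positive definite, because the normal space at $\bar x$ is the range of the Hessian (Morse ravine, \cref{cor:ddj-morse-ravine-input}). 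With $L$ a local Lipschitz constant of $\grad f$, expanding the square gives
\[
	\norm{v - \eta(\grad f(x) - \grad f(y))}^2 \le (1 - 2\alpha\eta + L^2\eta^2)\norm{v}^2 .
\]
For $\eta < \eta_G := \alpha/L^2$ this is at most $(1-\alpha\eta)\norm{v}^2$. Taking square roots and using $\sqrt{1-t}\le 1-t/2$ gives the factor $1 - c_G\eta$ with $c_G = \alpha/2$.

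\textbf{The drift term.} I need $\eta\norm{\grad f(y)} \le \rho\eta\,\dist^{p-1}(y,\mathcal{S}^\star)$. Since $y \in \mathcal{M}$, we have $f_T(y) = f(y)$ locally. Part~\eqref{item:tangent-part-p-growth} of \cref{thm:ddj-ravine-step-input} controls $\grad f_T$, and I would pass from $\grad f_T(y)$ to $\grad f(y)$ via the chain rule through the $C^2$ map $\proj_{\mathcal{M}}$, whose derivative at $y$ is the tangent projector. This yields $\norm{\grad f(y)} \lesssim \beta_{\mathrm{ub}}\dist^{p-1}(y,\mathcal{S}^\star)$ for the tangential part. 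The normal part of $\grad f(y)$ is of higher order: by the defining property of the ravine it vanishes to order $\dist^{p-1}$ times a factor that is $o(1)$ as $y\to\bar x$.

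\textbf{Main obstacle.} The difficult step is getting an \emph{arbitrary} $\rho$ rather than a fixed multiple of $\beta_{\mathrm{ub}}$. The normal component of $\grad f(y)$ is $o(\dist^{p-1})$, so it can be made $\le \rho\,\dist^{p-1}$ by shrinking $U$. The tangential component, however, is of exact order $\dist^{p-1}$ and cannot be absorbed by shrinking $U$ alone.

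My first attempt would be to exploit that this tangential displacement is $O(\eta\,\dist^{p-1})$ while $\norm{v}$ contracts. I would write the squared norm as $\norm{\text{normal}}^2 + \norm{\text{tangential}}^2$, up to a curvature error $O(\norm{v}\cdot\eta\,\dist^{p-1})$ from \cref{fact:local-C2-expansion}. I would then check whether the cross term vanishes to first order, which would let the tangential contribution enter only quadratically.

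If that fails, I would fall back on the bound with $\rho$ replaced by a constant multiple of $\beta_{\mathrm{ub}}$. This is what the downstream recovery argument (\cref{lem:short-gd-ravine-recovery}) actually needs, because it only compares against $\dist^{p-1}$ up to constants. Combining the two pieces by the triangle inequality then finishes the proof.
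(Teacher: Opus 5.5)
The obstruction you flag under ``Main obstacle'' is not a weakness of your technique. It is a counterexample to the inequality as stated, so no argument that uses the fixed base point $\proj_{\mathcal{M}}(x)$ can produce an arbitrary $\rho$; the paper itself offers only the citation here. Take any $x\in\mathcal{M}\cap U$ with $x\notin\mathcal{S}^\star$. Such points exist whenever the ravine is larger than the solution set, as in this paper, where $\dim\mathcal{M}=(n-m)d>0$ and $\mathcal{S}^\star=\{\mu^\star\}$. Then $\dist_{\mathcal{M}}(x)=0$. Since $\grad P_{\mathcal{M}}(x)$ is the tangent projector, $\grad f_T(x)=P_{\mathcal{T}_x\mathcal{M}}\grad f(x)$, and Part~(i) of \cref{thm:ddj-ravine-step-input} gives
\[
\norm{x-\eta\grad f(x)-\proj_{\mathcal{M}}(x)}=\eta\norm{\grad f(x)}\ge\eta\norm{\grad f_T(x)}\ge\eta\,\beta_{\mathrm{lb}}\dist^{p-1}(x,\mathcal{S}^\star).
\]
This strictly exceeds the right-hand side $\rho\eta\dist^{p-1}(x,\mathcal{S}^\star)$ whenever $\rho<\beta_{\mathrm{lb}}$.

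This rules out your first rescue. The exact orthogonal split gives $\sqrt{\norm{w_N}^2+\norm{w_T}^2}$, and at $v=0$ the normal part is negligible, so the tangential drift enters linearly, not quadratically. Your second fallback, with $\rho$ a fixed multiple of $\beta_{\mathrm{ub}}$, is true, but it does not serve the downstream argument as you claim. \cref{lem:short-gd-ravine-recovery} needs $\tfrac{C}{\tau}(\rho+\kappa^{-3/2})\le\tfrac{c}{2}$ for the small $\tau$ required by \cref{thm:tangential-annulus-contraction-or-core}. Near the tube boundary the normal contraction is only of size $c\eta\tau\dist^{p-1}$, while the drift is $\rho\eta\dist^{p-1}$, so $\rho$ must be small compared to $\tau$.

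The missing idea is to let the base point slide along $\mathcal{M}$ with the tangential drift. That means bounding $\dist_{\mathcal{M}}(x-\eta\grad f(x))$ instead of $\norm{x-\eta\grad f(x)-\proj_{\mathcal{M}}(x)}$. With $y=\proj_{\mathcal{M}}(x)$ and $v=x-y$, write $x-\eta\grad f(x)=(y-\eta\grad f(y))+(v-\eta(\grad f(x)-\grad f(y)))$. Since $\dist_{\mathcal{M}}$ is $1$-Lipschitz,
\[
\dist_{\mathcal{M}}(x-\eta\grad f(x))\le\norm{v-\eta(\grad f(x)-\grad f(y))}+\dist_{\mathcal{M}}(y-\eta\grad f(y)).
\]
Your contraction estimate handles the first term verbatim. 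For the second term, the $C^2$ structure of $\mathcal{M}$ gives $\dist_{\mathcal{M}}(y+t)\le C\norm{t}^2$ for small tangent vectors $t$ at $y$. Hence $\dist_{\mathcal{M}}(y-\eta\grad f(y))\le\eta\norm{P_{N_y}\grad f(y)}+C\eta^2\norm{\grad f(y)}^2$, where $P_{N_y}$ is the normal projector at $y$. Both pieces are $o(1)\cdot\eta\dist^{p-1}(y,\mathcal{S}^\star)$ as $U$ shrinks. The first follows from the higher-order normal-gradient property you already invoke. The second follows because $\norm{\grad f(y)}\lesssim\dist^{p-1}(y,\mathcal{S}^\star)$ and $\eta<\eta_{\mathrm{G}}$. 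This gives any prescribed $\rho$.

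This corrected form is also all the paper needs. In \cref{lem:short-gd-ravine-recovery}, replace the bound $\dist_{\mathcal{M}}(\mu^+)\le\norm{\mu^+-\proj_{\mathcal{M}}(\mu)}$ with $\dist_{\mathcal{M}}(\mu^+)\le\dist_{\mathcal{M}}(\mu-\eta\grad\loss(\mu,\bar\pi))+\eta\const^{(1)}\Delta_\pi$.
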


\section{Ravine geometry under exact clustering}
\label{app:identified-ravine}

In this section, we study the geometry of the loss landscape near the optimal solution set $\Theta^{\star}$.
Before we proceed, we establish some notation.
Following~\cite{zhou2025global}, we define $S_{\ell} \subset [n]$ as the set of mean vectors closest
to the mean of the $\ell^{\text{th}}$ teacher component:
\begin{equation}
	S_{\ell} := \set[\big]{\mu \in \R^d \mid \norm{\mu - \mu_{\ell}^{\star}} \leq \min_{j \neq \ell}\,\norm{\mu - \mu_{j}^{\star}}}.
	\label{eq:nearest-mean-assignment}
\end{equation}
Additionally, we write $S_{\ell}(\delta) := S_{\ell} \cap \mathcal{B}(\mu_{\ell}^{\star}; \delta)$ for the
$\delta$-close elements in $S_{\ell}$.

Any optimal parameterization $\theta^{\star} \in \Theta^{\star}$ induces a partition $[n] = \bigcup_{\ell=1}^m S_{\ell}$.
When that parameterization contains components with $\pi_{i} = 0$, or when $S_{\ell} \cap S_{k} \neq \emptyset$,
such a partition need not be unique. The following definition rules out such cases.

\begin{definition}[Exact clustering at minimizer] \label{definition:exact-grouping}
	Fix $\theta^{\star} = (\widetilde{\mu}, \widetilde{\pi}) \in \Theta^{\star}$. We say that $\theta^{\star}$
	induces an \emph{exact clustering}, $\set{S_{\ell}}_{\ell=1}^m$, if the following conditions hold:
	\begin{enumerate}[(i)]
		\item \textbf{Mean alignment:} for every \(\ell\in[m]\) and every \(i\in S_\ell\),

		      \begin{equation}
			      \widetilde{\mu}_i = \mu_\ell^\star.
		      \end{equation}
		\item \textbf{Weight aggregation:} for every \(\ell = 1, \dots, m\),

		      \begin{equation}
			      \sum_{i\in S_\ell}\widetilde{\pi}_i = \pi_\ell^\star.
		      \end{equation}
	\end{enumerate}
	In this case, we have an exact partition $[n] = \bigsqcup_{\ell=1}^m S_{\ell}$.
\end{definition}

\begin{definition}[Cluster averaging operator and tangent subspace]
	\label{def:grouped-averaging}
	Fix an exact clustering according to~\cref{definition:exact-grouping} and its corresponding
	partition and define the averaging operator
	\[
		\cavg: \mathbb{R}^{dn} \to \mathbb{R}^{dm},
		\qquad
		\cavg_\ell(u):=\sum_{i\in S_\ell}\pi_i u_i,
		\qquad \ell\in[m].
	\]
	We denote the corresponding cluster-mean-preserving subspace by
	\begin{equation}
		T:=\ker(\cavg)
		=
		\left\{
		u=(u_1,\ldots,u_n)\in\mathbb R^{dn}:
		\sum_{i\in S_\ell}\pi_i u_i=0
		\text{ for every }\ell\in[m]
		\right\}.
		\label{eq:mean-preserving-subspace}
	\end{equation}
\end{definition}

Definition~\ref{definition:exact-grouping} specifies the exact clustering structure at the minimizer, while Definition~\ref{def:grouped-averaging} extracts the corresponding linear geometry. The operator \(\cavg\) records the weighted cluster means; its kernel comprises all perturbations that preserve cluster means. The next result shows that, at such minimizers, the subspace $T$ coincides with the nullspace of the Hessian of $\loss$.

\begin{proposition}[Hessian block formula at a minimizer]
	\label{prop:hessian-block-identified}
	Assume \(\theta^\star=(\mu^\star,\pi)\in\Theta_\star\) admits an exact clustering with induced partition \([n]=\bigsqcup_{\ell=1}^m S_\ell\). Then, for $i, j \in [n]$,
	we have
	\[
		\grad^2_{\mu_i \mu_j} \loss(\theta^{\star})
		=
		\E_{X\sim p^\star}\!\Big[
			\psi_i(X;\theta^\star)\psi_j(X;\theta^\star)
			(X-\mu_i^\star)(X-\mu_j^\star)^\T
			\Big].
	\]
	In particular, if \(i\in S_\ell\) and \(j\in S_k\), it follows that
	\[
		\grad^2_{\mu_i\mu_j} \loss(\theta^\star)
		=
		\frac{\pi_i\pi_j}{\pi_\ell^\star\pi_k^\star}\Gamma_{\ell k},
		\qquad
		\Gamma_{\ell k}
		:=
		\E_{X\sim p^\star}\!\Big[
			\gamma_\ell(X)\gamma_k(X)
			(X-\mu_\ell^\star)(X-\mu_k^\star)^\T
			\Big],
	\]
	where we define the reduced model responsibilities $\gamma_{\ell}$ as
	\[
		\gamma_\ell(x):=\frac{\pi_\ell^\star \phi(x\mid \mu_\ell^\star)}{p^\star(x)}.
	\]
\end{proposition}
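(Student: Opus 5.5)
Starting from \cref{lemma:mu-gradient}, we have $\grad_{\mu_i}\loss(\theta) = \E_{X\sim p^\star}[\psi_i(X;\theta)(\mu_i - X)]$. Differentiating once more in $\mu_j$ (under the integral, justified by Gaussian tails and dominated convergence) requires the derivative of the responsibilities. Write $\psi_i = \pi_i\phi(x\mid\mu_i)/p_{\mu,\pi}(x)$ and use $\grad_{\mu_j}\phi(x\mid\mu_j) = \phi(x\mid\mu_j)(x-\mu_j)$. This gives
\[
	\grad_{\mu_j}\psi_i(x;\theta) = \big(\mathbb{1}\{i=j\}\psi_i(x;\theta) - \psi_i(x;\theta)\psi_j(x;\theta)\big)(x-\mu_j)^\T .
\]
Hence
\[
	\grad^2_{\mu_i\mu_j}\loss(\theta) = \mathbb{1}\{i=j\}\,\E\big[\psi_i I_d - \psi_i (X-\mu_i)(X-\mu_i)^\T\big] + \E\big[\psi_i\psi_j (X-\mu_i)(X-\mu_j)^\T\big].
\]
The second term is exactly the claimed expression, so the main work is to show that the diagonal correction $\E_{X\sim p^\star}[\psi_i(X;\theta^\star)(I_d - (X-\mu_i^\star)(X-\mu_i^\star)^\T)]$ vanishes at the minimizer.

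For this we use exact clustering. At $\theta^\star$ the student density equals $p^\star$, since students in $S_\ell$ all sit at $\mu_\ell^\star$ and carry total weight $\pi_\ell^\star$. So $p_{\mu^\star,\pi}=p^\star$, and for $i\in S_\ell$,
\[
	\psi_i(x;\theta^\star) = \frac{\pi_i\phi(x\mid\mu_\ell^\star)}{p^\star(x)} = \frac{\pi_i}{\pi_\ell^\star}\gamma_\ell(x).
\]
Then
\[
	\E_{X\sim p^\star}\big[\psi_i\, g(X)\big] = \pi_i \int \phi(x\mid\mu_\ell^\star)\, g(x)\,dx = \pi_i\, \E_{X\sim\mathcal{N}(\mu_\ell^\star,I_d)}[g(X)],
\]
because the $p^\star$ in the denominator cancels against the sampling density. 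Taking $g(x) = I_d - (x-\mu_\ell^\star)(x-\mu_\ell^\star)^\T$, the Gaussian second-moment identity gives zero. This proves the first display.

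The block formula then follows by substituting $\psi_i = (\pi_i/\pi_\ell^\star)\gamma_\ell$, $\psi_j = (\pi_j/\pi_k^\star)\gamma_k$, $\mu_i^\star=\mu_\ell^\star$ and $\mu_j^\star=\mu_k^\star$ into the remaining term. The scalar factor $\pi_i\pi_j/(\pi_\ell^\star\pi_k^\star)$ pulls out of the expectation, leaving $\Gamma_{\ell k}$.

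The only delicate step is the cancellation of the diagonal term, and it hinges on the identity $p_{\mu^\star,\pi}=p^\star$ implied by \cref{definition:exact-grouping}. A secondary point is that the sign conventions from differentiating $\mu_i - X$ (which contributes $+\mathbb{1}\{i=j\}\psi_i I_d$) must match the second-moment term.
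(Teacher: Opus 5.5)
Your proposal is correct and follows essentially the same route as the paper. You compute $\grad_{\mu_j}\psi_i$ by the quotient rule to get the general block formula, then cancel the diagonal correction at $\theta^\star$ using $p_{\mu^\star,\pi}=p^\star$: the cancellation $\E_{X\sim p^\star}[\psi_i(X-\mu_i^\star)(X-\mu_i^\star)^\T]=\E_{X\sim p^\star}[\psi_i]\,I_d$ is the Gaussian second-moment identity. Finally you substitute $\psi_i=(\pi_i/\pi_\ell^\star)\gamma_\ell$. Your remark about justifying differentiation under the expectation by Gaussian tails and dominated convergence is a small rigor point that the paper leaves implicit.
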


\begin{proof}[Proof of Proposition~\ref{prop:hessian-block-identified}]
	Below, we write $H_{ij}(\theta) := \grad^2_{\mu_i \mu_j} \loss(\theta)$.
	By~\cref{lemma:mu-gradient}, we have that
	\[
		\nabla_{\mu_i}\loss(\theta)
		=
		-\E_{X\sim p^\star}\!\big[\psi_i(X;\theta)(X-\mu_i)\big].
	\]
	To differentiate the responsibility, write
	\[
		\psi_i(x;\theta)=\frac{N_i(x)}{Z(x)},
		\qquad
		N_i(x):=\pi_i\phi(x\mid \mu_i),
		\qquad
		Z(x):=\sum_{k=1}^n N_k(x).
	\]
	Then
	\[
		\nabla_{\mu_j}N_i(x)=\mathbf{1}_{\{i=j\}}N_i(x)(x-\mu_j),
		\qquad
		\nabla_{\mu_j}Z(x)=N_j(x)(x-\mu_j).
	\]
	Hence, by the quotient rule,
	\[
		\begin{aligned}
			\nabla_{\mu_j}\psi_i(x;\theta)
			 & =
			\frac{\nabla_{\mu_j}N_i(x)}{Z(x)}
			-\frac{N_i(x)\nabla_{\mu_j}Z(x)}{Z(x)^2}         \\
			 & =
			\mathbf{1}_{\{i=j\}}\frac{N_i(x)}{Z(x)}(x-\mu_j)
			-\frac{N_i(x)}{Z(x)}\frac{N_j(x)}{Z(x)}(x-\mu_j) \\
			 & =
			\psi_i(x;\theta)\big(\mathbf{1}_{\{i=j\}}-\psi_j(x;\theta)\big)(x-\mu_j).
		\end{aligned}
	\]
	Continuing with the calculation of the second derivative, we obtain
	\[
		\begin{aligned}
			H_{ij}(\theta)
			 & =
			\nabla_{\mu_j}(\nabla_{\mu_i}\loss(\theta)) \\
			 & =
			-\E_{X\sim p^\star}\!\Big[
				\nabla_{\mu_j}\big(\psi_i(X;\theta)(X-\mu_i)\big)
			\Big]                                       \\
			 & =
			-\E_{X\sim p^\star}\!\Big[
				(X-\mu_i)\big(\nabla_{\mu_j}\psi_i(X;\theta)\big)^\T
				-\mathbf{1}_{\{i=j\}}\psi_i(X;\theta)I_d
			\Big]                                       \\
			 & =
			-\,\E_{X\sim p^\star}\!\Big[
				\psi_i(X;\theta)\big(\mathbf{1}_{\{i=j\}}-\psi_j(X;\theta)\big)
				(X-\mu_i)(X-\mu_j)^\T
			\Big]                                       \\
			 & \qquad
			+\,\mathbf{1}_{\{i=j\}}\E_{X\sim p^\star}\!\big[\psi_i(X;\theta)\big]I_d.
		\end{aligned}
	\]
	We now carry out the calculation for \(\theta \equiv \theta^\star\in\Theta_\star\). Since \(p_{\theta^\star}=p^\star\),
	\[
		\begin{aligned}
			\E_{X\sim p^\star}\!\Big[\psi_i(X;\theta^\star)(X-\mu_i^\star)(X-\mu_i^\star)^\T\Big]
			 & =
			\int \pi_i\phi(x\mid \mu_i^\star)(x-\mu_i^\star)(x-\mu_i^\star)^\T\,dx \\
			 & =
			\E_{X\sim p^\star}\!\big[\psi_i(X;\theta^\star)\big]I_d.
		\end{aligned}
	\]
	Substituting this identity into the diagonal blocks yields
	\[
		H_{ij}(\theta^\star)
		=
		\E_{X\sim p^\star}\!\Big[
			\psi_i(X;\theta^\star)\psi_j(X;\theta^\star)
			(X-\mu_i^\star)(X-\mu_j^\star)^\T
			\Big].
	\]
	Finally, if \(i\in S_\ell\) and \(j\in S_k\), the exact clustered representation gives \(\mu_i^\star=\mu_\ell^\star\) and \(\mu_j^\star=\mu_k^\star\). Hence
	\[
		\psi_i(x;\theta^\star)=\frac{\pi_i}{\pi_\ell^\star}\gamma_\ell(x),
		\qquad
		\psi_j(x;\theta^\star)=\frac{\pi_j}{\pi_k^\star}\gamma_k(x),
	\]
	which gives the factorized expression.
\end{proof}

\begin{lemma}[Positive definiteness of the collapsed model Hessian]
	\label{lem:collapsed-hessian-pd}
	Define
	\[
		\loss_{\mathrm{coll}}(\nu)
		:=
		\KL{p^\star}{\sum_{\ell=1}^m \pi_\ell^\star \phi(\,\cdot\,\mid \nu_\ell)},
		\qquad
		\nu=(\nu_1,\dots,\nu_m)\in \R^{dm},
	\]
	and let
	\(
	\nu^\star:=(\mu_1^\star,\dots,\mu_m^\star)
	\)
	denote the exact teacher mean vector. Assume that Item~\ref{assm:item:nondegeneracy} holds;
	then the Hessian of the collapsed exact-parametrized loss at \(\nu^\star\) is positive definite:
	\[
		\nabla^2 \loss_{\mathrm{coll}}(\nu^\star)\succ 0.
	\]
\end{lemma}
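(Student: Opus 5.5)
The plan is to apply the Hessian formula of \cref{prop:hessian-block-identified} to the collapsed model. The collapsed model is the special case $n = m$, $S_\ell = \{\ell\}$, $\pi = \pi^\star$, and it trivially admits an exact clustering at $\nu^\star$. The proposition then gives $\nabla^2_{\nu_\ell\nu_k}\loss_{\coll}(\nu^\star) = \Gamma_{\ell k}$. For any $v=(v_1,\dots,v_m)\in\R^{dm}$ we get the quadratic form
\[
	\ip{v, \nabla^2\loss_{\coll}(\nu^\star) v}
	= \E_{X\sim p^\star}\Big[\Big(\sum_{\ell=1}^m \gamma_\ell(X)\ip{X-\mu_\ell^\star, v_\ell}\Big)^2\Big] \geq 0 .
\]
So the Hessian is positive semidefinite, and it remains to show that the form vanishes only at $v=0$.

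Suppose the form vanishes. Then $g_v(x) := \sum_\ell \gamma_\ell(x)\ip{x-\mu_\ell^\star, v_\ell} = 0$ for $p^\star$-almost every $x$. Since $p^\star>0$ everywhere and $g_v$ is continuous, $g_v\equiv 0$ on $\R^d$. Multiplying by $p^\star(x)$ gives
\[
	\sum_{\ell=1}^m \pi_\ell^\star\,\phi(x\mid\mu_\ell^\star)\,\ip{x-\mu_\ell^\star, v_\ell} = 0 \quad \text{for all } x\in\R^d .
\]
Note that $\phi(x\mid\mu_\ell^\star)\ip{x-\mu_\ell^\star,v_\ell} = -\ip{\nabla_{\mu}\phi(x\mid\mu)|_{\mu=\mu_\ell^\star}, v_\ell}$ up to sign conventions. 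So the identity asserts a linear dependence among directional derivatives of Gaussians centered at distinct points.

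To rule this out I will use the Fourier transform, or equivalently factor out $e^{-\norm{x}^2/2}$. The identity becomes $\sum_\ell c_\ell\, e^{\ip{x,\mu_\ell^\star}}\,(\ip{x,v_\ell} - \ip{\mu_\ell^\star, v_\ell}) = 0$ with $c_\ell>0$. These are exponential–polynomials with distinct frequencies $\mu_\ell^\star$; distinctness follows from the separation in~\ref{assm:item:separation}. Functions of the form $e^{\ip{x,a_\ell}}p_\ell(x)$ with distinct $a_\ell$ are linearly independent unless every $p_\ell\equiv 0$. I will show this by restricting to a generic line $x = t w$ on which the scalars $\ip{w,\mu_\ell^\star}$ are distinct. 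This reduces the claim to the one-variable fact that $\sum_\ell e^{a_\ell t}(\alpha_\ell t+\beta_\ell)=0$ forces $\alpha_\ell=\beta_\ell=0$, which holds by the standard induction or Wronskian argument, or by comparing growth rates as $t\to\infty$. This yields $\ip{w, v_\ell}=0$ and $\ip{\mu_\ell^\star, v_\ell}=0$ for every $\ell$ and every generic $w$. Generic $w$ form an open dense set, so $v_\ell=0$ for all $\ell$.

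The main obstacle is the bookkeeping in the linear independence step, in particular choosing the direction $w$ so that the frequencies stay distinct while still recovering every coordinate of $v_\ell$. Taking $w$ in an open set avoiding finitely many hyperplanes handles this. I note that~\ref{assm:item:nondegeneracy} enters only through guaranteeing $\pi_\ell^\star>0$ and distinct means; the argument as planned does not need the full eigenvalue bound.
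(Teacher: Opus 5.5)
Your proof is correct, and it takes a different route from the paper's.

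\textbf{The paper's route.} The paper argues abstractly in three cited steps:
\begin{enumerate}[(i)]
\item the Hessian of the KL loss at the truth equals the Fisher information matrix;
\item by Rothenberg, the Fisher matrix is nonsingular if and only if the model is locally identifiable;
\item by Teicher's theorem, finite Gaussian mixtures are identifiable.
\end{enumerate}

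\textbf{Your route.} You work directly. Using \cref{prop:hessian-block-identified} with $S_\ell=\{\ell\}$, you write the quadratic form as $\E[g_v(X)^2]$. This is exactly the Fisher form, since $\gamma_\ell(x)(x-\mu_\ell^\star)$ is the score in $\nu_\ell$. You then show that the directional scores are linearly independent, by restricting to a generic line and using independence of exponential-polynomials with distinct frequencies.

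\textbf{What your route buys.} It is more self-contained and more rigorous. The paper's citation chain has a soft spot here. Identifiability alone does not force a nonsingular Fisher matrix: $\mathcal{N}(\theta^3,1)$ is identifiable but has zero information at $\theta=0$. Rothenberg's ``identifiable $\Rightarrow$ nonsingular'' direction also requires the point to be a regular point of the information matrix (constant rank nearby), which the paper never checks. Your linear-independence argument supplies exactly the content that this chain leaves implicit.

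Your argument also isolates the hypotheses actually used: $\pi_\ell^\star>0$ and pairwise distinct means. These are strictly weaker than (A1), which forces the means to be linearly independent and hence $m\le d$. The paper's route is shorter and foregrounds the statistical interpretation.

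\textbf{One small fix.} The lemma assumes only (A1), so derive distinctness from (A1) rather than from the separation condition (A3). $M^\star$ is a sum of $m$ rank-one terms and has rank $m$. Hence every $\pi_\ell^\star>0$ and the $\mu_\ell^\star$ are linearly independent, so in particular distinct. This matches your own closing remark about how (A1) enters.
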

\begin{proof}
	A standard statistical result states that the Hessian of the KL divergence evaluated at the optimal parameters $\nu^\star$ coincides with the Fisher Information Matrix (FIM) of this model \citep{kullback1997information}.
	Therefore, proving $\nabla^2 \loss_{\mathrm{coll}}(\nu^\star) \succ 0$ is equivalent to proving that the FIM is non-singular. For parametric models satisfying standard smoothness and regularity conditions (which the Gaussian family easily satisfies), the FIM is strictly positive definite if and only if the model is locally identifiable \citep{rothenberg1971identification}.

	A classic result by \citet{teicher1963identifiability} establishes that finite mixtures of Gaussians are strictly identifiable. Under Item~\ref{assm:item:nondegeneracy}, the mapping from the means $\nu$ to the density $p_\nu$ is unique. This strict identifiability guarantees that the Fisher Information Matrix is positive definite, concluding the proof.
\end{proof}

\begin{theorem}[Ravine geometry at minimizer]
	\label{thm:exact-grouped-ravine-geometry}
	Assume \(\theta^\star=(\mu^\star,\pi)\in\Theta_\star\) admits an exact clustering with partition \([n]=\bigsqcup_{\ell=1}^m S_\ell\) and let \(\cavg\) be the operator from Definition~\ref{def:grouped-averaging}. Then

	\[
		\ker\bigl(\nabla^2_{\mu\mu}\loss(\theta^\star)\bigr)= \ker(\cavg).
	\]
\end{theorem}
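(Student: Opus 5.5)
The plan is to show that the Hessian factors through $\cavg$, and then use Lemma~\ref{lem:collapsed-hessian-pd} to show that the induced quadratic form on $\R^{dm}$ is positive definite.

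By Proposition~\ref{prop:hessian-block-identified}, for any $u=(u_1,\dots,u_n)\in\R^{dn}$ the quadratic form is
\[
\ip{u,\nabla^2_{\mu\mu}\loss(\theta^\star)u}
=\sum_{i,j}\frac{\pi_i\pi_j}{\pi_{\ell(i)}^\star\pi_{k(j)}^\star}\,u_i^\T\Gamma_{\ell(i)k(j)}u_j
=\sum_{\ell,k} v_\ell^\T\Gamma_{\ell k}v_k,
\qquad v_\ell:=\frac{\cavg_\ell(u)}{\pi_\ell^\star}.
\]
Here $\ell(i)$ denotes the cluster containing $i$. Equivalently, writing $D:=\mathrm{diag}(1/\pi_\ell^\star)\otimes I_d$ and $\Gamma:=[\Gamma_{\ell k}]_{\ell,k}$, we have
\[
\nabla^2_{\mu\mu}\loss(\theta^\star)=\cavg^\T D\,\Gamma\,D\,\cavg .
\]
This is an exact matrix identity, read off block by block. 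Since $\Gamma$ is a covariance-type matrix, namely $\Gamma=\E[g g^\T]$ with $g(X)=(\gamma_\ell(X)(X-\mu_\ell^\star))_\ell$, it is PSD. The inclusion $\ker(\cavg)\subseteq\ker(\nabla^2_{\mu\mu}\loss(\theta^\star))$ is then immediate.

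For the reverse inclusion, it suffices to show $\Gamma\succ 0$, because $D$ is invertible. I would identify $\Gamma$ with the collapsed Hessian, up to the diagonal scaling by $\pi^\star$. Apply the first formula of Proposition~\ref{prop:hessian-block-identified} to the exactly parameterized model with $n=m$, weights $\pi^\star$ and means $\nu^\star$. Its proof only uses $p_{\theta^\star}=p^\star$, so it applies verbatim. Its responsibilities are exactly $\gamma_\ell$, so
\[
\nabla^2\loss_{\coll}(\nu^\star)_{\ell k}=\E\big[\gamma_\ell\gamma_k(X-\mu_\ell^\star)(X-\mu_k^\star)^\T\big]=\Gamma_{\ell k}.
\]
Lemma~\ref{lem:collapsed-hessian-pd} then gives $\Gamma\succ 0$. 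If $u$ lies in the kernel of the Hessian, the quadratic form vanishes, so $(D\cavg u)^\T\Gamma(D\cavg u)=0$. This forces $D\cavg u=0$, hence $\cavg u=0$.

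The main obstacle is essentially only the bookkeeping in the factorization. One must check that the cross-cluster blocks ($\ell\neq k$) also factor through the cluster sums, and they do because the prefactor $\pi_i\pi_j$ separates. One must also check that the collapsed model's Hessian is literally $\Gamma$, with no extra diagonal term. The extra diagonal term cancels exactly as in the proof of Proposition~\ref{prop:hessian-block-identified}, so the same argument applies. Positive definiteness itself is outsourced to Lemma~\ref{lem:collapsed-hessian-pd}.
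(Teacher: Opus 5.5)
Your proposal is correct and follows the paper's proof. Both factor the Hessian through $\cavg$ via Proposition~\ref{prop:hessian-block-identified}, identify the blocks $\Gamma_{\ell k}$ with the collapsed Hessian $\nabla^2\loss_{\coll}(\nu^\star)$, and conclude with Lemma~\ref{lem:collapsed-hessian-pd}. The only difference is that you argue through the quadratic form $u^\T H u=(D\cavg u)^\T\Gamma(D\cavg u)$, whereas the paper uses the blockwise equation $(Hu)_i=\frac{\pi_i}{\pi_\ell^\star}\sum_k\Gamma_{\ell k}v_k=0$; your version avoids dividing by $\pi_i$ and so does not need $\pi_i>0$.
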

\begin{proof}
	We introduce the following notation for simplicity:
	\[
		H:=\nabla^2_{\mu\mu}\loss(\theta^\star),
		\qquad
		\nu^\star:=(\mu_1^\star,\dots,\mu_m^\star),
		\qquad
		\bar H:=\nabla^2 \loss_{\mathrm{coll}}(\nu^\star).
	\]
	By Lemma~\ref{lem:collapsed-hessian-pd}, we have \(\bar H\succ 0\). For \(u=(u_1,\dots,u_n)\in(\R^d)^n\), define
	\[
		v_\ell:=\frac{1}{\pi_\ell^\star}\sum_{i\in S_\ell}\pi_i u_i
		=
		\frac{1}{\pi_\ell^\star}\cavg_{\ell}(u),
		\qquad
		v=(v_1,\dots,v_m)\in(\R^d)^m.
	\]
	Fix \(i\in S_\ell\). Then Proposition~\ref{prop:hessian-block-identified} gives
	\[
		\begin{aligned}
			(Hu)_i
			 & =
			\sum_{k=1}^m\sum_{j\in S_k}H_{ij}u_j                           \\
			 & =
			\sum_{k=1}^m\sum_{j\in S_k}
			\frac{\pi_i\pi_j}{\pi_\ell^\star\pi_k^\star}\Gamma_{\ell k}u_j \\
			 & =
			\frac{\pi_i}{\pi_\ell^\star}\sum_{k=1}^m
			\Gamma_{\ell k}
			\left(
			\frac{1}{\pi_k^\star}\sum_{j\in S_k}\pi_j u_j
			\right)                                                        \\
			 & =
			\frac{\pi_i}{\pi_\ell^\star}\sum_{k=1}^m \Gamma_{\ell k}v_k.
		\end{aligned}
	\]
	Clearly, if $u \in \ker(\cavg)$, we have $v = 0$. Consequently, $Hu = 0$ and thus $\mathcal{T} \subset \ker(H)$.
	Conversely,
	\[
		u \in \ker(H) \implies 0 = (Hu)_i = \frac{\pi_i}{\pi_\ell^\star}\sum_{k=1}^m \Gamma_{\ell k}v_k
		\qquad
		\text{for every }i\in S_\ell.
	\]
	Since \(\pi_i>0\) by assumption, this implies
	\[
		\sum_{k=1}^m \Gamma_{\ell k}v_k=0
		\qquad
		\text{for every }\ell\in[m].
	\]
	Applying Proposition~\ref{prop:hessian-block-identified} to the exact-parametrized \(m\)-component model shows that the \((\ell,k)\) block of \(\bar H\) is exactly \(\Gamma_{\ell k}\), so \(\bar H v=0\). Because \(\bar H\succ0\), we must have \(v=0\), which is equivalent to
	\[
		\sum_{i\in S_\ell}\pi_i u_i=0
		\qquad
		\text{for every }\ell\in[m].
	\]
	Hence \(u\in\mathcal T\), proving the reverse inclusion.

	Finally, we prove that \(\cavg: \R^{dn}\to \R^{dm}\) has rank \(md\): for each \(\ell\), the map \((u_i)_{i\in S_\ell}\mapsto \sum_{i\in S_\ell}\pi_i u_i\) is surjective onto \(\R^d\) because \(\pi_i>0\) and \(S_\ell\neq\varnothing\). Therefore
	\[
		\dim \mathcal T
		=
		nd-md
		=
		(n-m)d.
	\]
\end{proof}

\begin{corollary}[Cluster bias / dispersion decomposition]
	\label{cor:grouped-bias-dispersion}
	For any \(\mu \in \R^{dn}\), we have
	\begin{align}
		                           & \sum_{\ell=1}^m\sum_{i\in S_\ell}\pi_i\norm{\mu_i-\mu_\ell^\star}^2
		=
		B(\mu)+D(\mu), \notag                                                                                       \\
		\text{where} \qquad B(\mu) & := \sum_{\ell=1}^m \widehat\pi_\ell\norm{\bar\mu_\ell-\mu_\ell^\star}^2, \quad
		D(\mu) := \sum_{\ell=1}^m\sum_{i\in S_\ell}\pi_i\norm{\mu_i-\bar\mu_\ell}^2.
		\label{eq:bias-dispersion-defn}
	\end{align}
\end{corollary}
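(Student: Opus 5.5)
The identity is a weighted parallel-axis (bias--variance) decomposition, applied one cluster at a time and then summed over $\ell$. So we fix $\ell\in[m]$ and work only with the indices $i\in S_\ell$. Recall from~\eqref{eq:cluster-weights-and-means} that $\widehat\pi_\ell=\sum_{i\in S_\ell}\pi_i$ and $\bar\mu_\ell=\widehat\pi_\ell^{-1}\sum_{i\in S_\ell}\pi_i\mu_i$. These are well defined because $S_\ell\neq\varnothing$ and $\pi_i>0$ under the exact clustering of \cref{definition:exact-grouping}, so $\widehat\pi_\ell>0$.

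For each $i\in S_\ell$ we write $\mu_i-\mu_\ell^\star=(\mu_i-\bar\mu_\ell)+(\bar\mu_\ell-\mu_\ell^\star)$ and expand the square:
\[
\norm{\mu_i-\mu_\ell^\star}^2=\norm{\mu_i-\bar\mu_\ell}^2+2\ip{\mu_i-\bar\mu_\ell,\bar\mu_\ell-\mu_\ell^\star}+\norm{\bar\mu_\ell-\mu_\ell^\star}^2 .
\]
We multiply by $\pi_i$ and sum over $i\in S_\ell$. The last term contributes $\widehat\pi_\ell\norm{\bar\mu_\ell-\mu_\ell^\star}^2$, and the first term contributes the $\ell$-th summand of $D(\mu)$.

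The cross term vanishes. Since $\bar\mu_\ell-\mu_\ell^\star$ does not depend on $i$, the cross term equals
\[
2\ip{\textstyle\sum_{i\in S_\ell}\pi_i(\mu_i-\bar\mu_\ell),\,\bar\mu_\ell-\mu_\ell^\star}.
\]
By the definition of $\bar\mu_\ell$, we have $\sum_{i\in S_\ell}\pi_i(\mu_i-\bar\mu_\ell)=\widehat\pi_\ell\bar\mu_\ell-\widehat\pi_\ell\bar\mu_\ell=0$. Equivalently, the vector $(\mu_i-\bar\mu_\ell)_{i}$, padded with the analogous deviations for the other clusters, lies in $\ker(\cavg)=T$ from Definition~\ref{def:grouped-averaging}. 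This gives the geometric reading: the decomposition splits $\mu-\mu^\star$ orthogonally, in the $\pi$-weighted inner product, into its component in $T$ (dispersion) and its component in the complement (bias).

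Summing the resulting per-cluster identity over $\ell\in[m]$, which is legitimate because the $S_\ell$ partition $[n]$, gives $B(\mu)+D(\mu)$. There is no real obstacle here. The only point to check is that every $\widehat\pi_\ell>0$, so that $\bar\mu_\ell$ is defined, and this is guaranteed by the positivity of the weights.
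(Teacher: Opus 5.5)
Your proposal is correct and is essentially the paper's proof. You split $\mu_i-\mu_\ell^\star$ through $\bar\mu_\ell$, expand the square, kill the cross term using $\sum_{i\in S_\ell}\pi_i(\mu_i-\bar\mu_\ell)=0$, and sum over the partition; your added remarks on $\widehat\pi_\ell>0$ and the $\pi$-weighted orthogonal splitting along $T=\ker(\cavg)$ are also correct but not needed for the identity.
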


\begin{proof}[Proof of Corollary~\ref{cor:grouped-bias-dispersion}]
	For each \(\ell\in[m]\), write
	\[
		\mu_i-\mu_\ell^\star=(\mu_i-\bar\mu_\ell)+(\bar\mu_\ell-\mu_\ell^\star).
	\]
	Expanding the squared norm, multiplying by \(\pi_i\), and summing over \(i\in S_\ell\) gives
	\[
		\sum_{i\in S_\ell}\pi_i\norm{\mu_i-\mu_\ell^\star}^2
		=
		\widehat\pi_\ell\norm{\bar\mu_\ell-\mu_\ell^\star}^2
		+
		\sum_{i\in S_\ell}\pi_i\norm{\mu_i-\bar\mu_\ell}^2,
	\]
	because the cross term vanishes:
	\[
		\sum_{i\in S_\ell}\pi_i(\mu_i-\bar\mu_\ell)=0.
	\]
	Summing over \(\ell\in[m]\) yields the claimed decomposition.
\end{proof}

\begin{theorem}[Local quartic growth]
	\label{thm:exact-fixed-weight-fourth-order}
	Assume the hypotheses of Theorem~\ref{thm:mo-identifiability-input}, and let
	\[
		f_\pi(\mu):=\loss(\mu,\pi),
	\]
	where \(\theta^\star=(\mu^\star,\pi)\in\Theta_\star\) is a global minimizer inducing a partition as in~\cref{definition:exact-grouping}. Then there exist a neighborhood \(U\) of \(\mu^\star\) and a constant \(c_4>0\) such that
	\[
		f_\pi(\mu)-f_\pi(\mu^\star)\ge c_4\|\mu-\mu^\star\|_2^4
		\qquad
		\forall \mu\in U.
	\]
\end{theorem}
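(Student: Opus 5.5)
The plan is to reduce the claim to the identifiability estimate \eqref{eq:id-mean-quartic-growth} of Theorem~\ref{thm:mo-identifiability-input}, combined with a lower bound on the weights. Here $\mu^\star$ denotes the stacked reference minimizer, whose $i$-th block equals $\mu_\ell^\star$ for $i\in S_\ell$.

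First, I note that $f_\pi(\mu^\star)=\loss(\theta^\star)=0$, since $p_{\theta^\star}=p^\star$. Hence $f_\pi(\mu)-f_\pi(\mu^\star)=\loss(\mu,\pi)$.

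Second, I choose the neighborhood $U$ so that the hypotheses of Theorem~\ref{thm:mo-identifiability-input} hold for every $\mu\in U$.
\begin{itemize}
	\item $\loss$ is continuous in $\mu$ and vanishes at $\mu^\star$, so a small ball around $\mu^\star$ gives $\loss(\mu,\pi)\le\varepsilon_0$.
	\item For a sufficiently small radius, every $\mu_i$ with $i\in S_\ell$ stays in the local neighborhood of $\mu_\ell^\star$, by the separation in Item~\ref{assm:item:separation}.
	\item Consequently, the partition $\{S_\ell\}$ induced by $\theta^\star$ remains a valid nearest-mean partition throughout $U$.
\end{itemize}

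Third, I apply \eqref{eq:id-mean-quartic-growth} to each cluster and sum over $\ell$:
\[
	\sum_{\ell=1}^m\sum_{i\in S_\ell}\pi_i\norm{\mu_i-\mu_\ell^\star}^2\le mC\sqrt{\loss(\mu,\pi)}.
\]
Since the partition is exact and all $\pi_i>0$ (this is already used in Theorem~\ref{thm:exact-grouped-ravine-geometry}), we have $\pi_{\min}:=\min_i\pi_i>0$. The left-hand side is therefore at least $\pi_{\min}\norm{\mu-\mu^\star}^2$. Squaring both sides gives
\[
	\loss(\mu,\pi)\ge \frac{\pi_{\min}^2}{m^2C^2}\norm{\mu-\mu^\star}^4,
\]
so we may take $c_4:=\pi_{\min}^2/(mC)^2$.

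The main obstacle is the second step: confirming that Theorem~\ref{thm:mo-identifiability-input} really applies uniformly on a neighborhood with fixed $\pi$. In particular, its constant $C$ must not depend on $\mu$, and the phrase ``lies in the local neighborhood'' must be satisfied by the fixed partition. I would handle this by shrinking $U$ to a ball whose radius is much smaller than $\Delta/2$. This keeps every student strictly closer to its own teacher mean than to any other.

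If that application turned out to be delicate, I would use a fallback. Theorem~\ref{thm:exact-grouped-ravine-geometry} identifies $\ker\nabla^2_{\mu\mu}\loss(\theta^\star)=T$. I would then split $\mu-\mu^\star$ into its component in $T$ and its component in $T^\perp$. The $T^\perp$ part gives quadratic growth through the positive definite collapsed Hessian $\bar H$. The $T$ (dispersion) part would require a fourth-order Taylor analysis, which is considerably harder. For this reason I prefer the identifiability route.
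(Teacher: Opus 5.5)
Your proposal is correct and follows the paper's proof: shrink $U$ so the loss stays below $\varepsilon_0$ and the partition persists, apply the bound \eqref{eq:id-mean-quartic-growth}, lower-bound the weighted sum by $\pi_{\min}\|\mu-\mu^\star\|^2$, and square. Your explicit factor $m$ from summing the per-cluster bounds, and your remark that $f_\pi(\mu^\star)=0$, just make explicit the constant bookkeeping that the paper absorbs into $\gtrsim$.
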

\begin{proof}
	Because \(\theta^\star\) is a global minimizer, \(f_\pi(\mu^\star)=\loss^\star\). By continuity of \(f_\pi\) and of the student means, we may choose a neighborhood \(U\) of \(\mu^\star\) such that for every \(\mu\in U\), the partition \([n]=\bigsqcup_{\ell=1}^m S_\ell\) remains valid and
	\[
		f_\pi(\mu)-f_\pi(\mu^\star)\le \varepsilon_0,
	\]
	where \(\varepsilon_0\) is the threshold in Theorem~\ref{thm:mo-identifiability-input}. Then, for any $\mu \in U$, we have
	\[
		\sum_{\ell=1}^m\sum_{i\in S_\ell}\pi_i\|\mu_i-\mu_\ell^\star\|_2^2
		\le
		C\sqrt{\loss(\mu, \pi)}.
	\]
	On the other hand, we have the following lower bound:
	\[
		\sum_{\ell=1}^m\sum_{i\in S_\ell}\pi_i\|\mu_i-\mu_\ell^\star\|_2^2
		\ge
		\pi_{\min}\|\mu-\mu^\star\|_2^2,
		\qquad
		\pi_{\min}:=\min_{i\in[n]}\pi_i>0.
	\]
	Combining the last two displays and squaring both sides yields
	\[
		\loss(\mu, \pi) \gtrsim
		\pi_{\min}^2 \norm{\mu-\mu^\star}^4.
	\]
	This proves the claim.
\end{proof}

\begin{corollary}[Existence of ravine]
	\label{cor:exact-fixed-weight-ravine}
	Under the assumptions of Theorem~\ref{thm:exact-fixed-weight-fourth-order}, \(f_\pi\) admits a local \(C^\infty\) Morse ravine near $\mu^{\star}$ and satisfies~\citep[Assumption A]{davis2024gradient} at that point.
\end{corollary}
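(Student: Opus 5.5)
The plan is to apply \cref{cor:ddj-morse-ravine-input} to $f = f_\pi$ at $\bar x = \mu^\star$. Two hypotheses must be checked: $C^5$ smoothness near $\mu^\star$ together with constant Hessian rank on the local solution set, and the fourth-order growth condition~\eqref{eq:ddj-4th-order-growth}. The corollary then gives a local $C^5$ Morse ravine and \citep[Assumption A]{davis2024gradient}. To get the $C^\infty$ statement, I will rerun the same construction with $C^k$ in place of $C^5$ for every $k$, which is possible because $f_\pi$ is real-analytic.

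\textbf{Step 1 (smoothness).} On a compact neighborhood of $\mu^\star$, $f_\pi(\mu) = \E_{X\sim p^\star}[\log p^\star(X)] - \E_{X\sim p^\star}[\log p_{\mu,\pi}(X)]$. Each derivative of $\log p_{\mu,\pi}(x)$ in $\mu$ is a polynomial in $x-\mu_i$ weighted by the responsibilities $\psi_i\in[0,1]$, as in the computation of \cref{prop:hessian-block-identified}. Hence every derivative is bounded by a polynomial in $\norm{x}$, uniformly for $\mu$ in the compact set. Such polynomials are integrable against the Gaussian mixture $p^\star$, so dominated convergence lets us differentiate under the expectation to any order. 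This gives $f_\pi\in C^\infty$.

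\textbf{Step 2 (solution set and growth).} \cref{thm:exact-fixed-weight-fourth-order} gives $f_\pi(\mu) - f_\pi(\mu^\star) \ge c_4\norm{\mu-\mu^\star}^4$ on a neighborhood $U$. In particular $\mu^\star$ is the unique minimizer of $f_\pi$ in $U$, so the local solution set is $\mathcal{S}^\star = \{\mu^\star\}$. This has two consequences. First, $\dist(\mu,\mathcal{S}^\star) = \norm{\mu-\mu^\star}$, so~\eqref{eq:ddj-4th-order-growth} holds with $D_{\mathsf{lb}} = c_4$. Second, the constant-rank hypothesis is trivial on a single point. For completeness, \cref{thm:exact-grouped-ravine-geometry} identifies that rank as $nd - (n-m)d = md$, with kernel $\ker(\cavg)$, which will be the tangent space of the ravine.

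\textbf{Step 3 (conclude).} The hypotheses of \cref{cor:ddj-morse-ravine-input} now hold, giving the ravine and Assumption A. The main obstacle is the upgrade from $C^5$ to $C^\infty$, because the cited corollary is stated only at $C^5$. My first attempt is to follow the construction in \citet{davis2024gradient}: the ravine is built from the eigenvectors of $\nabla^2 f$ whose eigenvalues are bounded away from zero, via an implicit-function argument, and that argument loses a fixed number of derivatives. With $f_\pi\in C^\infty$, the construction therefore yields a $C^{k}$ ravine for every $k$, and uniqueness of the ravine near $\mu^\star$ makes these coincide. If that route proves delicate, I would fall back to stating the result as a $C^5$ ravine, which suffices for the $C^2$ projection required in \cref{thm:ddj-local-convergence-input,thm:ddj-ravine-step-input}.
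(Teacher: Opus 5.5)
Your proposal is correct and matches the paper's argument. Quartic growth from Theorem~\ref{thm:exact-fixed-weight-fourth-order} makes $\{\mu^\star\}$ the local solution set, so the constant-rank hypothesis is trivial and \cref{cor:ddj-morse-ravine-input} applies; your smoothness justification and your observation that the cited corollary only yields a $C^5$ ravine (so the $C^\infty$ claim rests on the ravine's construction being independent of the smoothness level) fill in details the paper's proof simply asserts.
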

\begin{proof}
	Theorem~\ref{thm:exact-fixed-weight-fourth-order} implies that \(\mu^\star\) is the unique minimizer of \(f_\pi\) in a sufficiently small neighborhood of \(\mu^\star\). Since \(f_\pi\) is \(C^\infty\), the local solution set is the singleton \(\{\mu^\star\}\), so the Hessian rank is constant on the solution set. Corollary~\ref{cor:ddj-morse-ravine-input} therefore applies.
\end{proof}

\subsection{Single-teacher geometry}
\label{app:sec:subsec:single-teacher-ravine}
In this section, we analyze the geometry of a simpler problem; namely, learning a single
Gaussian with an over-parameterized Gaussian mixture. We normalize the single teacher to
\[
	p^\star(x)=\phi(x\mid 0),
\]
which is without loss of generality by translation. The student weights \(\pi\in\Delta^{n-1}\) are fixed and strictly positive, and the corresponding fixed-weight minimizer is
\[
	\theta^\star=((0,\dots,0),\pi).
\]
In this case, the averaging operator from~\cref{def:grouped-averaging} reduces to the weighted mean.

\begin{proposition}[Single-teacher Hessian and tangent space]
	\label{prop:single-teacher-hessian}
	At the single-teacher minimizer \(\theta^\star=((0,\dots,0),\pi)\), the Hessian with respect to the student means satisfies
	\[
		\nabla^2_{\mu\mu}\loss(\theta^\star)
		=
		(\pi\pi^\T)\otimes I_d.
	\]
	Consequently, we have that $\rank(\grad^2_{\mu\mu}\loss(\theta^{\star})) = d$, with
	\[
		\ker\bigl(\nabla^2_{\mu\mu}\loss(\theta^\star)\bigr)
		=
		\Bigl\{
		u=(u_1,\dots,u_n)\in(\R^d)^n:
		\sum_{i=1}^n \pi_i u_i=0
		\Bigr\}.
	\]
\end{proposition}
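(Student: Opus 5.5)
The plan is to specialize Proposition~\ref{prop:hessian-block-identified} to the single-teacher setting, with $m=1$, $S_1=[n]$, $\pi_1^\star=1$ and $\mu_1^\star=0$. At $\theta^\star=((0,\dots,0),\pi)$ every student mean equals $0$, so $p_{\theta^\star}=\phi(\cdot\mid 0)=p^\star$, and $\theta^\star$ is a global minimizer inducing the exact clustering $S_1=[n]$. Every student density coincides with $\phi(x\mid 0)$, so the responsibilities are constant in $x$: $\psi_i(x;\theta^\star)=\pi_i$ (equivalently $\gamma_1\equiv 1$). The block formula of Proposition~\ref{prop:hessian-block-identified} then gives
\[
\nabla^2_{\mu_i\mu_j}\loss(\theta^\star)=\pi_i\pi_j\,\E_{X\sim\mathcal N(0,I_d)}[XX^\T]=\pi_i\pi_j I_d,
\]
that is, $\nabla^2_{\mu\mu}\loss(\theta^\star)=(\pi\pi^\T)\otimes I_d$. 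Equivalently, $\Gamma_{11}=I_d$.

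For the rank, we use $\rank(A\otimes B)=\rank(A)\rank(B)$. Since $\pi\neq 0$, $\pi\pi^\T$ has rank one, so the Hessian has rank $d$.

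For the kernel, note that for $u=(u_1,\dots,u_n)$ we have $(Hu)_i=\pi_i\sum_j \pi_j u_j$. Because $\pi_i>0$ for every $i$, $Hu=0$ holds exactly when $\sum_j\pi_ju_j=0$. This is also the $m=1$ case of Theorem~\ref{thm:exact-grouped-ravine-geometry}, so I would cite that theorem as an alternative route. As a consistency check, the kernel has dimension $(n-1)d$, matching $nd-d$.

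I expect no real obstacle here. The only point needing care is justifying that $\psi_i\equiv\pi_i$ at coincident means, and that the diagonal-block simplification in Proposition~\ref{prop:hessian-block-identified} applies because $p_{\theta^\star}=p^\star$. Both follow directly.
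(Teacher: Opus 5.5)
Your proof is correct and matches the paper's argument: at coincident means the responsibilities collapse to $\psi_i\equiv\pi_i$, Proposition~\ref{prop:hessian-block-identified} gives blocks $\pi_i\pi_j I_d$, and the rank and kernel follow from the Kronecker structure. The paper writes the kernel as $\ker(\pi^\T\otimes I_d)$ via the factorization $(\pi\otimes I_d)(\pi\otimes I_d)^\T$, which is equivalent to your blockwise computation of $(Hu)_i$. One small caution on your ``alternative route'': Theorem~\ref{thm:exact-grouped-ravine-geometry} relies on Lemma~\ref{lem:collapsed-hessian-pd}, whose nondegeneracy hypothesis (A1) fails when $\mu_1^\star=0$, so your direct observation $\Gamma_{11}=I_d\succ 0$ is the right justification if you go that way.
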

\begin{proof}
	At \(\theta^\star\), every student mean equals the teacher mean. Consequently,
	\[
		\psi_i(x;\theta^\star)
		=
		\frac{\pi_i\phi(x\mid 0)}{\sum_{k=1}^n \pi_k\phi(x\mid 0)}
		=
		\pi_i.
	\]
	Applying Proposition~\ref{prop:hessian-block-identified} with \(m=1\) therefore gives
	\[
		H_{ij}(\theta^\star)
		=
		\E_{X\sim p^\star}\!\big[\pi_i\pi_j XX^\T\big]
		=
		\pi_i\pi_j I_d,
	\]
	because \(X\sim\mathcal N(0,I_d)\) under \(p^\star\). Hence
	\[
		\nabla^2_{\mu\mu}\loss(\theta^\star)
		=
		(\pi\pi^\T)\otimes I_d = (\pi \otimes I_d)(\pi \otimes I_d)^{\T}.
	\]
	By standard properties of the Kronecker product, for any $u \in \R^{dn}$, we have
	\[
		(\pi \otimes I_d)^{\T} u = \sum_{i = 1}^{n} \pi_{i} u_{i} \implies
		\ker(\grad_{\mu\mu}^2 \loss(\theta^{\star})) = \ker(\pi^{\T} \otimes I_d) =
		\set*{u \in \R^{dn} \mid \sum_{i = 1}^{n} \pi_{i} u_{i} = 0}.
	\]
	The rank property is immediate from $\rank(A \otimes B) = \rank(A) \cdot \rank(B)$.
\end{proof}

For \(\mu=(\mu_1,\dots,\mu_n)\in(\R^d)^n\), define the weighted mean
\[
	\bar\mu:=\sum_{i=1}^n \pi_i\mu_i.
\]
The following Corollary is an easy consequence of~\cref{cor:single-teacher-bias-dispersion}.

\begin{corollary}[Single-teacher bias/dispersion decomposition]
	\label{cor:single-teacher-bias-dispersion}
	For every \(\mu \in \R^{dn}\),
	\[
		\sum_{i=1}^n \pi_i \norm{\mu_i-\mu^\star}_2^2
		=
		\norm{\bar\mu-\mu^\star}_2^2
		+
		\sum_{i=1}^n \pi_i \norm{\mu_i-\bar\mu}_2^2.
	\]
\end{corollary}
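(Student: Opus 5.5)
This is the $m=1$ specialization of \cref{cor:grouped-bias-dispersion}. (The text says the result follows from ``\cref{cor:single-teacher-bias-dispersion}'', which is circular; the intended reference is the clustered decomposition.) With a single teacher we have $S_1=[n]$ and $\widehat\pi_1=\sum_i\pi_i=1$. Hence $\bar\mu_1=\sum_i\pi_i\mu_i=\bar\mu$, and the bias term $\widehat\pi_1\norm{\bar\mu_1-\mu_1^\star}^2$ reduces to $\norm{\bar\mu-\mu^\star}^2$. Substituting these identifications into \eqref{eq:bias-dispersion-defn} gives the claimed identity.

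For a self-contained argument, I would repeat the direct computation. Write $\mu_i-\mu^\star=(\mu_i-\bar\mu)+(\bar\mu-\mu^\star)$ and expand the squared norm. Multiply by $\pi_i$ and sum over $i\in[n]$. This yields three terms:
\begin{itemize}
\item the dispersion $\sum_i\pi_i\norm{\mu_i-\bar\mu}^2$;
\item the bias term $\bigl(\sum_i\pi_i\bigr)\norm{\bar\mu-\mu^\star}^2=\norm{\bar\mu-\mu^\star}^2$, using $\pi\in\Delta^{n-1}$;
\item the cross term $2\ip{\sum_i\pi_i(\mu_i-\bar\mu),\,\bar\mu-\mu^\star}$.
\end{itemize}

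The cross term vanishes because $\sum_i\pi_i(\mu_i-\bar\mu)=\bar\mu-\bigl(\sum_i\pi_i\bigr)\bar\mu=0$. This is the only point needing care, and it uses $\sum_i\pi_i=1$ once more; this is the normalization that makes the weighted mean the correct centering. No step is a real obstacle. The identity is the weighted parallel-axis (bias–variance) decomposition and holds for all $\mu\in\R^{dn}$, with no locality assumption.
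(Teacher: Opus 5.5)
Your proposal is correct and matches the paper's route: the paper offers no separate proof and intends this as the $m=1$ case of \cref{cor:grouped-bias-dispersion}, and you are right that its ``consequence of'' reference is circular. Your direct argument (expand, use $\sum_i \pi_i = 1$ to get $\widehat\pi_1 = 1$, and note that the cross term vanishes) is exactly the computation in the paper's proof of that clustered decomposition.
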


\begin{proposition}[Quartic growth for fixed weights]
	\label{prop:single-teacher-quartic-general}
	Define the quantities
	\[
		\pi_{\min}:=\min_{i\in[n]}\pi_i,
		\qquad
		i_{\max}\in\arg\max_{i\in[n]}\norm{\mu_i}_2,
		\qquad
		\mu_{\max}:=\mu_{i_{\max}}.
	\]
	Then there exists a constant \(C>0\), depending only on the model parameters, such that
	\[
		\loss(\mu,\pi)
		\geq
		C\bigl(\pi_{\min}\norm{\mu_{\max}}_2^2\bigr)^2.
	\]
\end{proposition}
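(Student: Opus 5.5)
We reduce the statement to the identifiability estimate of \cref{thm:mo-identifiability-input}, specialized to $m = 1$: a single teacher $\mathcal{N}(0, I_d)$ with $\pi_1^\star = 1$ and trivial partition $S_1 = [n]$. Since every student belongs to $S_1$, the partition hypothesis holds automatically. The estimate \eqref{eq:id-mean-quartic-growth} then gives, whenever $\loss(\mu,\pi) \le \varepsilon_0$,
\[
	\sum_{i=1}^n \pi_i \norm{\mu_i}_2^2 \le C\sqrt{\loss(\mu,\pi)}.
\]
Lower bounding the left-hand side by the single term $i = i_{\max}$, namely $\pi_{i_{\max}}\norm{\mu_{\max}}^2 \ge \pi_{\min}\norm{\mu_{\max}}^2$, and squaring yields $\loss(\mu,\pi) \ge C^{-2}(\pi_{\min}\norm{\mu_{\max}}^2)^2$ in the small-loss regime. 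The separation and boundedness parts of \cref{assm:teacher-mixture} concern distinct teachers and the multi-teacher setting; for $m = 1$ we would re-check that the identifiability argument only needs a single component. If citing it feels unsafe here, the same inequality follows from a direct computation, sketched in the last paragraph.

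The remaining case is the large-loss regime $\loss(\mu,\pi) > \varepsilon_0$. Here the statement is not purely local: there is no neighborhood restriction, and $\norm{\mu_{\max}}$ can be arbitrarily large. We would prove that $\loss(\mu,\pi)$ grows at least like $\pi_{\min}\norm{\mu_{\max}}^2$, up to constants, for large $\norm{\mu_{\max}}$. Combined with an upper bound on $\pi_{\min}\norm{\mu_{\max}}^2$ that can be absorbed, this handles the case, though it needs care because it is quadratic rather than quartic.

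A cleaner route is a moment argument that works globally. Matching second moments,
\[
	\E_{p_{\mu,\pi}}[XX^\T] - \E_{p^\star}[XX^\T] = \sum_i \pi_i \mu_i\mu_i^\T,
\]
whose trace is $\sum_i\pi_i\norm{\mu_i}^2 \ge \pi_{\min}\norm{\mu_{\max}}^2$. We then lower bound the KL divergence by a discrepancy between these second moments. Pinsker gives only total variation, and second moments are unbounded, so instead we would use the Donsker–Varadhan variational formula
\[
	\loss \ge \E_{p^\star}[g] - \log \E_{p_{\mu,\pi}}[e^{g}]
\]
with a test function such as $g(x) = -t\,(\norm{x}^2 \wedge R)$ or a truncated quadratic, optimized over $t$. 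Plausibly this gives $\loss \gtrsim \min\{s^2, s\}$ with $s = \sum_i \pi_i\norm{\mu_i}^2$. The quadratic regime $s^2$ is exactly the claim. The linear regime $s$ requires $s \lesssim 1$, or else a constant fallback that is absorbed using the fact that $\pi_{\min}\norm{\mu_{\max}}^2 \le s$.

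The main obstacle is this global, large-deviation regime: getting a bound uniform in $\norm{\mu}$ with a constant depending only on the model parameters. The test-function calculation that produces $s^2$ for small $s$ and saturates gracefully for large $s$ is where the technical work lies. In the near-optimal regime, which is the one used downstream, the reduction to \eqref{eq:id-mean-quartic-growth} is immediate.
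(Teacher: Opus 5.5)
\textbf{The main gap.} You treat the statement as a global inequality over all $\mu$ and defer the large-loss regime to a Donsker--Varadhan argument. That step cannot succeed, because without a bound on the means the inequality is false.

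Take $n\ge 2$, put one student of weight $\pi_{\min}$ at $Re_1$ and the others at $0$. Then $p_{\mu,\pi}\ge(1-\pi_{\min})\phi(\cdot\mid 0)$ pointwise, so $\loss(\mu,\pi)\le\log\frac{1}{1-\pi_{\min}}$ for every $R$. Meanwhile $C(\pi_{\min}\norm{\mu_{\max}}^2)^2=C\pi_{\min}^2R^4\to\infty$. For $n=1$ it also fails: $\loss=\norm{\mu_1}^2/2$ is only quadratic.

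The same example refutes your target $\loss\gtrsim\min\{s^2,s\}$: $s\ge\pi_{\min}R^2$ is unbounded while the loss stays bounded, so no choice of test function works. A constant fallback also cannot be absorbed into an unbounded right-hand side.

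The missing ingredient is a restriction $\max_i\norm{\mu_i}\le D$. The paper encodes this in its constant: it invokes Lemma~B.5 of \citet{zhou2025global}, $\loss(\mu,\pi)\ge D_{\max}^{-4}\bigl(\sum_i\pi_i\ip{\mu_i,v}^2\bigr)^2$, with $v=\mu_{\max}/\norm{\mu_{\max}}$. It then keeps only the $i_{\max}$ term, which is the same one-term reduction you use. By the example above, any valid version of that lemma must depend on such a bound. Under this restriction your hard case is trivial: if $\loss>\varepsilon_0$, then $\loss\ge\varepsilon_0 D^{-4}(\pi_{\min}\norm{\mu_{\max}}^2)^2$, since $\pi_{\min}\norm{\mu_{\max}}^2\le D^2$.

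\textbf{The small-loss step also needs repair.} \cref{thm:mo-identifiability-input} assumes that every student lies in the local neighborhood of its teacher. This does not follow from $S_1=[n]$. The example above shows that $\sum_i\pi_i\norm{\mu_i}^2\le C\sqrt{\loss}$ fails without it whenever $\log\frac{1}{1-\pi_{\min}}<\varepsilon_0$.

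With bounded means you can fix this by splitting by distance rather than by loss level. On $\{\max_i\norm{\mu_i}\le D\}$ the loss is continuous. By identifiability of Gaussian location mixtures with positive weights, it vanishes only at $\mu=0$. Hence it is bounded below by a positive constant outside any neighborhood of $0$, and inside a small neighborhood the local theorem applies.

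Finally, \cref{assm:teacher-mixture} fails for a teacher at the origin. (A1) needs $M^\star$ to have a nonzero eigenvalue, and (A2) needs $\norm{\mu_1^\star}\ge D_{\min}>0$. So you must first translate the whole problem, which is allowed because the KL loss is translation invariant and (A3) is vacuous for $m=1$. This last point is a fixable technicality, unlike the global regime.
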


\begin{proof}[Proof of Proposition~\ref{prop:single-teacher-quartic-general}]
	From~\citet[Lemma B.5]{zhou2025global}, which is applicable as long as the weights are bounded, there exists a constant \(D_{\max}>0\) such that for every \(v\in\R^d\) with \(\norm{v}_2=1\),
	\[
		\loss(\mu,\pi)
		\geq
		D_{\max}^{-4}
		\left(
		\sum_{i=1}^n \pi_i \langle \mu_i,v\rangle^2
		\right)^2.
	\]
	If \(\mu_{\max}=0\), then every \(\mu_i=0\), so both sides of the desired inequality vanish and there is nothing to prove. Hence we may assume \(\mu_{\max}\neq 0\) in what follows.
	Choosing
	\[
		v:=\frac{\mu_{\max}}{\norm{\mu_{\max}}_2},
	\]
	we deduce the following inequality:
	\[
		\sum_{i=1}^n \pi_i \langle \mu_i,v\rangle^2
		\geq
		\pi_{\min}\max_{i\in[n]}\langle \mu_i,v\rangle^2
		\geq
		\pi_{\min}\langle \mu_{\max},v\rangle^2
		=
		\pi_{\min}\norm{\mu_{\max}}_2^2.
	\]
	Substituting this bound into the previous display gives
	\[
		\loss(\mu,\pi)
		\geq
		D_{\max}^{-4}
		\bigl(\pi_{\min}\norm{\mu_{\max}}_2^2\bigr)^2.
	\]
	The claim follows with \(C:=D_{\max}^{-4}\).
\end{proof}

\section{Geometry under weight perturbation}
\label{app:reference-weight-perturbation}

In this section, we show that small mismatches in the grouped student weight vector $\widehat{\pi} \in \Delta^{m-1}$ (relative
to the ground truth weights, $\pi^{\star} \in \Delta^{m-1}$) induce small perturbations in the loss function and
its derivatives. Assuming that $\loss(\mu, \pi) \leq \varepsilon_0$ (as it appears in~\cref{thm:mo-identifiability-input})
and $\pi \in \intr{\Delta^{n-1}}$, so that every coordinate $\pi_{i} > 0$, we define the following quantities:
\begin{subequations}
	\begin{align}
		\widehat{\pi}_\ell    & := \sum_{i\in S_\ell}\pi_i; \label{eq:grouped-weight}                                                                 \\
		\Delta_{\pi}^{(\ell)} & := \frac{\abs{\widehat{\pi}_{\ell} - \pi_{\ell}^{\star}}}{\pi_{\ell}^{\star}}; \label{eq:intra-group-weight-mismatch} \\
		\Delta_{\pi}          & := \max_{\ell\in[m]} \Delta_{\pi}^{\ell}. \label{eq:worst-case-group-weight-mismatch}
	\end{align}
\end{subequations}
Since $\pi \in \intr{\Delta^{n-1}}$, it is immediate that $\widehat{\pi} \in \intr{\Delta^{m-1}}$.

The basis of our comparison is a \emph{reference} loss, in which the weight vector $\pi$ is replaced by
a scaled version $\bar{\pi}$ that matches the ground truth $\pi^{\star}$ over each group.

\begin{definition}[Reference group weights]
	\label{def:reference-exact-grouped-weights}
	For any pair $\pi \in \intr{\Delta^{n-1}}$, we define
	the \emph{reference group weights} \( \bar{\pi} \in \intr{\Delta^{n-1}} \) as follows:
	\begin{equation}
		\bar{\pi}_i := \pi_{i} \cdot \frac{\pi_{\ell}^{\star}}{\sum_{j \in S_{\ell}} \pi_{j}}
		= \pi_{i} \cdot \frac{\pi_{\ell}^{\star}}{\widehat{\pi}_{\ell}}, \quad
		\text{for all $i \in S_{\ell}$ and $\ell \in [m]$.}
		\label{eq:reference-group-weights}
	\end{equation}
  In particular, we have $\sum_{i \in S_{\ell}} \bar{\pi}_i = \pi_{\ell}^{\star}$.
\end{definition}
The next Lemma bounds the $\ell_{1}$ distance between $\pi$ and $\bar{\pi}$ in terms of worst-case group mismatch.
\begin{lemma}[Distance to reference group weights] \label{lemma:distance-exact-group-weights}
	For $\bar{\pi}$ defined in~\cref{def:reference-exact-grouped-weights}, it holds that
	\begin{equation}
		\label{eq:distance-to-group-ref-weights}
		\norm{\bar{\pi} - \pi}_1 = \sum_{\ell = 1}^{m} \abs{\widehat{\pi}_{\ell} - \pi_{\ell}^{\star}} \leq \Delta_{\pi}.
	\end{equation}
\end{lemma}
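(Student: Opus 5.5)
The plan is a direct computation, organized cluster by cluster using the partition $[n]=\bigsqcup_{\ell=1}^m S_\ell$. Fix $\ell$ and $i\in S_\ell$. By \cref{def:reference-exact-grouped-weights},
\[
	\bar\pi_i-\pi_i=\pi_i\Bigl(\frac{\pi_\ell^\star}{\widehat\pi_\ell}-1\Bigr)=\frac{\pi_i}{\widehat\pi_\ell}\,(\pi_\ell^\star-\widehat\pi_\ell).
\]
The sign of $\pi_\ell^\star-\widehat\pi_\ell$ depends only on $\ell$, not on $i$. So within a cluster all the differences have the same sign, and the absolute values sum exactly:
\[
	\sum_{i\in S_\ell}\abs{\bar\pi_i-\pi_i}=\frac{\abs{\pi_\ell^\star-\widehat\pi_\ell}}{\widehat\pi_\ell}\sum_{i\in S_\ell}\pi_i=\abs{\widehat\pi_\ell-\pi_\ell^\star},
\]
using \eqref{eq:grouped-weight}. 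Here $\widehat\pi_\ell>0$ because $\pi\in\intr{\Delta^{n-1}}$. Summing over $\ell\in[m]$ gives the equality in \eqref{eq:distance-to-group-ref-weights}; the disjointness of the $S_\ell$ is what makes this sum equal $\norm{\bar\pi-\pi}_1$.

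For the inequality, I apply \eqref{eq:intra-group-weight-mismatch} to each term: $\abs{\widehat\pi_\ell-\pi_\ell^\star}=\pi_\ell^\star\Delta_\pi^{(\ell)}\le\pi_\ell^\star\Delta_\pi$ by \eqref{eq:worst-case-group-weight-mismatch}. Summing over $\ell$ and using $\sum_\ell\pi_\ell^\star=1$, since $\pi^\star\in\Delta^{m-1}$, yields $\sum_\ell\abs{\widehat\pi_\ell-\pi_\ell^\star}\le\Delta_\pi$.

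There is no real obstacle. The one step needing care is the sign observation, because it turns the triangle inequality within each cluster into an equality. The only other point to check is that each $S_\ell$ is nonempty with $\widehat\pi_\ell>0$, so that $\bar\pi$ is well defined; this holds in the standing setting, where the partition is exact and $\pi$ lies in the interior of the simplex.
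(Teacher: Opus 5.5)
Your proposal is correct and follows essentially the same computation as the paper. Like the paper, you factor $\abs{\pi_\ell^\star/\widehat\pi_\ell - 1}$ out of each cluster sum; your sign observation amounts to $\pi_i \ge 0$. You then use $\sum_{i\in S_\ell}\pi_i = \widehat\pi_\ell$ and bound $\sum_\ell \pi_\ell^\star \Delta_\pi^{(\ell)} \le \Delta_\pi$ via $\sum_\ell \pi_\ell^\star = 1$, a step you state explicitly where the paper leaves it implicit.
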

\begin{proof}
	Expanding the definition of $\bar{\pi}$, we obtain
	\begin{align*}
		\norm{\bar{\pi}-\pi}_1
		 & =
		\sum_{\ell = 1}^{m} \sum_{i \in S_{\ell}} \abs{\bar{\pi}_i - \pi}                                                                 \\
		 & = \sum_{\ell = 1}^{m} \sum_{i \in S_{\ell}} \abs[\Big]{\pi_{i} \Big(\tfrac{\pi_{\ell}^{\star}}{\widehat{\pi}_{\ell}} - 1\Big)} \\
		 & = \sum_{\ell = 1}^{m} \abs[\big]{\tfrac{\pi_{\ell}^{\star}}{\widehat{\pi}_{\ell}} - 1} \sum_{i \in S_{\ell}} \pi_{i}           \\
		 & = \sum_{\ell = 1}^{m} \abs{\widehat{\pi}_{\ell} - \pi_{\ell}^{\star}}                                                          \\
		 & = \sum_{\ell = 1}^{m} \Delta_{\pi}^{(\ell)} \cdot \pi_{\ell}^{\star}                                                           \\
		 & \leq \Delta_{\pi},
	\end{align*}
	where the second equality follows from the fact that $\frac{\widehat{\pi}_{\ell}}{\pi_{\ell}^{\star}}$ is constant throughout $S_{\ell}$,
	the fourth equality follows from~\eqref{eq:grouped-weight},
	and the last two lines follow from~\cref{eq:intra-group-weight-mismatch,eq:worst-case-group-weight-mismatch}.
\end{proof}

Equipped with~\cref{lemma:distance-exact-group-weights}, we derive a uniform
bound between $\loss(\mu, \pi)$ and the ``reference'' $\loss(\mu, \bar{\pi})$ that also extends to their first two derivatives after
suitable rescaling.

\begin{proposition}[Loss perturbation]
	\label{proposition:loss-perturbation}
	Fix a compact \(U\subset \R^{dn} \) and suppose that \(\Delta_{\pi} \leq \tfrac{1}{2}\). Define
	\begin{equation}
		M_U:=\max_{\mu\in U}\max_{i\in[n]}\|\mu_i\|_2,
		\quad
		m_1:=\expec[X\sim p^\star]{\norm{X}}
		\;\; \text{and} \;\;
		m_2:=\expec[X \sim p^{\star}]{\norm{X}^2}.
	\end{equation}
	Let $c_{0} := \frac{1}{2} \pi_{\min}^{\star}$ and $C_{0} = 2 (1 + \tfrac{1}{\pi_{\min}^{\star}})$.
	There are constants $\const^{(0)}$, $\const^{(1)}$ and $\const^{(2)}$ such that
	\begin{subequations}
		\begin{align}
			\sup_{\mu\in U}\abs{\loss(\mu,\pi)-\loss(\mu,\bar\pi)}                            & \leq \const^{(0)}\Delta_\pi; \label{eq:loss-perturbation}    \\
			\sup_{\mu\in U}\norm{\grad_{\mu} \loss(\mu,\pi) - \grad_{\mu} \loss(\mu,\bar\pi)} & \leq \const^{(1)}\Delta_\pi; \label{eq:grad-perturbation}    \\
			\sup_{\mu\in U} \opnorm{\grad_{\mu\mu}^2 \loss(\mu,\pi) - \grad_{\mu\mu}^2 \loss(\mu,\bar\pi)}
			                                                                                  & \leq \const^{(2)}\Delta_\pi. \label{eq:hessian-perturbation}
		\end{align}
	\end{subequations}
	In particular, the constants are given by
	\begin{equation}
		\const^{(0)} = \frac{1}{c_0}, \; \const^{(1)} = C_{0}\sqrt{2n(m_2 + M_{U}^2)}, \;
    \const^{(2)} = C_{0} n (1 + 2 (2 + C_0 \Delta_{\pi}) (m_2 + M_{U}^2)).
		\label{eq:perturbation-constants}
	\end{equation}
\end{proposition}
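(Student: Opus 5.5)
The plan is to work pointwise in $x$ and reduce all three estimates to a single comparison between the student densities $p_{\mu,\pi}(x)$ and $p_{\mu,\bar\pi}(x)$. Write $\rho_\ell := \pi_\ell^\star/\widehat\pi_\ell$, so that $\bar\pi_i = \rho_\ell \pi_i$ for $i\in S_\ell$. From $\Delta_\pi \le \tfrac12$ we get $\widehat\pi_\ell \in [\tfrac12\pi^\star_\ell, \tfrac32\pi^\star_\ell]$. Hence
\[
|\rho_\ell - 1| = \frac{|\pi^\star_\ell - \widehat\pi_\ell|}{\widehat\pi_\ell} \le 2\Delta_\pi,
\]
and $\rho_\ell \in [\tfrac23, 2]$.

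Decomposing $p_{\mu,\pi} = \sum_\ell q_\ell$ with $q_\ell(x) := \sum_{i\in S_\ell}\pi_i\phi(x\mid\mu_i)$, we have $p_{\mu,\bar\pi} = \sum_\ell \rho_\ell q_\ell$. This gives the key pointwise bound
\[
|p_{\mu,\pi}(x) - p_{\mu,\bar\pi}(x)| \le 2\Delta_\pi\, p_{\mu,\pi}(x),
\]
and similarly with the roles of the two densities swapped.

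\textbf{Loss estimate.} Since $\loss(\mu,\pi)-\loss(\mu,\bar\pi) = \E_{p^\star}[\log p_{\mu,\bar\pi}(X) - \log p_{\mu,\pi}(X)]$, it suffices to bound the log-ratio. The ratio $p_{\mu,\bar\pi}/p_{\mu,\pi}$ is a convex combination of the $\rho_\ell$, so it lies in $[\min_\ell\rho_\ell,\max_\ell\rho_\ell]$. Therefore $|\log(p_{\mu,\bar\pi}/p_{\mu,\pi})| \le \max_\ell|\log\rho_\ell|$. Lemma~\ref{lemma:log-lipschitz}, applied with $\zeta = \min(\widehat\pi_\ell,\pi^\star_\ell)\ge \tfrac12\pi^\star_{\min} = c_0$, gives
\[
|\log\pi^\star_\ell - \log\widehat\pi_\ell| \le \frac{|\widehat\pi_\ell-\pi^\star_\ell|}{c_0} \le \frac{\pi^\star_\ell\Delta_\pi}{c_0} \le \frac{\Delta_\pi}{c_0}.
\]
This yields \eqref{eq:loss-perturbation} with $\const^{(0)}=1/c_0$, uniformly in $\mu$; compactness of $U$ is not even needed here.

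\textbf{Gradient estimate.} By Lemma~\ref{lemma:mu-gradient}, the difference of the $i$th gradient blocks is $\E_{p^\star}[(\psi_i(X;\mu,\pi)-\psi_i(X;\mu,\bar\pi))(\mu_i-X)]$. For $i\in S_\ell$,
\[
\psi_i(\cdot\,;\mu,\bar\pi) = \psi_i(\cdot\,;\mu,\pi)\cdot \frac{\rho_\ell\, p_{\mu,\pi}}{p_{\mu,\bar\pi}}.
\]
The factor $\rho_\ell\, p_{\mu,\pi}/p_{\mu,\bar\pi}$ is a ratio of $\rho_\ell$ to a convex combination of the $\rho_k$, so it differs from $1$ by at most $C_0\Delta_\pi$, using $\rho\ge\tfrac23$ and $|\rho-1|\le 2\Delta_\pi$. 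The constant $C_0 = 2(1+1/\pi^\star_{\min})$ leaves room for bookkeeping with the cruder bound $\widehat\pi_\ell\ge c_0$.

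It follows that $|\psi_i^\pi - \psi_i^{\bar\pi}| \le C_0\Delta_\pi\,\psi_i^\pi \le C_0\Delta_\pi$. Applying Cauchy--Schwarz,
\[
\E\|\mu_i - X\| \le \sqrt{2(M_U^2+m_2)},
\]
and summing the squared blocks over $n$ indices gives the $\sqrt{n}$ factor in $\const^{(1)}$.

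\textbf{Hessian estimate.} Use the general block formula derived inside the proof of Proposition~\ref{prop:hessian-block-identified}:
\[
H_{ij} = -\E\bigl[\psi_i(\mathbf 1_{i=j}-\psi_j)(X-\mu_i)(X-\mu_j)^\T\bigr] + \mathbf 1_{i=j}\E[\psi_i]I_d.
\]
Differences of products $\psi_i\psi_j$ are handled by the telescoping bound
\[
|\psi_i\psi_j-\bar\psi_i\bar\psi_j| \le |\psi_i-\bar\psi_i| + \bar\psi_i|\psi_j-\bar\psi_j|,
\]
where $\bar\psi_i \le (1+C_0\Delta_\pi)\psi_i$. This is the source of the $(2+C_0\Delta_\pi)$ factor. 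Each block difference is then at most
\[
C_0\Delta_\pi\bigl(1 + (2+C_0\Delta_\pi)\,\E\|X-\mu_i\|\|X-\mu_j\|\bigr),
\]
and $\E\|X-\mu_i\|\|X-\mu_j\| \le 2(m_2+M_U^2)$. Lemma~\ref{lemma:block-matrix-spectral-norm} converts the blockwise bound into the operator-norm bound with the factor $n$.

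\textbf{Main obstacle.} The main technical point is the uniform multiplicative comparison of responsibilities, $|\psi_i^{\bar\pi}/\psi_i^\pi - 1|\lesssim\Delta_\pi$, which must hold for all $x$. A naive additive bound would fail in the tails, and the convex-combination structure of the ratio is exactly what avoids this. The rest is careful bookkeeping of constants to match \eqref{eq:perturbation-constants}.
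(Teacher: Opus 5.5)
Your proposal is correct and follows essentially the same route as the paper. Both arguments write the two student densities through the cluster decomposition (your unnormalized $q_\ell$, with $p_{\mu,\bar\pi}=\sum_\ell \rho_\ell q_\ell$), derive the uniform multiplicative comparison $|\psi_i^{\pi}-\psi_i^{\bar\pi}|\le C_0\Delta_\pi\,\psi_i^{\pi}$, and finish with Cauchy--Schwarz and Lemma~\ref{lemma:block-matrix-spectral-norm}; bounding the log-ratio directly by $\max_\ell|\log\rho_\ell|$, instead of by the additive difference over the minimum density, is only a cosmetic variation.

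One bookkeeping point concerns the diagonal blocks $i=j$. There, your telescoping bound applied literally yields the factor $3+C_0\Delta_\pi$ rather than $2+C_0\Delta_\pi$. To land exactly on $\const^{(2)}$, use the factorization $|\psi_i(1-\psi_i)-\bar\psi_i(1-\bar\psi_i)|=|\psi_i-\bar\psi_i|\,|1-\psi_i-\bar\psi_i|\le C_0\Delta_\pi$ on those blocks. The paper's own chain needs the same care, since it silently uses $\psi_i^{\bar\pi}+\psi_j^{\bar\pi}\le 1$, which holds only for $i\neq j$.
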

\begin{proof}
	Since $\loss(\mu, \pi) = \expec[X \sim p^{\star}]{\log(p^{\star}(X) / p_{\mu, \pi}(X))}$, we
	control deviations by comparing the densities induced by $\pi$ and $\bar{\pi}$.
	For $\mu \in \R^{dn}$ and corresponding partition $[n] = \bigsqcup_{\ell} S_{\ell}$, let
	\begin{equation}
		q_\ell(x;\mu):=\frac{1}{\widehat\pi_\ell}\sum_{i\in S_\ell}\pi_i\phi(x\mid \mu_i).
		\;\; \text{for $\ell \in [m]$}
		\label{eq:group-partition}
	\end{equation}
	With~\eqref{eq:group-partition} at hand, we rewrite both densities as
	\begin{align}
		p_{\mu,\pi}(x)
		 & =
		\sum_{\ell=1}^m \widehat\pi_\ell q_\ell(x;\mu), \label{eq:mismatched-density-q} \\
		p_{\mu,\bar{\pi}}(x)
		 & =
		\sum_{\ell=1}^m \pi_\ell^\star q_\ell(x;\mu). \label{eq:matched-density-q}
	\end{align}
	Using the preceding decomposition and Jensen's inequality, we obtain
	\begin{equation}
		\abs{\loss(\mu, \pi) - \loss(\mu, \bar{\pi})} =
		\abs[\big]{\expec[X \sim p^{\star}]{\log\left(\tfrac{p_{\mu, \bar{\pi}}(X)}{p_{\mu, \pi}(X)}\right)}} \leq
		\expec[X \sim p^{\star}]{\abs*{\log\left(\tfrac{p_{\mu, \bar{\pi}}(X)}{p_{\mu, \pi}(X)}\right)}},
		\label{eq:log-diff}
	\end{equation}
	We now bound the expression inside the expectation in~\eqref{eq:log-diff}. For
	any finite $X$, \cref{lemma:log-lipschitz} yields

	\begin{equation}
		\abs{\log(\tfrac{p_{\mu, \bar{\pi}}(X)}{p_{\mu, \pi}(X)})}
		\leq \frac{\abs{p_{\mu, \pi}(X) - p_{\mu, \bar{\pi}}(X)}}{\min\set*{
				p_{\mu, \pi}(X), p_{\mu, \bar{\pi}}(X)
			}}
		\label{eq:log-density-ratio-bound}
	\end{equation}
    
	To further bound the right-hand side in~\eqref{eq:log-density-ratio-bound}, we proceed as follows:
	\begin{itemize}
		\item We lower bound both densities uniformly. Indeed, since $\Delta_{\pi} < \tfrac{1}{2}$ by assumption,
		      \begin{equation}
			      \widehat{\pi}_{\ell} \geq (1 - \Delta_{\pi}^{\ell}) \cdot \pi_{\ell}^{\star}
			      \geq \frac{1}{2} \pi_{\ell}^{\star}.
			      \label{eq:adjusted-density-bound}
		      \end{equation}
		      From the above and~\cref{eq:mismatched-density-q,eq:matched-density-q}, we deduce the lower bounds
		      \begin{align}
			      p_{\mu, \bar{\pi}}(X) & =
			      \sum_{\ell = 1}^{m} \pi_{\ell}^{\star} q_{\ell}(X; \mu) \geq
			      \pi_{\min}^{\star} \sum_{\ell=1}^{m} q_{\ell}(X; \mu); \label{eq:reference-density-lb-q} \\
			      p_{\mu, \pi}(X)       & =
			      \sum_{\ell = 1}^{m} \widehat{\pi}_{\ell} q_{\ell}(X; \mu) \geq
			      \frac{1}{2} \pi_{\min}^{\star} \sum_{\ell=1}^{m} q_{\ell}(X; \mu).
			      \label{eq:mismatched-density-lb-q}
		      \end{align}
		\item We upper bound their difference for a fixed $X$. Indeed,~\cref{lemma:distance-exact-group-weights} supplies the bound
		      \begin{align}
			      \abs{p_{\mu,\pi}(X)-p_{\mu,\bar\pi}(X)}
			       & =
			      \abs[\Big]{\sum_{\ell=1}^m (\widehat\pi_\ell-\pi_\ell^\star)q_\ell(X;\mu)} \notag \\
			       & \le
			      \sum_{\ell=1}^m \abs{\widehat{\pi}_\ell-\pi_{\ell}^\star} q_\ell(X;\mu)   \notag  \\
			       & \leq
			      \Delta_{\pi} \sum_{\ell=1}^m q_\ell(X;\mu).
			      \label{eq:density-difference-under-weight-mismatch}
		      \end{align}
	\end{itemize}
	Plugging the previous estimates into~\eqref{eq:log-density-ratio-bound} and taking
	expectations, noting $\norm{X} < \infty$ a.s., yields
	\[
		\expec[X \sim p^{\star}]{\abs{\log(p_{\mu, \pi}(X)) - \log(p_{\mu, \bar{\pi}}(X))}} \leq \frac{2 \Delta_{\pi}}{\pi_{\min}^{\star}}.
	\]
	Substituting into~\eqref{eq:log-diff} completes the proof of~\eqref{eq:loss-perturbation}.

	We now turn to the gradient bound. Fix $i \in S_{\ell}$; by the definition of $\bar{\pi}_i$ and~\eqref{eq:responsibilities}, we have
	\[
		\psi_i(x;\mu, \pi) = \frac{\pi_i\phi(x\mid \mu_i)}{p_{\mu,\pi}(x)},
		\quad
		\psi_i(x;\mu,\bar{\pi}) = \frac{\bar\pi_i\phi(x\mid \mu_i)}{p_{\mu,\bar\pi}(x)}
		=
		\frac{\pi_\ell^\star}{\widehat\pi_\ell} \cdot \frac{p_{\mu, \pi}(x)}{p_{\mu,\bar\pi}(x)} \cdot
		\psi_{i}(x; \mu, \pi).
	\]
	This implies the following bound on the difference between responsibilities:
	\begin{align*}
		\abs{\psi_{i}(X; \mu, \pi) - \psi_{i}(X; \mu, \bar{\pi})} & =
		\psi_{i}(X; \mu, \pi) \cdot \abs[\Big]{
			1 - \frac{\pi_{\ell}^{\star}}{\widehat{\pi}_{\ell}} \cdot \frac{p_{\mu, \pi}(X)}{p_{\mu, \bar{\pi}}(X)}
		}                                                                \\
		                                                          & =
		\psi_{i}(X; \mu, \pi) \cdot \abs[\Big]{
			\frac{\widehat{\pi}_{\ell} p_{\mu, \bar{\pi}}(X) - \pi_{\ell}^{\star} p_{\mu, \pi}(X)}{
				\widehat{\pi}_{\ell} p_{\mu, \bar{\pi}}(X)
			}
		}                                                                \\
		                                                          & \leq
		2\,\psi_{i}(X; \mu, \pi) \cdot \abs[\Big]{
			\frac{\widehat{\pi}_{\ell} p_{\mu, \bar{\pi}}(X) - \pi_{\ell}^{\star} p_{\mu, \pi}(X)}{
				{\pi}_{\ell}^{\star} p_{\mu, \bar{\pi}}(X)
			}
		}                                                                \\
		                                                          & \leq
		2\, \psi_{i}(X; \mu, \pi) \cdot \left(
		\Delta_{\pi}^{(\ell)} +
		\frac{\abs{p_{\mu, \pi}(X) - p_{\mu, \bar{\pi}}(X)}}{p_{\mu, \bar{\pi}}(X)}
		\right)                                                          \\
		                                                          & \leq
		2\, \psi_{i}(X; \mu, \pi) \cdot \left(
		\Delta_{\pi}^{(\ell)} + \frac{\Delta_{\pi}}{\pi_{\min}^{\star}}
		\right)                                                          \\
		                                                          & \leq
		2\,\Delta_{\pi} \left(1 + \frac{1}{\pi_{\min}^{\star}}\right)    \\
		                                                          & =
                                                              C_{0} \Delta_{\pi},
	\end{align*}
	where the first inequality follows from the bound~\eqref{eq:adjusted-density-bound},
	the second inequality follows from the definition of $\Delta_{\pi}^{(\ell)}$,
	the penultimate inequality follows from~\eqref{eq:density-difference-under-weight-mismatch}
	and~\eqref{eq:reference-density-lb-q},
	and the last inequality follows from $\Delta_{\pi}^{(\ell)} \leq \Delta_{\pi}$ and the fact
	that $\set{\psi_{i}(X; \mu, \pi)}_{i=1}^n \in \Delta^{n-1}$.

	By the gradient identity furnished by~\cref{lemma:mu-gradient}, we have
	\begin{align*}
		\norm{\grad_{\mu_i}\loss(\mu,\pi) - \grad_{\mu_{i}}\loss(\mu,\bar{\pi})}
		 & =
		\norm{\expec[X\sim p^\star]{(\psi_i(X;\mu, \pi) - \psi_{i}(X; \mu, \bar{\pi})) \cdot (X-\mu_i)}}     \\
		 & \leq
    C_{0} \Delta_{\pi} \cdot \Big(\expec[X \sim p^{\star}]{\norm{X - \mu_{i}}^2}\Big)^{1/2} \\
		 & \leq
     C_{0} \Delta_{\pi} \sqrt{2(m_{2} + M_{U}^2)},
	\end{align*}
	where the penultimate inequality follows from Cauchy-Schwarz and the fact that
	\begin{align*}
		\big(\expec[X \sim p^{\star}]{\norm{X - \mu_{i}}^2})^{1/2} & \leq
		\big(2 \expec[X \sim p^{\star}]{\norm{X}^2} + 2 \norm{\mu_{i}}^2\big)^{1/2}                                      \\
		                                                           & \leq 2^{1/2} \cdot \big(m_{2} + M_{U}^2\big)^{1/2}.
	\end{align*}
	Taking the Euclidean norm over the $n$ blocks of the gradient proves~\eqref{eq:grad-perturbation}.

	Finally, let \(H^\pi_{ij}(\mu)\) and \(H^{\bar\pi}_{ij}(\mu)\) denote the \((i,j)\)
	blocks of \(\grad^2_{\mu\mu}\loss(\mu,\pi)\) and \(\grad^2_{\mu\mu}\loss(\mu,\bar\pi)\).
	By the derivation in the proof of Proposition~\ref{prop:hessian-block-identified}, we obtain
	\[
		H^\pi_{ij}(\mu)
		=
		-\expec{\psi_i(X; \mu, \pi)(\delta_{ij}-\psi_j^\pi)
			(X-\mu_i)(X-\mu_j)^\T
		}
		+
		\delta_{ij}\expec{\psi_i(X; \mu, \pi)}I_d.
	\]
	The formula for \(H^{\bar\pi}_{ij}(\mu)\) follows \emph{mutatis-mutandis}. Writing
	$\psi_{i}^{\pi} := \psi_{i}(X; \mu, \pi)$ for brevity, we have
	\begin{align*}
		\abs{\psi_i^{\pi}(\delta_{ij}-\psi_j^\pi)-\psi_i^{\bar\pi}(\delta_{ij}-\psi_j^{\bar\pi})}
		 & \leq
		\delta_{ij} \abs{\psi_{i}^{\pi} - \psi_{i}^{\bar{\pi}}} + \abs{
			\psi_{i}^{\pi} \psi_{j}^{\pi} - \psi_{i}^{\bar{\pi}} \psi_{j}^{\bar{\pi}}
		}                                                                \\
		 & \leq
		\abs{\psi_{i}^{\pi} - \psi_{i}^{\bar{\pi}}} +
		\psi_j^{\pi} \abs{\psi_{i}^{\pi} - \psi_{i}^{\bar{\pi}}} +
		\psi_{i}^{\bar{\pi}} \abs{\psi_{j}^{\pi} - \psi_{j}^{\bar{\pi}}} \\
		 & \leq
		C_{0} \Delta_{\pi} \left(
		1 + \psi_{j}^{\bar{\pi}} + \psi_{i}^{\bar{\pi}} + \psi_{j}^{\bar{\pi}} - \psi_{j}^{\pi}
		\right)                                                          \\
		 & \leq
		C_{0} \Delta_{\pi} \left(2 + C_{0} \Delta_{\pi}\right) =: C_{1} \Delta_{\pi},
	\end{align*}
	using the fact that $\set{\psi_{i}^{\bar{\pi}}}_{i=1}^n \in \Delta^{n-1}$ in the last inequality.
	Therefore, we deduce
	\begin{align*}
		\opnorm{H^\pi_{ij}(\mu)-H^{\bar\pi}_{ij}(\mu)} & \leq
		C_{1} \Delta_{\pi}
		\expec{\norm{X-\mu_i}\norm{X-\mu_j}} + \delta_{ij} \expec{\abs{\psi_{i}^{\pi} - \psi_{i}^{\bar{\pi}}}}          \\
		                                               & \leq
		C_{1} \Delta_{\pi} (\expec{\norm{X - \mu_{i}}^2})^{1/2} (\expec{\norm{X - \mu_j}^2})^{1/2} + C_{0} \Delta_{\pi} \\
		                                               & \leq
		C_{0} \Delta_{\pi} + 2C_{1} \Delta_{\pi} \left(m_{2} + M_{U}^2\right) \\
                                                   &=
    C_0 \Delta_{\pi} \left(
      1 + 2(m_2 + M_{U}^2) (2 + C_{0} \Delta_{\pi})
    \right),
	\end{align*}
	again using Cauchy-Schwarz.
	Invoking~\cref{lemma:block-matrix-spectral-norm} supplies
	the bound for the full Hessian.
\end{proof}
\Cref{proposition:loss-perturbation} suggests that $\loss(\mu, \pi)$ tracks
the reference objective $\loss(\mu, \bar{\pi})$
up to additive error $\Delta_{\pi}$. However, its optimal value is an order of magnitude closer to $\loss(\theta^{\star})$.
We record this bound in the next Corollary.
\begin{corollary}[Optimal value bound of mismatched loss]
	\label{corollary:mismatched-local-floor}
	Let \(U\) be a compact neighborhood of \(\mu^\star\) contained in the local identified regime, and define
	\[
		\loss_{\pi}^{\dag} := \min_{\mu \in U} \loss(\mu,\pi).
	\]
	Then there exists a neighborhood $V \subset \intr{\Delta^{m-1}}$ of $\pi^{\star}$
	and a constant $\cfloor > 0$ satisfying
	\begin{equation}
		\label{eq:mismatched-local-floor}
		\loss_{\pi}^{\dag} - \loss(\theta^{\star}) \leq \cfloor \Delta_{\pi}^2, \;\;
		\text{for all $\pi \in \set{\pi \in \Delta^{n-1} \mid
					\widehat{\pi} \in V
				}$}.
	\end{equation}
\end{corollary}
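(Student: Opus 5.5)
The plan is to reduce the mismatched problem to the collapsed exact-parameterized model with weights $\widehat{\pi}$, and then use smoothness of that model in its weights around $\pi^{\star}$. The key observation is that on the local identified regime the floor $\loss_{\pi}^{\dag}$ is bounded above by the loss at any test point. Our test point collapses every student in $S_{\ell}$ onto a common location $\nu_{\ell}$. For such $\mu$ we have $p_{\mu,\pi} = \sum_{\ell} \widehat{\pi}_{\ell}\, \phi(\cdot \mid \nu_{\ell})$, so
\[
	\loss_{\pi}^{\dag} \le g(\widehat{\pi}) := \min_{\nu \in U'} \KL{p^{\star}}{\textstyle\sum_{\ell} \widehat{\pi}_{\ell} \phi(\cdot\mid \nu_{\ell})},
\]
where $U'$ is the set of $\nu$ whose collapsed configuration lies in $U$. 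Since $\loss(\theta^{\star})$ is the global minimum $0$, it suffices to show $g(w) \le \cfloor \norm{w - \pi^{\star}}^2$ for $w$ near $\pi^{\star}$. Then $\norm{\widehat{\pi} - \pi^{\star}} \le \norm{\widehat{\pi}-\pi^{\star}}_1 \le \Delta_{\pi}$ by \cref{lemma:distance-exact-group-weights} gives \eqref{eq:mismatched-local-floor}.

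To bound $g$, we study the smooth function $F(\nu, w) := \KL{p^{\star}}{\sum_{\ell} w_{\ell} \phi(\cdot\mid\nu_{\ell})}$ near $(\nu^{\star}, \pi^{\star})$. We restrict $w$ to the affine hull of the simplex, so $w$ moves only in directions $h$ with $\sum_{\ell} h_{\ell} = 0$. We have $F(\nu^{\star},\pi^{\star}) = 0$ and $F \ge 0$. Hence $(\nu^{\star},\pi^{\star})$ is a global minimizer, both partial gradients vanish there, and $F$ is $C^2$ near it. Differentiation under the integral is justified by Gaussian tails and the uniform density lower bounds used in \cref{proposition:loss-perturbation}. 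A second-order Taylor expansion of $F$ then gives
\[
	F(\nu^{\star}, w) \le \tfrac{1}{2} \sup \opnorm{\grad_{ww}^2 F}\, \norm{w - \pi^{\star}}^2
\]
on a small ball $V$ around $\pi^{\star}$ in the relative interior. This simply uses the fixed test point $\nu = \nu^{\star}$, so no reoptimization of the means is needed. The resulting constant is $\cfloor = \tfrac{1}{2}\sup_{V}\opnorm{\grad^2_{ww}F(\nu^{\star},\cdot)}$. In fact $\grad^2_{ww} F$ has the explicit form $\E_{p^{\star}}[\,\phi_{\ell}\phi_{k}/p_{\nu^{\star},w}^2]$. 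For $w$ bounded away from zero, the ratio $\phi(x\mid\mu_{\ell}^{\star})/p_{\nu^{\star},w}(x) \le 1/w_{\ell} \le 2/\pi_{\min}^{\star}$, so this Hessian is uniformly bounded.

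It remains to check that the test configuration $\mu_i = \mu_{\ell}^{\star}$ for $i \in S_{\ell}$ is admissible, that is, lies in $U$. This configuration is exactly $\mu^{\star}$, the center of $U$, so it is admissible trivially. Thus $\loss_{\pi}^{\dag} \le \loss(\mu^{\star},\pi) = F(\nu^{\star},\widehat{\pi})$. The vanishing of the first-order term is the crux. The main obstacle is to verify that $\grad_w F(\nu^{\star},\pi^{\star})$ annihilates the tangent directions of the simplex, so that no linear $\Delta_{\pi}$ term appears. I would verify this directly: $\partial_{w_{\ell}} F = -\E_{p^{\star}}[\phi(X\mid\mu_{\ell}^{\star})/p^{\star}(X)] = -1$ for every $\ell$, a constant vector, which is orthogonal to every $h$ with $\sum_{\ell} h_{\ell} = 0$. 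Combined with the Hessian bound, this yields the claim.
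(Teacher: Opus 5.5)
Your proposal is correct and follows essentially the same route as the paper: both plug in the collapsed test point $\mu^\star$ to get $\loss_{\pi}^{\dag} \le \loss_{\coll}(\nu^\star,\widehat{\pi})$. Both then Taylor-expand in the weights around $\pi^\star$, where the linear term vanishes and the quadratic term is bounded by a constant times $\norm{\widehat{\pi}-\pi^\star}_1^2 \le \Delta_\pi^2$. Your explicit computation $\partial_{w_\ell}F = -1$ is slightly more precise than the paper's claim that the weight gradient vanishes at $\pi^\star$, since only its component along simplex directions vanishes. Your bound $\phi_\ell/p_{\nu^\star,w} \le 1/w_\ell$ also makes the constant $\cfloor$ explicit.
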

\begin{proof}
  Recall the collapsed model loss from~\cref{lem:collapsed-hessian-pd}; for any $\nu \in \R^{dm}$ and $\rho \in \Delta^{m-1}$, let
	\[
		\loss_{\mathrm{coll}}(\nu, \rho) := \KL{p^\star}{\sum_{\ell=1}^m \rho_{\ell}\, \cN(\nu_{\ell}, I_{d})}.
	\]
	The function $\rho \mapsto \loss_{\coll}(\nu^{\star}, \rho)$ is minimized at
  $\rho = \pi^{\star} \in \intr{\Delta^{m-1}}$. As a result, we have
  \begin{equation}
		\left.\grad_{\rho}\loss_{\coll}(\nu^{\star}, \rho) \right|_{\rho = \pi^{\star}} = 0.
    \label{eq:collapsed-weight-gradient-vanishes}
  \end{equation}
	Now, let $\nu^{\star}: U \to \R^{dn}$ be the following map:
	\begin{align*}
		\nu^{\star}(\mu) = \bmx{
		\nu_{1}^{\star}(\mu) \\
		\vdots               \\
			\nu_{n}^{\star}(\mu)
		}, \quad
		[\nu^{\star}(\mu)]_{i} = \nu^{\star}_{\ell}, \;\; \text{for any $i \in S_{\ell}$}.
	\end{align*}
  For any $\pi$ such that $\norm{\widehat{\pi} - \pi^{\star}}$ is sufficiently small, we have that
	\begin{align}
		\loss_{\pi}^{\dag} - \loss(\theta^{\star}) & = \min_{\mu \in U} \loss(\mu, \pi) - \loss(\theta^{\star})                       \notag \\
		                                           & \leq
		\loss(\nu^{\star}(\mu), \pi) - \loss(\theta^{\star})                                                                           \notag \\
		                                           & =
                                               \KL{p^{\star}}{\sum_{\ell = 1}^{m} \sum_{i \in S_{\ell}} \pi_{i} \mathcal{N}(\nu_{\ell}^{\star}, I_d)} - \loss(\theta^{\star}) \label{eq:collapsed-loss-difference} \\
		                                           & =
                                               \loss_{\coll}(\nu^{\star}, \widehat{\pi}) - \loss(\theta^{\star})                                                      \notag \\
		                                           & =
                                               \underbrace{\loss_{\coll}(\nu^{\star}, \pi^{\star})}_{\loss(\theta^{\star})} + \ip{\grad_{\rho}\loss(\nu^{\star}, \rho)|_{\rho = \pi^{\star}}, \widehat{\pi} - \pi^{\star}} + O(\norm{\widehat{\pi} - \pi^{\star}}^2) - \loss(\theta^{\star}) \notag \\
                                               &\lesssim \norm{\widehat{\pi} - \pi^{\star}}_1^2 \notag \\
                                               &= \Delta_{\pi}^2, \label{eq:collapsed-loss-small-bound}
	\end{align}
  where the last inequality follows from~\eqref{eq:collapsed-weight-gradient-vanishes} and norm equivalence.
\end{proof}

\section{Local acceleration}
\label{sec:local-scheduled-acceleration}

In this section, we prove that the second phase of~\cref{alg:switching-gdpolyak} accelerates.
As we focus on a single epoch of \(\gdpolyak\), we drop all superscripts referencing the epoch index $k$ for simplicity.

Setting the stage, we introduce the notation necessary to carry out the argument. We write $\bar{\pi}$ for the reference weights
associated with $\pi$, as defined in~\cref{def:reference-exact-grouped-weights}, and
write $\mathcal{M}$ for the local $C^{\infty}$ ravine of the reference objective
\[
	\mu \mapsto \loss(\mu, \bar{\pi}),
\]
which is guaranteed to exist by Corollary~\ref{cor:exact-fixed-weight-ravine}. We write
\begin{equation}
	\bar{\mu}_{\ell} := \frac{1}{\widehat{\pi}_{\ell}} \sum_{i \in S_{\ell}} \pi_{i} \mu_{i}
	\quad \text{and} \quad
	\disp(\mu) := \sum_{\ell = 1}^{m} \sum_{i \in S_{\ell}} \pi_{i} \norm{\mu_{i} - \bar{\mu}_{\ell}}^2
	\label{eq:group-average-dispersion}
\end{equation}
for the cluster average and dispersion from Corollary~\ref{cor:grouped-bias-dispersion}. Unless specified otherwise, we write
\begin{equation}
	y := P_{\mathcal{M}}(\mu),
	\label{eq:ravine-projection}
\end{equation}
where $P_{\mathcal{M}}$ is guaranteed to be single-valued near $\mathcal{M}$. 

The forthcoming analysis shows that the iterates produced by applying $\gdpolyak$ to the weight-mismatched objective $\mu \mapsto \loss(\mu, \pi)$ approach
a tube around the ravine $\mathcal{M}$ at a linear rate; upon entering that tube, Polyak steps lead to a contraction up to
a prescribed floor level $O(\Delta_{\pi}^{2/3})$.
Along \(\mathcal M\), the tangential loss $\loss \circ P_{\cM}$ behaves like a quartic. Consequently, for \(y\in \mathcal M\), the tangential gradient has scale $\disp(y)^{3/2}$.
Specialized to our setting, the proof strategy of~\cite{davis2024gradient} motivates the tube condition
\[
	\dist_{\cM}(\mu) \lesssim \disp(y)^{3/2}.
\]
To formalize this, fix a proximity parameter \( \tau > 0 \) and threshold $\kappa > 0$.
We define the following sets, which correspond to different phases of local convergence:
\begin{subequations}
	\begin{align}
		\mathcal{T}_{\tau}        & := \set{\mu \in \R^{dn} \mid \dist_{\cM}(\mu) \leq \tau \disp(y)^{3/2}} \label{eq:ravine-tube} \\
		\mathcal{C}_{\kappa}      & := \set{\mu \in \R^{dn} \mid \disp(y) \leq \kappa \Delta_{\pi}^{2/3}} \label{eq:split-core}    \\
		\mathcal{A}_{\tau,\kappa} & :=\mathcal{T}_{\tau} \setD \mathcal{C}_{\kappa}. \label{eq:tangential-annulus}
	\end{align}
\end{subequations}
The local analysis then separates into two phases:
\begin{enumerate}[(i)]
	\item The \emph{normal phase}, where \( \dist_{\cM}(\mu) \gg \disp(y)^{3/2} \), so iterates lie outside the tube $\mathcal{T}_{\tau}$.
	\item The \emph{tangential phase}, where \( \disp(y) \gtrsim \max\set{\dist_{\cM}^{2/3}(\mu), \kappa \Delta_{\pi}^{2/3}} \), so iterates lie in $\mathcal{A}_{\tau, \kappa}$.
\end{enumerate}
During the normal phase, short gradient descent steps approach the tube $\mathcal{T}_{\tau}$, whereupon Polyak steps
successfully reduce the distance to $\mu^{\star}$ along the manifold by a constant factor. The threshold $\Delta_{\pi}$ saturates other
quantities inside $\mathcal{C}_{\kappa}$.

\medskip

Before we proceed to the main proof we establish a few auxiliary results, specific to the local acceleration phase, that are frequently needed in our analysis. We will make repeated use of the fact that $\disp(\mu) \asymp \disp(P_{\mathcal{M}}(\mu))$ when $\mu \in \mathcal{T}_{\tau}$:
\begin{lemma}
	\label{lemma:disp-comparison}
	There exists a neighborhood $U$ of $\mu^{\star}$ and constants $c_1, c_2 > 0$ such that
	\begin{equation}
		\label{eq:disp-y-disp-mu-comparison}
		c_1 \disp(y) \leq \disp(\mu) \leq c_2 \disp(y), \;\; \text{for all $\mu \in U \cap \mathcal{T}_{\tau}$.}
	\end{equation}
\end{lemma}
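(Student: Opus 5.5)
The key observation is that $\disp$ is a positive semidefinite quadratic form whose kernel contains exactly the ``bias'' directions, so $\sqrt{\disp}$ is a seminorm. Concretely, write $\disp(\mu) = \norm{\Pi \mu}_{W}^2$. Here $W = \mathrm{diag}(\pi_i)\otimes I_d$, and $\Pi$ is the $W$-orthogonal projection onto the complement of the cluster-constant vectors, i.e.\ $(\Pi\mu)_i = \mu_i - \bar\mu_\ell$ for $i \in S_\ell$. This is just the decomposition of Corollary~\ref{cor:grouped-bias-dispersion}, viewed as a Pythagorean identity in the $W$-inner product. Consequently $\sqrt{\disp(\cdot)}$ satisfies the triangle inequality and $\sqrt{\disp(u)} \leq \norm{u}$ (since $\pi_i \leq 1$ and $\Pi$ is a $W$-contraction). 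This gives
\[
	\abs[\big]{\sqrt{\disp(\mu)} - \sqrt{\disp(y)}} \leq \sqrt{\disp(\mu - y)} \leq \norm{\mu - y} = \dist_{\cM}(\mu).
\]

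Next I would use the tube condition $\mu \in \mathcal{T}_{\tau}$, namely $\dist_{\cM}(\mu) \leq \tau \disp(y)^{3/2}$. Substituting,
\[
	\sqrt{\disp(y)}\bigl(1 - \tau \disp(y)\bigr) \leq \sqrt{\disp(\mu)} \leq \sqrt{\disp(y)}\bigl(1 + \tau \disp(y)\bigr).
\]
The remaining task is to make $\tau\disp(y) \leq \tfrac12$ uniformly on a neighborhood. Since $P_{\cM}$ is continuous (indeed $C^\infty$) near $\mu^{\star} \in \cM$ and $P_{\cM}(\mu^{\star}) = \mu^{\star}$, we have $\disp(y) \to \disp(\mu^{\star}) = 0$ as $\mu \to \mu^{\star}$. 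Here $\disp(\mu^\star)=0$ because all students in $S_\ell$ coincide with $\mu_\ell^\star$. Concretely, $\disp(y) \leq \norm{y - \mu^{\star}}^2 \leq (\mathrm{Lip}(P_{\cM})\norm{\mu - \mu^{\star}})^2$. Hence I would shrink $U$ so that $P_{\cM}$ is single-valued and $\tau \disp(P_{\cM}(\mu)) \leq 1/2$ for all $\mu \in U$. Squaring then yields the claim with $c_1 = 1/4$ and $c_2 = 9/4$.

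One subtlety to check is that the partition $\{S_\ell\}$, and hence $\disp$, is held fixed on $U$ (as in the proof of Theorem~\ref{thm:exact-fixed-weight-fourth-order}), so that $\disp$ is a single fixed quadratic form on $U$ and the seminorm argument applies. I also need the weights in $\disp$ (either $\pi$ or $\bar\pi$) to be bounded by $1$, which holds in both cases. I expect no real obstacle here: the only step requiring care is the uniform smallness of $\tau\disp(y)$, which follows from continuity of $P_{\cM}$ at $\mu^\star$. The neighborhood depends on $\tau$, which is acceptable since $\tau$ is fixed.
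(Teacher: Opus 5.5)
Your proof is correct and follows the same route as the paper: bound the discrepancy between $\disp(\mu)$ and $\disp(y)$ by $\dist_{\mathcal{M}}(\mu)$, invoke the tube condition $\dist_{\mathcal{M}}(\mu) \leq \tau \disp(y)^{3/2}$, and shrink the neighborhood so the relative error is at most one half. The only difference is that you use the fact that $\sqrt{\disp}$ is a seminorm dominated by $\norm{\cdot}$ to get an explicit Lipschitz constant $1$ for $\sqrt{\disp}$, and hence a relative error of $\tau\disp(y)$. The paper instead uses a generic Lipschitz bound for $\disp$ on a compact set, which gives a relative error of $C\tau\sqrt{\disp(y)}$; both suffice.
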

\begin{proof}
	The function $\disp(\cdot)$ is $C^{\infty}$, thus Lipschitz on any compact set. It follows that
	\[
		\abs{\disp(y)-\disp(\mu)}
		=
		\abs{\disp(P_{\mathcal{M}}(\mu)) - \disp(\mu)}
		\leq C \cdot \norm{P_{\mathcal{M}}(\mu) - \mu}
		=
		C \cdot \dist_{\mathcal{M}}(\mu).
	\]
	Therefore, from the inclusion $\mu \in \mathcal{T}_{\tau}$, it follows that
	\begin{align*}
		\abs{\disp(y)-\disp(\mu)}
		\leq C\tau \disp(y)^{3/2}
		\Rightarrow
		\bigl(1 - C \tau \sqrt{\disp(y)}\bigr) \disp(y)
		\leq \disp(\mu)
		\leq \disp(y) \bigl(1 + C \tau \sqrt{\disp(y)}\bigr).
	\end{align*}
	By shrinking $U$ if necessary, we have $\sqrt{\disp(y)} \leq \tfrac{1}{2 C \tau}$,
	proving the claim for $c_1 = \tfrac{1}{2}$, $c_2 = \tfrac{3}{2}$.
\end{proof}
At the same time, we will need the following comparison that allows us to convert between the distance from $P_{\mathcal{M}}(\mu)$ and the distance from $\mu$ itself to $\mu^{\star}$.

\begin{lemma}
    \label{lemma:distance-conversion}
    There is a neighborhood $U$ of $\mu^{\star}$ and a constant $C_{U} > 0$ such that
    \begin{equation}
        \norm{P_{\mathcal{M}}(\mu) - \mu^{\star}} \leq
        \norm{\mu - \mu^{\star}} + C_{U} \norm{\mu - \mu^{\star}}^2,
        \quad \text{for all $\mu \in U$.}
        \label{eq:distance-conversion}
    \end{equation}
\end{lemma}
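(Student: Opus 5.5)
The plan is a first-order Taylor expansion of the projection map $P_{\mathcal{M}}$ around $\mu^{\star}$. By~\cref{cor:exact-fixed-weight-ravine}, the reference objective $\mu \mapsto \loss(\mu, \bar{\pi})$ admits a local $C^{\infty}$ ravine $\mathcal{M}$ through $\mu^{\star}$. Hence there is a neighborhood $U_0$ of $\mu^{\star}$ on which $P_{\mathcal{M}}$ is single-valued and $C^{2}$; this is the standard tubular-neighborhood property of a $C^{2}$ (here $C^\infty$) submanifold. Since $\mu^{\star} \in \mathcal{M}$, we have $P_{\mathcal{M}}(\mu^{\star}) = \mu^{\star}$. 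Moreover, the Jacobian $\nabla P_{\mathcal{M}}(\mu^{\star})$ is the orthogonal projector onto the tangent space $\mathcal{T}_{\mathcal{M}}(\mu^{\star})$, a classical fact for the metric projection onto a $C^2$ manifold at a point of the manifold.

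I would then take $U$ to be a closed ball $\mathcal{B}(\mu^{\star}; r) \subset U_0$, which is compact and convex. On $U$, Taylor's theorem with integral remainder applied to the $C^{2}$ map $P_{\mathcal{M}}$ along the segment $[\mu^{\star}, \mu] \subset U$ gives
\[
	P_{\mathcal{M}}(\mu) - \mu^{\star}
	= \nabla P_{\mathcal{M}}(\mu^{\star})(\mu - \mu^{\star}) + R(\mu),
	\qquad
	\norm{R(\mu)} \leq \tfrac{1}{2}\sup_{z \in U}\opnorm{\nabla^{2} P_{\mathcal{M}}(z)} \, \norm{\mu - \mu^{\star}}^{2}.
\]
Set $C_{U} := \tfrac{1}{2}\sup_{z \in U}\opnorm{\nabla^{2}P_{\mathcal{M}}(z)}$, which is finite by continuity of the second derivative on the compact set $U$. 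Since $\nabla P_{\mathcal{M}}(\mu^{\star})$ is an orthogonal projector, its operator norm is at most one. The triangle inequality then yields $\norm{P_{\mathcal{M}}(\mu) - \mu^{\star}} \leq \norm{\mu - \mu^{\star}} + C_{U}\norm{\mu - \mu^{\star}}^{2}$, which is~\eqref{eq:distance-conversion}.

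The only nonroutine step is justifying the $C^{2}$ regularity of $P_{\mathcal{M}}$ and the projector identity for its Jacobian. I would take the regularity from Assumption A of~\cite{davis2024gradient}, which the paper already uses in~\cref{thm:ddj-ravine-step-input} with ``$C^{2}$ projection $P_{\mathcal{M}}$''. If the projector identity needs an argument, differentiate $P_{\mathcal{M}}(y) = y$ for $y \in \mathcal{M}$ to get $\nabla P_{\mathcal{M}}(\mu^{\star})v = v$ for tangent $v$. For normal $v$, the first-order optimality of the projection shows $P_{\mathcal{M}}(\mu^{\star} + tv) = \mu^{\star}$ for small $t$, so $\nabla P_{\mathcal{M}}(\mu^{\star})v = 0$.

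Alternatively, a cruder route avoids the Jacobian identity altogether. Since $P_{\mathcal{M}}$ is $1$-Lipschitz up to second order near the manifold, Lipschitz continuity with constant $1 + O(r)$ combined with a local constant would also give the claim. I would nevertheless use the Taylor argument as the main line.
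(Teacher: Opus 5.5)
Your proposal is correct and takes essentially the same route as the paper. Both expand $P_{\mathcal{M}}$ to first order at $\mu^{\star}$, use $\nabla P_{\mathcal{M}}(\mu^{\star}) = P_{\mathcal{T}_{\mu^{\star}}\mathcal{M}}$ (an orthogonal projector of norm at most one), and absorb the second-order remainder into $C_{U}$; your version is slightly more careful, since you invoke $C^{2}$ regularity of $P_{\mathcal{M}}$ (available because $\mathcal{M}$ is $C^{\infty}$) to justify the quadratic remainder, whereas the paper asserts only $C^{1}$ regularity.
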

\begin{proof}
    For $\mu$ near $\mu^{\star} \in \mathcal{M}$, the projection $P_{\mathcal{M}}$ is $C^{1}$. In particular, we have that
	\begin{align}
		y - \mu^{\star} & = P_{\mathcal{M}}(\mu) - \mu^{\star}                                                                                                 \notag \\
		                & = P_{\mathcal{M}}(\mu^{\star}) - \mu^{\star} + \grad P_{\mathcal{M}}(\mu^{\star})(\mu - \mu^{\star}) + O(\norm{\mu - \mu^{\star}}^2) \notag \\
		                & = P_{\mathcal{T}_{\mu^{\star}}\mathcal{M}}(\mu - \mu^{\star}) + O(\norm{\mu - \mu^{\star}}^2),
                        \label{eq:distance-along-ravine-taylor-expansion}
	\end{align}
	where the second equality follows by smoothness of the projection and the third equality follows
	from the fact that $\grad P_{\mathcal{M}}(\bar{\mu}) = P_{\mathcal{T}_{\bar{\mu}} \mathcal{M}}$ for any
	$\bar{\mu} \in \mathcal{M}$. Extracting the constant from the remainder
    term and labeling at $C_{U}$ completes the proof.
\end{proof}

\subsection{Trajectory analysis}

The reference objective \(\mu \mapsto \loss(\mu,\bar{\pi})\) is compatible with the theory developed in~\cref{app:identified-ravine};
Theorem~\ref{thm:exact-fixed-weight-fourth-order} supplies local fourth-order growth,
Corollary~\ref{cor:exact-fixed-weight-ravine} verifies the hypotheses used in~\cite{davis2024gradient}, and~\cref{thm:ddj-ravine-step-input}
provides a one-step contraction along the ravine for Polyak steps. We record this conclusion in the next Proposition:

\begin{proposition}[Polyak step contraction for reference loss]
	\label{proposition:polyak-step-contraction-reference-loss}
	Let $P_{\mathcal{M}}$ denote the local projection onto $\mathcal{M}$ on a neighborhood $U$ of $\mu^{\star}$ and write
	\begin{equation}
		\loss(\mu, \bar{\pi}) - \loss(\theta^{\star}) =
		\underbrace{\loss(P_{\mathcal{M}}(\mu), \bar{\pi})}_{\loss_{T}(\mu)} +
		\underbrace{\loss(\mu, \bar{\pi}) - \loss(P_{\mathcal{M}}(\mu), \bar{\pi})}_{\loss_{N}(\mu)}.
		\label{eq:ref-loss-decomposition}
	\end{equation}
	Moreover, define the reference Polyak update and ``shadow'' iterates
	\begin{align}
		\mu^{+} & := \mu - \frac{\loss(\mu, \bar{\pi})}{\norm{\grad_{\mu} \loss(\mu, \bar{\pi})}} \cdot
		\frac{\grad_{\mu} \loss(\mu, \bar{\pi})}{\norm{\grad_{\mu} \loss(\mu, \bar{\pi})}},
		\quad y := P_{\mathcal{M}}(\mu), \;\; y^{+} := P_{\mathcal{M}}(\mu^{+}). \label{eq:ref-polyak-update}
	\end{align}
	By shrinking the set $U$ if necessary, there exist numbers $q \in (0, 1)$ and $C > 0$ such that
	\begin{equation}
		\norm{\grad \loss_{N}(\mu)}
		\leq \frac{1}{100}\norm{\grad \loss_{T}(y)} \implies
		\left\{ \;
		\begin{aligned}
			\norm{y^{+} - \mu^{\star}} & \leq q \norm{y - \mu^{\star}}, \;\; \text{and} \\
			\dist_{\mathcal{M}}(\mu^+) & \leq C \norm{y - \mu^{\star}}.
		\end{aligned}
		\right.
		\label{eq:small-normal-grad-implies-contraction}
	\end{equation}
\end{proposition}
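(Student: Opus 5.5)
This statement is essentially a specialization of \cref{thm:ddj-ravine-step-input}(\ref{item:small-tangent-gradient-implies-descent}) to the reference objective $f(\mu) := \loss(\mu,\bar{\pi}) - \loss(\theta^{\star})$. The plan is therefore to verify that theorem's hypotheses at $\bar x = \mu^{\star}$ and then translate its conclusion into the notation of~\eqref{eq:small-normal-grad-implies-contraction}.

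\textbf{Checking the hypotheses.} Since $\sum_{i\in S_\ell}\bar\pi_i = \pi^\star_\ell$ for every $\ell$, the pair $(\mu^\star,\bar\pi)$ is a global minimizer inducing an exact clustering in the sense of \cref{definition:exact-grouping}, and $\bar\pi_i>0$ for all $i$. \Cref{thm:exact-fixed-weight-fourth-order} then gives quartic growth of $f$ on a neighborhood of $\mu^\star$. Consequently $\mu^\star$ is the isolated minimizer there, so $\mathcal{S}^\star = \{\mu^\star\}$ and $\dist(y,\mathcal{S}^\star) = \norm{y-\mu^\star}$. \Cref{cor:exact-fixed-weight-ravine} supplies a local $C^\infty$ Morse ravine $\mathcal{M}$ satisfying Assumption~A of \citet{davis2024gradient} with order $p=4$. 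Moreover, $f$ is $C^\infty$, being a Gaussian-mixture KL divergence that can be differentiated under the integral thanks to the moment bounds used in \cref{proposition:loss-perturbation}. A $C^\infty$ manifold has a $C^2$ (indeed $C^\infty$) projection on a neighborhood, so $P_{\mathcal{M}}$ is single-valued and $C^2$ on some $U$. This is the neighborhood we shrink to.

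\textbf{Matching notation.} With $f^\star=0$ after centering, the decomposition in \cref{thm:ddj-ravine-step-input} reads $f_T(\mu) = f(P_{\mathcal{M}}\mu)$ and $f_N = f - f_T$. These coincide with $\loss_T,\loss_N$ in~\eqref{eq:ref-loss-decomposition} up to the constant $\loss(\theta^\star)$, which affects neither gradients nor the Polyak step. The one point needing care is that the Polyak numerator should be $f(\mu) = \loss(\mu,\bar\pi)-\loss(\theta^\star)$. The update~\eqref{eq:ref-polyak-update} writes $\loss(\mu,\bar\pi)$, so I would either read it with the optimal value subtracted (consistent with \cref{alg:switching-gdpolyak}, where $f_{\mathrm{lb}}$ is the optimal value), or note that $\loss(\theta^\star)=\dkl{p^\star}{p^\star}=0$, in which case the two agree exactly. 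The second option is cleanest: $\loss^\star = 0$ since $p_{\mu^\star,\bar\pi}=p^\star$. I would also note that $\grad\loss_T(y) = \grad f_T(y)$ at $y\in\mathcal{M}$, because $P_{\mathcal{M}}$ is the identity on $\mathcal{M}$.

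\textbf{Conclusion.} Item~(\ref{item:small-tangent-gradient-implies-descent}) of \cref{thm:ddj-ravine-step-input} then yields exactly the two implications in~\eqref{eq:small-normal-grad-implies-contraction}, with $q=q_{\mathrm P}$ and $C=C_{\mathrm P}$. The main (mild) obstacle is bookkeeping rather than analysis: confirming the ravine hypotheses, namely constant Hessian rank on a singleton solution set and a $C^2$ projection, and ensuring that the single neighborhood $U$ is small enough for quartic growth, the validity of the partition $\{S_\ell\}$, and single-valuedness of $P_{\mathcal{M}}$ all at once. Since each holds on some neighborhood, taking their intersection suffices.
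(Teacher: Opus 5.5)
Your proposal is correct and follows the paper's proof. The paper likewise notes that the reference loss $\mu\mapsto\loss(\mu,\bar\pi)$ has the singleton local solution set $\{\mu^\star\}$ and invokes \cref{thm:ddj-ravine-step-input} with $f^\star=\loss(\theta^\star)=0$; your explicit checks (exact clustering under $\bar\pi$, quartic growth, existence of the ravine, $C^2$ projection) spell out what the paper leaves implicit, with one minor slip: $\grad\loss_T$ and $\grad f_T$ agree everywhere because $\loss_T$ and $f_T$ are the same function once $\loss(\theta^\star)=0$, not because $P_{\mathcal{M}}$ fixes $\mathcal{M}$ (indeed $\grad\loss_T(y)=P_{\mathcal{T}_y\mathcal{M}}\grad\loss(y,\bar\pi)$, which is not $\grad\loss(y,\bar\pi)$).
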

\begin{proof}
	The objective $\loss(\mu, \bar{\pi})$ admits a local solution set $\set{\mu^{\star}}$, which is a singleton.
	The conclusion follows immediately by applying~\cref{thm:ddj-ravine-step-input} with
	\( f^{\star} = \loss(\theta^{\star}) = 0 \).
\end{proof}

Naturally, \Cref{alg:switching-gdpolyak} has no access to the
reference weights $\bar{\pi}$. Nevertheless, we will argue that the gradient used in~\cref{alg:switching-gdpolyak} is not far from
the gradient of the ``reference'' loss. The first ingredient is a collection of results showing that the
tangent part of the loss behaves almost exactly like a quartic function inside $\mathcal{T}_{\tau}$.
In what follows, we always use $\loss_{T}$ and $\loss_{N}$ to refer to the tangent and normal parts of the
\emph{reference} loss, as defined in~\cref{proposition:polyak-step-contraction-reference-loss}.

\begin{lemma}[Tangent part scaling and gradient perturbation]
	\label{lem:relative-gradient-annulus}
	For any sufficiently small \(\tau>0\), there exists a neighborhood \(V \subset U \) of $\mu^{\star}$ and constants
	$C \geq c > 0$ such that:
	\begin{enumerate}[(i)]
		\item \label{item:reference-ravine-tangent-growth} For every \( y \in V \cap \mathcal{M} \), we have that
		      \begin{subequations}
			      \begin{align}
				      c \cdot \norm{y - \mu^{\star}}^2 & \leq \disp(y) \leq C \cdot \norm{y - \mu^{\star}}^2 \label{eq:dispersion-approx-distance-squared}                 \\
				      c \cdot \disp(y)^2               & \leq \loss_{T}(y) \leq C \cdot \disp(y)^2 \label{eq:tangent-loss-approx-squared-dispersion}                       \\
				      c \cdot \disp(y)^{\tfrac{3}{2}}  & \leq \norm{\grad \loss_{T}(y)} \leq C \cdot \disp(y)^{\tfrac{3}{2}} \label{eq:tangent-grad-approx-dispersion-pow}
			      \end{align}
		      \end{subequations}
		\item \label{item:reference-gradient-pl-like} For every \(\mu \in V \cap \mathcal{T}_{\tau} \), we have that
		      \begin{subequations}
			      \begin{align}
				      \norm{\grad_{\mu} \loss(\mu, \bar{\pi})} & \geq c \cdot \disp(y)^{\tfrac{3}{2}}
				      \label{eq:reference-gradient-pl-like-a}                                         \\
				      \norm{\grad \loss_{N}(\mu)}              & \leq C \tau \disp(y)^{\tfrac{3}{2}}.
				      \label{eq:reference-gradient-pl-like-b}
			      \end{align}
		      \end{subequations}
		\item \label{item:improved-gradient-perturbation-in-annulus} For every \(\mu \in V \cap \mathcal{A}_{\tau,\kappa}\), we have that
		      \begin{equation}
			      \norm{\grad_{\mu} \loss(\mu, \pi) - \grad_{\mu} \loss(\mu, \bar{\pi})}
			      \leq C \kappa^{-\tfrac{3}{2}} \norm{\grad_{\mu} \loss(\mu, \bar{\pi})}.
			      \label{eq:improved-gradient-perturbation-in-annulus}
		      \end{equation}
	\end{enumerate}
\end{lemma}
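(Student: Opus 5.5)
Part (i) comes first, since (ii) and (iii) are built on it. For \eqref{eq:dispersion-approx-distance-squared}, I would use the structure of the tangent space. By \cref{thm:exact-grouped-ravine-geometry}, $\mathcal{T}_{\mu^\star}\mathcal{M} = \ker(\grad^2_{\mu\mu}\loss(\mu^\star,\bar\pi)) = \ker(\cavg)$, since the ravine of \cite{davis2024gradient} is tangent to the Hessian nullspace. \cref{fact:local-C2-expansion} then gives $y-\mu^\star = u + O(\norm{y-\mu^\star}^2)$ with $u\in\ker(\cavg)$. For $u \in \ker(\cavg)$ the cluster bias vanishes and $\disp(\mu^\star + u) = \sum_\ell\sum_{i\in S_\ell}\pi_i\norm{u_i}^2 \asymp \norm{u}^2$, with constants $\pi_{\min}$ and $1$. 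The function $\disp$ is a quadratic form in $\mu-\mu^\star$ whose value at $\mu^\star$ is zero, so it is Lipschitz with constant $O(\norm{\cdot})$ near $\mu^\star$. Hence $\abs{\disp(y)-\disp(\mu^\star+u)} = O(\norm{y-\mu^\star}^3)$, and since $\norm{u} = \norm{y-\mu^\star}(1+O(\norm{y-\mu^\star}))$, shrinking $V$ gives the two-sided bound. The remaining two displays follow from \cref{thm:ddj-ravine-step-input}\eqref{item:tangent-part-p-growth} with $p=4$ and $\mathcal{S}^\star=\{\mu^\star\}$. That result gives $\loss_T(y)\asymp\norm{y-\mu^\star}^4$ and $\norm{\grad\loss_T(y)}\asymp\norm{y-\mu^\star}^3$, and substituting \eqref{eq:dispersion-approx-distance-squared} converts these into the stated powers of $\disp(y)$.

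For (ii), the key step is a Lipschitz bound on $\grad\loss_N$ in the normal direction. On $\mathcal{M}$ we have $\loss_N\equiv 0$. The function $\loss_N(\mu)=\loss(\mu,\bar\pi)-\loss(P_{\mathcal{M}}\mu,\bar\pi)$ is $C^2$ near $\mu^\star$, so its gradient is locally Lipschitz. The claim is that $\grad\loss_N$ also vanishes on $\mathcal{M}$. To see this, note that $\grad\loss_N(y) = \grad f(y) - \grad P_{\mathcal{M}}(y)^{\T}\grad f(y) = (I-P_{\mathcal{T}_y\mathcal{M}})\grad f(y)$, where $f = \loss(\cdot,\bar\pi)$. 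The ravine property of \cite[Assumption A]{davis2024gradient} (Morse ravine) states that $\grad f(y)\in\mathcal{T}_y\mathcal{M}$ for $y\in\mathcal{M}$. It follows that $\norm{\grad\loss_N(\mu)}\le L\,\dist_{\mathcal{M}}(\mu)\le L\tau\disp(y)^{3/2}$ on $\mathcal{T}_\tau$, which is \eqref{eq:reference-gradient-pl-like-b}.

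Next I write $\grad_\mu\loss(\mu,\bar\pi) = \grad\loss_T(\mu)+\grad\loss_N(\mu)$. Here $\grad\loss_T(\mu) = \grad P_{\mathcal{M}}(\mu)^{\T}\grad f(y)$, and by the $C^1$-ness of $P_{\mathcal{M}}$ this differs from $\grad f(y) = \grad(\loss\circ P_{\mathcal M})(y)$, the tangential gradient at $y$, by $O(\dist_{\mathcal{M}}(\mu))\norm{\grad f(y)} \le O(\tau)\disp(y)^{3/2}$. Combining this with \eqref{eq:tangent-grad-approx-dispersion-pow} gives $\norm{\grad_\mu\loss(\mu,\bar\pi)}\ge (c - C'\tau)\disp(y)^{3/2}$, and for $\tau$ small enough this yields \eqref{eq:reference-gradient-pl-like-a}.

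For (iii), \cref{proposition:loss-perturbation} bounds the gradient discrepancy by $\const^{(1)}\Delta_\pi$. On $\mathcal{A}_{\tau,\kappa}$ we are outside the core $\mathcal{C}_\kappa$, so $\disp(y) > \kappa\Delta_\pi^{2/3}$, i.e.\ $\Delta_\pi < \kappa^{-3/2}\disp(y)^{3/2}$. Applying \eqref{eq:reference-gradient-pl-like-a} then gives $\Delta_\pi\le c^{-1}\kappa^{-3/2}\norm{\grad_\mu\loss(\mu,\bar\pi)}$, which is \eqref{eq:improved-gradient-perturbation-in-annulus}.

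The main obstacle is justifying that $\grad\loss_N$ vanishes on $\mathcal{M}$, equivalently that $\grad f$ is tangent to the ravine. This must be extracted from the precise content of Assumption A in \cite{davis2024gradient}. If that fails, I would instead bound $\norm{\grad\loss_N(\mu)}$ via its Hessian. The normal Hessian is nondegenerate and the loss is quadratic in normal directions, so $\norm{\grad\loss_N}\lesssim\dist_{\mathcal{M}}(\mu)$ plus a term of order $\norm{y-\mu^\star}\dist_{\mathcal M}(\mu)$, which still suffices.
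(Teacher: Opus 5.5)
Your proposal is correct and follows the paper's proof essentially step for step: the tangent-space identification, the $C^2$ expansion and \cref{thm:ddj-ravine-step-input} with $p=4$ give (i); a Lipschitz bound $\norm{\grad\loss_N(\mu)}\lesssim\dist_{\mathcal M}(\mu)$ plus the reverse triangle inequality give (ii); and \cref{proposition:loss-perturbation} combined with $\mu\notin\mathcal{C}_\kappa$ gives (iii), the only cosmetic differences being that you derive the comparison of $\grad\loss_T(\mu)$ with $\grad\loss_T(y)$ directly from smoothness of $P_{\mathcal M}$ instead of citing \citet[Lemma 5.6]{davis2024gradient}, and that you use tacitly the fact, stated explicitly in the paper, that the $\pi$- and $\bar\pi$-weighted cluster-mean-preserving subspaces coincide. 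The step you flag, $\grad\loss_N\equiv 0$ on $\mathcal M$, is the same one the paper relies on implicitly (it passes directly from ``$\loss_N$ vanishes along $\mathcal M$'' to $\norm{\grad\loss_N(\mu)}\le C\dist_{\mathcal M}(\mu)$, and elsewhere calls $\loss_N$ minimized on $\mathcal M$), so your argument stands on the same footing; note, though, that your fallback would not remove this dependence, because a bound of order $(1+\norm{y-\mu^\star})\dist_{\mathcal M}(\mu)$ still vanishes at $\mu=y$, whereas a genuine fallback would need an estimate such as $\norm{P_{N_y\mathcal M}\grad_\mu\loss(y,\bar\pi)}=o(\disp(y)^{3/2})$ from the ravine theory together with a $\tau$-dependent shrinking of $V$.
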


\begin{proof}
	We first identify the tangent space of the reference loss ravine at \(\mu^\star\). Since
	\[
		\bar{\pi}_{i} = \pi_{i} \cdot \frac{\pi_{\ell}^{\star}}{\widehat{\pi}_{\ell}}, \;\; \text{for all $i \in S_{\ell}$},
	\]
	the following linear subspaces are equal:
	\begin{align}
		\mathcal T
		 & := \set{u \in \R^{dn} \mid \sum_{i \in S_{\ell}} \pi_{i} u_{i} = 0, \; \text{for all $\ell \in [m]$}}       \\
		 & = \set{u \in \R^{dn} \mid \sum_{i \in S_{\ell}} \bar{\pi}_{i} u_{i} = 0, \; \text{for all $\ell \in [m]$}}.
		\label{eq:equal-tangent-space}
	\end{align}
	By~\cref{thm:exact-grouped-ravine-geometry,cor:exact-fixed-weight-ravine} applied to
	$\mu \mapsto \loss(\mu, \bar{\pi})$, $\mathcal{T}$ is the tangent space of $\mathcal{M}$ at $\mu^{\star}$. Therefore, by~\cref{fact:local-C2-expansion},
	we deduce that $y \in \mathcal{M}$ near $\mu^{\star}$ can be written as
	\[
		y = \mu^{\star} + u + O(\norm{y - \mu^{\star}}^2), \;\; \text{where $u \in \mathcal{T}_{\mu^{\star}} \mathcal{M}$}.
	\]
	In particular, the dispersion $\disp(y)$ does not change up to first-order: indeed,
	\begin{align*}
		\disp(\mu^{\star} + u) & =
		\sum_{\ell = 1}^{m} \sum_{i \in S_{\ell}} \pi_{i} \norm[\Big]{
			\mu^{\star}_{i} + u_{i} - \frac{1}{\widehat{\pi}_{\ell}} \sum_{j \in S_{\ell}} \pi_{j} (\mu^{\star}_j + u_{j})
		}^2 =
		\sum_{\ell = 1}^{m} \sum_{i \in S_{\ell}} \pi_{i} \norm{u_{i}}^2,
	\end{align*}
	where the second equality follows from~\eqref{eq:equal-tangent-space}. Consequently,
	\begin{equation}
		\disp(y) = \sum_{i = 1}^{n} \pi_{i} \norm{u_{i}}^2 + O(\norm{u}^3).
		\label{eq:dispersion-near-mustar}
	\end{equation}
	Therefore, after shrinking the local neighborhood if necessary, we have
	\[
		c \cdot \norm{u}^2 \leq \disp(y) \leq C \cdot \norm{u}^2.
	\]
	Finally, since \(\norm{y - \mu^\star} \asymp \norm{u}\), this proves~\eqref{eq:dispersion-approx-distance-squared}.

	We now apply Item~\ref{item:tangent-part-p-growth} from~\cref{thm:ddj-ravine-step-input} with $p = 4$,
	which yields
	\begin{align*}
		c \cdot \norm{y - \mu^{\star}}^4 & \leq \loss_{T}(y) \leq C \cdot \norm{y - \mu^{\star}}^4,              \\
		c \cdot \norm{y - \mu^{\star}}^3 & \leq \norm{\grad \loss_{T}(y)} \leq C \cdot \norm{y - \mu^{\star}}^3,
	\end{align*}
	for all $y \in V \cap \mathcal{M}$. Since $\disp(y) \asymp \norm{y - \mu^{\star}}^2$ by~\eqref{eq:dispersion-approx-distance-squared},
	this proves~\cref{eq:tangent-loss-approx-squared-dispersion,eq:tangent-grad-approx-dispersion-pow}.

	\medskip

	We now turn to the proof of Item~\eqref{item:reference-gradient-pl-like}. In particular, we argue that the gradient of the tangent part
	dominates near $\mu^{\star}$ via the decomposition furnished by~\cref{proposition:polyak-step-contraction-reference-loss}. To that end, note that $\loss_{N}$ is $C^{1}$ and vanishes along
	$\mathcal{M}$. Therefore, its gradient is Lipschitz near $\mathcal{M}$ and satisfies
	\begin{equation}
		\norm{\grad \loss_{N}(\mu)} \leq C \cdot \dist_{\mathcal{M}}(\mu) \leq C \tau \disp(y)^{3/2},
		\label{eq:normal-grad-ub}
	\end{equation}
	where the second inequality follows from the inclusion $\mu \in \mathcal{T}_{\tau}$; this proves~\eqref{eq:reference-gradient-pl-like-b}.

	On the other hand,~\cref{eq:tangent-grad-approx-dispersion-pow} suggests that
	\begin{equation}
		\norm{\grad \loss_{T}(y)} \geq c \cdot \disp(y)^{3/2}.
		\label{eq:tangent-grad-lb}
	\end{equation}
	Finally, using the tangent-normal decomposition from~\cref{proposition:polyak-step-contraction-reference-loss}, we obtain
	\begin{align*}
		\norm{\grad \loss(\mu, \bar{\pi})} & \geq
		\norm{\grad \loss_{T}(\mu)} - \norm{\grad \loss_{N}(\mu)}                                   \\
		                                   & \geq
		\norm{\grad \loss_{T}(y)}(1 - C \dist_{\mathcal{M}}(\mu)) - C \tau \dist_{\mathcal{M}}(\mu) \\
		                                   & \gtrsim
		\norm{\grad \loss_{T}(y)} - C' \tau \norm{\grad \loss_{T}(y)}                               \\
		                                   & \gtrsim \norm{\grad \loss_{T}(y)},
	\end{align*}
	where the first step follows from the reverse triangle inequality, the second
	inequality follows from~\cite[Lemma 5.6]{davis2024gradient} and~\eqref{eq:normal-grad-ub},
	and the last two inequalities follow from~\eqref{eq:tangent-grad-lb} and shrinking $\tau$ if necessary.
	This proves~\eqref{eq:reference-gradient-pl-like-a} and Item~\eqref{item:reference-gradient-pl-like}.

	\medskip

	Finally, we turn to the proof of Item~\eqref{item:improved-gradient-perturbation-in-annulus}. Indeed, we have
	\begin{align*}
		\norm{\grad_{\mu} \loss(\mu,\pi)-\grad_{\mu} \loss(\mu,\bar{\pi})} & \leq \const^{(1)} \Delta_{\pi}                                                               \\
		                                                                   & \leq \const^{(1)} \kappa^{-{3}/{2}} \disp(y)^{3/2}                                           \\
		                                                                   & \leq \frac{\const^{(1)}}{c \cdot \kappa^{{3}/{2}}} \norm{\grad_{\mu} \loss(\mu, \bar{\pi})}.
	\end{align*}
	Here the first inequality follows from~\eqref{eq:grad-perturbation}, the second inequality
	follows from the inclusion $\mu \in \mathcal{A}_{\tau, \kappa}$ and the last inequality follows from Item~\eqref{item:reference-gradient-pl-like}.
	This proves Item~\eqref{item:improved-gradient-perturbation-in-annulus}.
\end{proof}

\Cref{lem:relative-gradient-annulus} (in particular, its last item) allows us to relate
the denominators of the Polyak step applied to the mismatched and reference losses when $\mu$ lies inside the annulus $\mathcal{A}_{\tau, \kappa}$.
To relate the numerators, we use a sharper perturbation bound than the estimate from~\cref{proposition:loss-perturbation}
supplied by the next Lemma.

\begin{lemma}[{Perturbed loss on $\mathcal{A}_{\tau, \kappa}$}]
	\label{lemma:relative-value-annulus}
	There exist a constant $C > 0$ and a neighborhood $V$ around $\mu^{\star}$ such
	that for all $\mu \in V$, the following holds:
	\begin{align}
		\label{eq:relative-value-absolute}
		\abs{\loss(\mu, \pi) - \loss(\mu, \bar{\pi})} \leq C \Delta_{\pi} \left(
		\Delta_{\pi} + \disp(y) + \dist_{\mathcal{M}}(\mu)
		\right).
	\end{align}
	Consequently, when \(\mu \in \mathcal A_{\tau,\kappa}\), we have the improved bound:
	\begin{align}
		\label{eq:relative-value-relative}
		\abs{\loss(\mu,\pi)-\loss(\mu,\bar\pi)} \leq
		C \frac{\left(\loss(\mu, \bar{\pi}) - \loss(\theta^{\star}) \right)^{5/4}}{m \pi_{\min} \kappa^{3/2}}.
	\end{align}
	In particular, for fixed \(\tau\) and \(\kappa\), and after shrinking \(V\) if necessary,
	\begin{align}
		\label{eq:relative-value-relative-clean}
		\abs{\loss(\mu,\pi)-\loss(\mu,\bar\pi)} \leq
		\frac{C}{\kappa} \cdot \left(\loss(\mu, \bar{\pi}) - \loss(\theta^{\star})\right)
	\end{align}
	for every \(\mu\in V \cap \mathcal A_{\tau,\kappa}\), provided \(\disp(y)\) is sufficiently small.
\end{lemma}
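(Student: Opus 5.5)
The plan is a second-order expansion of $\Delta\loss(\mu) := \loss(\mu,\pi)-\loss(\mu,\bar\pi)$ in $\mu$ around $\mu^{\star}$, exploiting that $\Delta\loss$ vanishes to first order in the ``cluster mean'' directions. Using the representations \eqref{eq:mismatched-density-q}--\eqref{eq:matched-density-q}, write
\[
\Delta\loss(\mu) = \E_{X\sim p^\star}\Big[\log\tfrac{p_{\mu,\bar\pi}(X)}{p_{\mu,\pi}(X)}\Big]
= \E_{X\sim p^\star}\Big[-\log\Big(1+\textstyle\sum_\ell (\widehat\pi_\ell-\pi_\ell^\star)\,r_\ell(X;\mu)\Big)\Big],
\]
with $r_\ell := q_\ell/p_{\mu,\bar\pi}$. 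Expand the logarithm to second order. The quadratic remainder is $O(\Delta_\pi^2)$ uniformly on a compact neighborhood, by the density lower bounds \eqref{eq:reference-density-lb-q}--\eqref{eq:mismatched-density-lb-q}. The linear term is $-\sum_\ell(\widehat\pi_\ell-\pi_\ell^\star) g_\ell(\mu)$, where $g_\ell(\mu):=\E_{p^\star}[r_\ell(X;\mu)]$.

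Next I study $g_\ell$. At any $\mu$ of the form $\nu^{\star}(\mu)$ (all students at their teacher), $q_\ell=\phi(\cdot\mid\mu_\ell^\star)$ and $p_{\mu,\bar\pi}=p^\star$, so $g_\ell=1$ for every $\ell$. Since $\sum_\ell(\widehat\pi_\ell-\pi_\ell^\star)=0$, the linear term then vanishes identically. This is the same stationarity that gave \eqref{eq:collapsed-weight-gradient-vanishes}. More generally, $\sum_\ell \pi^\star_\ell g_\ell = 1$ always, but we need each $g_\ell$ individually close to $1$. Set $h(\mu):=\sum_\ell(\widehat\pi_\ell-\pi_\ell^\star)(g_\ell(\mu)-1)$, so the linear term equals $-h(\mu)$. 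Then $h$ is smooth with $h(\mu^\star)=0$ and $|h|\lesssim \Delta_\pi\,|g-1|$.

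It remains to show $|g_\ell(\mu)-1|\lesssim \disp(y)+\dist_{\mathcal M}(\mu)$, after also controlling the cluster-bias part. First, $g_\ell$ is smooth, so $|g_\ell(\mu)-g_\ell(y)|\lesssim\dist_{\mathcal M}(\mu)$. On $\mathcal M$, use \cref{fact:local-C2-expansion}: $y=\mu^\star+u+O(\|u\|^2)$ with $u\in\mathcal T$. The derivative of $g_\ell$ at $\mu^\star$ in direction $u$ depends on $u$ only through the cluster averages $\cavg(u)=0$. This holds because $q_\ell$ and $p_{\mu,\bar\pi}$ to first order only see $\sum_{i\in S_\ell}\pi_i u_i$, exactly as in the factorization of \cref{prop:hessian-block-identified}. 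Hence $g_\ell(y)-1=O(\|u\|^2)=O(\disp(y))$ by \eqref{eq:dispersion-approx-distance-squared}. Combining gives \eqref{eq:relative-value-absolute}.

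For \eqref{eq:relative-value-relative} on $\mathcal A_{\tau,\kappa}$, bound each term by a power of $\loss(\mu,\bar\pi)$. Outside the core, $\Delta_\pi\le \kappa^{-3/2}\disp(y)^{3/2}$. In the tube, $\dist_{\mathcal M}(\mu)\le\tau\disp(y)^{3/2}\lesssim\disp(y)$. Thus the right side of \eqref{eq:relative-value-absolute} is $\lesssim \kappa^{-3/2}\disp(y)^{5/2}$ (using $\Delta_\pi\lesssim\disp(y)$ for small $\disp$). Then use $\disp(y)^2\lesssim\loss_T(y)$ from \eqref{eq:tangent-loss-approx-squared-dispersion}, and $\loss_T(y)\lesssim\loss(\mu,\bar\pi)$ because $\loss_N\ge$ a negative quantity of size $O(\dist_{\mathcal M}\cdot\|\grad\loss_T\|)$, which is dominated in the tube. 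This yields $\disp(y)^{5/2}\lesssim \loss(\mu,\bar\pi)^{5/4}$, with the $\pi_{\min}$ factor entering through the comparison of $\disp$ with the unweighted distance. Finally, \eqref{eq:relative-value-relative-clean} follows because $\loss(\mu,\bar\pi)^{1/4}\kappa^{-1/2}\le 1$ once $\disp(y)$ is small.

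The main obstacle is the first-order cancellation $D g_\ell(\mu^\star)[u]=0$ for $u\in\mathcal T$. I would verify it by differentiating $q_\ell$ at $\mu^\star$, which gives $\tfrac{1}{\widehat\pi_\ell}\sum_{i\in S_\ell}\pi_i\phi(x\mid\mu^\star_\ell)(x-\mu_\ell^\star)^\T u_i$. This is linear in $\cavg_\ell(u)$, and similarly for $p_{\mu,\bar\pi}$ with the rescaled weights. By \eqref{eq:equal-tangent-space}, both vanish on $\mathcal T$.
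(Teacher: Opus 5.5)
Your argument is correct, but the first estimate is organized differently from the paper's proof.

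\textbf{The paper's route.} It Taylor-expands $h(\mu)=\loss(\mu,\pi)-\loss(\mu,\bar\pi)$ in $\mu$ around $\mu^\star$.
\begin{itemize}
\item It uses the derivative bounds $\norm{\grad h}\le \const^{(1)}\Delta_\pi$ and $\opnorm{\grad^2 h}\le\const^{(2)}\Delta_\pi$ from \cref{proposition:loss-perturbation}.
\item It shows that $\grad h(\mu^\star)=\grad\loss(\mu^\star,\pi)$ annihilates the tangent space, via the factorization $\psi_i=\pi_i\gamma_\ell/\widehat\pi_\ell$.
\item It handles the off-manifold part by the mean value theorem.
\item It imports $h(\mu^\star)\lesssim\Delta_\pi^2$ separately, from the collapsed-model expansion in \cref{corollary:mismatched-local-floor}.
\end{itemize}

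\textbf{Your route.} You expand the logarithm in the weight perturbation $t=\sum_\ell(\widehat\pi_\ell-\pi_\ell^\star)r_\ell$. This quantity is $O(\Delta_\pi)$ pointwise; in fact $|t|\le\Delta_\pi\sum_\ell\pi_\ell^\star r_\ell=\Delta_\pi$.
\begin{itemize}
\item The $\Delta_\pi^2$ term therefore comes out uniformly in $\mu$, not only at $\mu^\star$.
\item The first-order term factors exactly as $\Delta_\pi$-weighted multiples of $g_\ell(\mu)-1$.
\item Each $g_\ell$ is a fixed smooth function: it depends only on the within-cluster relative weights, which $\pi$ and $\bar\pi$ share. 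It equals $1$ at $\mu^\star$, and its derivative there sees $u$ only through $\cavg(u)$.
\end{itemize}

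\textbf{What each route buys.} Yours is self-contained. It needs neither the Hessian perturbation bound nor the optimal-value corollary, and it makes the cancellation at $\mu^\star$ transparent: $\int\phi_\ell=1$ and $\sum_\ell(\widehat\pi_\ell-\pi_\ell^\star)=0$. The paper's route reuses perturbation machinery it needs elsewhere anyway. The underlying first-order cancellation on $\mathcal T$ is the same in both.

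\textbf{The relative bound.} Both arguments need $\disp(y)^2\lesssim\loss(\mu,\bar\pi)$ on the annulus, and you obtain it differently.
\begin{itemize}
\item The paper gets it from the identifiability estimate \eqref{eq:id-mean-quartic-growth} together with the bias/dispersion decomposition.
\item You go through the tangent growth \eqref{eq:tangent-loss-approx-squared-dispersion} plus control of the normal part $\loss_N$.
\end{itemize}
Your justification of the $\loss_N$ step is loose: it is phrased via $\norm{\grad\loss_T}$ rather than the full gradient. A crude bound already suffices, though. Since $\grad\loss(\mu^\star,\bar\pi)=0$ and the gradient is locally Lipschitz,
$|\loss_N(\mu)|\le\sup_{[y,\mu]}\norm{\grad\loss(\cdot,\bar\pi)}\,\dist_{\mathcal M}(\mu)\lesssim\disp(y)^{1/2}\cdot\tau\disp(y)^{3/2}$.
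This is absorbed into $c\,\disp(y)^2\le\loss_T(y)$ once $\tau$ is small.
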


\begin{proof}
	Define $h(\mu) := \loss(\mu, \pi) - \loss(\mu, \bar{\pi})$; at $\mu^{\star}$, $h(\mu) = \loss(\mu^{\star}, \pi)$ since
	the second loss term vanishes. We now analyze the growth of $h$ near $\mu^{\star}$. For
	any $y \in \mathcal{M}$ near $\mu^{\star}$, \cref{fact:local-C2-expansion} yields
	\begin{equation}
		y = \mu^{\star} + u + O(\norm{y - \mu^{\star}}^2), \;\; \text{where $u \in \mathcal{T}_{\mu^{\star}}\mathcal{M}$.}
		\label{eq:local-C2-expansion-again}
	\end{equation}
	Using this estimate in a second-order Taylor expansion of $h(\mu)$, we deduce that
	\begin{align}
		h(y) - h(\mu^{\star}) & = \ip{\grad h(\mu^{\star}), u} + \frac{1}{2} \ip{u, \grad^2 h(\mu^{\star}), u} + O(\Delta_{\pi} \norm{y - \mu^{\star}}^2), \label{eq:function-gap-second-order}
	\end{align}
	where the last term in~\eqref{eq:function-gap-second-order} follows from the following claim:
	\begin{claim}
		Fix $y \in \mathcal{M}$ with the expansion~\eqref{eq:local-C2-expansion-again}. Then
		\begin{equation}
			\ip{\grad h(\mu^{\star}), y - \mu^{\star} - u} = O(\Delta_{\pi} \norm{y - \mu^{\star}}^2)
		\end{equation}
	\end{claim}
	\begin{proof}
		Clearly, $\ip{\grad h(\mu^{\star}), y - \mu^{\star} - u} = O(\norm{\grad h(\mu^{\star})} \norm{y - \mu^{\star}}^2)$.
		Moreover, we have
		\begin{align*}
			\norm{\grad h(\mu^{\star})} & =
			\norm{\grad \loss(\mu^{\star}, \pi) - \grad \loss(\mu^{\star}, \bar{\pi})} \\
			                            & \leq \const^{(1)} \Delta_{\pi},
		\end{align*}
		where the inequality follows from~\eqref{eq:grad-perturbation}. This completes the proof of the Claim.
	\end{proof}

	We now simplify the terms on the RHS of~\eqref{eq:function-gap-second-order}. We focus on the gradient first.

	\begin{claim} \label{claim:local-C2-expansion-gradient-vanishes}
		For any $u \in \mathcal{T}_{\mu^{\star}}\mathcal{M}$, $\ip{\grad h(\mu^{\star}), u} = 0$.
	\end{claim}
	\begin{proof}
		From~\cref{cor:exact-fixed-weight-ravine} it follows that
		$\loss(\mu^{\star}, \bar{\pi}) = 0$ with $\mu^{\star}$ an isolated minimizer.
		Therefore, $\grad h(\mu^{\star}) = \grad \loss(\mu^{\star}, \pi)$. By~\cref{lemma:mu-gradient}, we have
		\begin{align*}
			[\grad \loss(\mu^{\star}, \pi)]_{i} & =
			\expec[X \sim p^{\star}]{\psi_{i}(X; \mu^{\star}, \pi)(\mu_{i}^{\star} - X)},                    \\
			\psi_{i}(X; \mu^{\star}, \pi)       & =
			\frac{\pi_{i} \phi(X \mid \mu^{\star}_{i})}{\sum_{q=1}^{n} \pi_{q} \phi(X \mid \mu_{q}^{\star})} \\
			                                    & =
			\frac{\pi_{i} \phi(X \mid \mu^{\star}_{\ell})}{
				\sum_{j = 1}^{m} \widehat{\pi}_{j} \phi(X \mid \mu_{j}^{\star})
			}                                                                                                \\
			                                    & =
			\frac{\pi_{i}}{\widehat{\pi}_{\ell}} \cdot \frac{\widehat{\pi}_{\ell} \phi(X \mid \mu^{\star}_{\ell})}{
				\sum_{j = 1}^{m} \widehat{\pi}_{j} \phi(X \mid \mu^{\star}_{j})
			}                                                                                                \\
			                                    & =
			\pi_{i} \cdot \frac{\gamma_{\ell}(X; \mu^{\star})}{\widehat{\pi}_{\ell}},
		\end{align*}
		writing $\gamma_{\ell}$ for the $\ell^{\text{th}}$ responsibility of the collapsed model where
		all students in $S_{\ell}$ have means equal to $\mu^{\star}_{\ell}$. The preceding display shows
		$\psi_{i}(X; \mu^{\star}, \pi) = \frac{\pi_i}{\pi_j} \psi_{j}(X; \mu^{\star}, \pi)$ whenever $i, j \in S_{\ell}$.
		Expanding the expression $\ip{\grad \loss(\mu^{\star}, \pi), u}$, we obtain
		\begin{align*}
			\ip{\grad \loss(\mu^{\star}, \pi), u} & =
			\expec[X \sim p^{\star}]{
				\sum_{i = 1}^{n} \psi_{i}(X; \mu^{\star}, \pi)\ip{\mu_{i}^{\star} - X, u_i}
			}                                         \\
			                                      & =
			\expec[X \sim p^{\star}]{
				\sum_{\ell = 1}^{m} \ip[\Big]{\mu^{\star}_{\ell} - X, \sum_{i \in S_{\ell}} \psi_{i}(X; \mu^{\star}, \pi) u_i}
			}                                         \\
			                                      & =
			\expec[X \sim p^{\star}]{
				\sum_{\ell = 1}^{m} \ip[\Big]{\mu^{\star}_{\ell} - X, \frac{\gamma_{\ell}(X; \mu^{\star})}{\widehat{\pi}_{\ell}} \sum_{i \in S_{\ell}} \pi_{i} u_i}
			}
		\end{align*}
		Finally, by an argument identical to the one used in the proof of~\cref{lem:relative-gradient-annulus}, we
		deduce that $\sum_{i \in S_{\ell}} \pi_{i} u_{i} = 0$, since $u \in \mathcal{T}_{\mu^{\star}}\mathcal{M}$.
		This completes the proof.
	\end{proof}

	With~\cref{claim:local-C2-expansion-gradient-vanishes} at hand, we turn to the quadratic
	form induced by the Hessian. We have
	\begin{align*}
		\abs{\ip{u, \grad^2 h(\mu^{\star}) u}} & =
		\abs{\ip{u, (\grad^2 \loss(\mu^{\star}, \pi) - \grad^2 \loss(\mu^{\star}, \bar{\pi})) u}} \\
		                                       & \leq
		\norm{u}^2 \cdot \const^{(2)} \Delta_{\pi}                                                \\
		                                       & \lesssim
		\norm{y - \mu^{\star}}^2 \const^{(2)} \Delta_{\pi}                                        \\
		                                       & \lesssim
		\disp(y) \cdot \Delta_{\pi},
	\end{align*}
	where the penultimate inequality follows from the tangent-normal decomposition of $y$ and
	the last inequality follows from~\eqref{eq:dispersion-approx-distance-squared}.
	We conclude that, for $y \in \mathcal{M} \cap V$, where $V$ is defined in~\cref{lem:relative-gradient-annulus},
	\begin{equation}
		h(y) - h(\mu^{\star}) = O(\Delta_{\pi} \cdot \disp(y))
		\label{eq:grad-diff-on-manifold}
	\end{equation}
	Now, suppose that $\mu \notin \mathcal{M}$ with $y = P_{\mathcal{M}}(\mu)$. By the mean value theorem, we have for $t\in[0,1]$
	\begin{align*}
		\abs{h(\mu) - h(y)} & = \abs{\ip{\grad h(y + t(\mu - y)), \mu - y}}                      \\
		                    & \leq \norm{\grad h(y + t(\mu - y))} \cdot \dist_{\mathcal{M}}(\mu) \\
		                    & \leq \const^{(1)} \Delta_{\pi} \cdot \dist_{\mathcal{M}}(\mu),
	\end{align*}
	where the last inequality follows from the estimate~\eqref{eq:grad-perturbation}. From this and~\eqref{eq:grad-diff-on-manifold}, we deduce
	\begin{equation}
		\abs{h(\mu) - h(\mu^{\star})}
		\leq C \Delta_{\pi}
		\left(\dist_{\mathcal{M}}(\mu) + \disp(y)\right).
		\label{eq:h-diff}
	\end{equation}
	Finally, the derivation
	used between~\eqref{eq:collapsed-loss-difference} and~\eqref{eq:collapsed-loss-small-bound} in~\cref{corollary:mismatched-local-floor}
	supplies the bound
	\[
		h(\mu^{\star}) = \loss(\mu^{\star}, \pi) - \loss(\theta^{\star}) \lesssim \Delta_{\pi}^2,
	\]
	from which~\eqref{eq:relative-value-absolute} immediately follows:
	\[
		\abs{h(\mu)}
		\leq C\Delta_{\pi}
		\left(\Delta_{\pi}+\disp(y)+\dist_{\mathcal M}(\mu)\right).
	\]

	To prove~\eqref{eq:relative-value-relative}, recall \(\mu \in \mathcal{A}_{\tau,\kappa}\) yields
	\[
		\disp(y) \geq \kappa \Delta_{\pi}^{2/3}, \;\; \text{and} \;\;
		\dist_{\mathcal{M}}(\mu) \leq \tau \disp(y)^{3/2}.
	\]
	Plugging these into the right-hand side of~\eqref{eq:relative-value-absolute} leads to
	\begin{align*}
		\abs{h(\mu)}
		 & \lesssim
		\kappa^{-{3}/{2}}\disp(y)^{{5}/{2}}
		\left(
		1 + \kappa^{-{3}/{2}}\disp(y)^{{1}/{2}}
		+ \tau \disp(y)^{{1}/{2}}
		\right)     \\
		 & \leq
		C \cdot \kappa^{-3/2} \disp(y)^{5/2},
	\end{align*}
	assuming $\disp(y)$ is sufficiently small. From proximity to $\mu^{\star}$, the growth bound from~\cref{thm:mo-identifiability-input} and the bias-dispersion
	decomposition from~\cref{cor:grouped-bias-dispersion}, it follows that
	\[
		\disp(y)^2 \lesssim \frac{1}{m \pi_{\min}} \left( \loss(\mu, \bar{\pi}) - \loss(\theta^{\star}) \right) \implies
		\abs{h(\mu)} \lesssim C \cdot \frac{(\loss(\mu, \bar{\pi}) - \loss(\theta^{\star}))^{5/4}}{\kappa^{3/2}}.
	\]
	This proves \eqref{eq:relative-value-relative}, from which~\eqref{eq:relative-value-relative-clean} easily follows.
\end{proof}

The previous two lemmas supply the relative denominator and numerator estimates needed to compare a Polyak
step on the weight-mismatched objective with a Polyak step on the reference objective inside the annulus $\mathcal{A}_{\tau, \kappa}$.
We record that comparison below.

\begin{lemma}[Polyak step comparison on $\mathcal{A}_{\tau, \kappa}$] \label{lemma:polyak-step-comparison} Define the following two quantities:
	\begin{subequations}
		\begin{align}
			\mu^{+}       & := \mu - \frac{\loss(\mu, \pi)}{\norm{\grad \loss(\mu, \pi)}^2} \cdot \grad \loss(\mu, \pi) \label{eq:polyak-actual}                       \\
			\bar{\mu}^{+} & := \mu - \frac{\loss(\mu, \bar{\pi})}{\norm{\grad \loss(\mu, \bar{\pi})}^2} \cdot \grad \loss(\mu, \bar{\pi}). \label{eq:polyak-reference}
		\end{align}
	\end{subequations}
	For any sufficiently small \( \tau > 0 \), and after shrinking \(V\) if necessary,
	there exists a constant \(C > 0\) and a threshold \(\kappa_{0} \geq 1\) such that for any $\kappa \geq \kappa_{0}$
	and $\mu \in V \cap \mathcal{A}_{\tau, \kappa}$, the following hold:
	\begin{align}
		\norm{\mu^{+} - \bar{\mu}^{+}} & \leq \frac{C}{\kappa^{3/2}} \cdot \frac{\loss(\mu, \bar{\pi})}{\norm{\grad \loss(\mu, \bar{\pi})}}
		\leq \frac{C}{\kappa} \norm{P_{\mathcal{M}}(\mu) - \mu^{\star}}. \label{eq:relative-polyak-comparison-distance}
	\end{align}
\end{lemma}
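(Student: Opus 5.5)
The plan is to compare the two Polyak steps through their step-length scalars and their direction vectors separately. Write $f := \loss(\cdot,\pi)$, $\bar f := \loss(\cdot,\bar\pi)$, $g := \grad f(\mu)$ and $\bar g := \grad \bar f(\mu)$. Then
\[
\mu^+ - \bar\mu^+ = \frac{\bar f(\mu)}{\norm{\bar g}^2}\bar g - \frac{f(\mu)}{\norm{g}^2} g .
\]
Set $\epsilon_g := \norm{g-\bar g}/\norm{\bar g}$ and $\epsilon_f := \abs{f(\mu)-\bar f(\mu)}/\bar f(\mu)$; recall $\loss(\theta^\star)=0$.

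By Item~\eqref{item:improved-gradient-perturbation-in-annulus} of \cref{lem:relative-gradient-annulus}, $\epsilon_g \le C\kappa^{-3/2}$. For the values, \eqref{eq:relative-value-relative} combined with $\bar f(\mu)^{1/4}\lesssim \disp(y)^{1/2}$ gives $\epsilon_f \le C\kappa^{-3/2}\disp(y)^{1/2}$. This is at most $C\kappa^{-3/2}$ once $V$ is shrunk.

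Taking $\kappa_0$ large enough that $\epsilon_g\le 1/2$, the map $v\mapsto v/\norm{v}^2$ is Lipschitz with constant of order $1/\norm{\bar g}^2$ on the ball $B(\bar g,\norm{\bar g}/2)$. Hence
\[
\norm{g/\norm{g}^2 - \bar g/\norm{\bar g}^2}\lesssim \epsilon_g/\norm{\bar g}.
\]
Adding and subtracting $\frac{f(\mu)}{\norm{\bar g}^2}\bar g$ then yields
\[
\norm{\mu^+-\bar\mu^+}\le \frac{\abs{f(\mu)-\bar f(\mu)}}{\norm{\bar g}} + f(\mu)\cdot\frac{C\epsilon_g}{\norm{\bar g}}\lesssim (\epsilon_f + (1+\epsilon_f)\epsilon_g)\frac{\bar f(\mu)}{\norm{\bar g}} \lesssim \kappa^{-3/2}\frac{\bar f(\mu)}{\norm{\bar g}}.
\]
This is the first inequality in \eqref{eq:relative-polyak-comparison-distance}.

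For the second inequality, I would bound $\bar f(\mu)$ from above and $\norm{\bar g}$ from below in terms of $\disp(y)$. The lower bound is $\norm{\bar g}\ge c\,\disp(y)^{3/2}$, by \eqref{eq:reference-gradient-pl-like-a}. For the upper bound, use the decomposition \eqref{eq:ref-loss-decomposition}: by \eqref{eq:tangent-loss-approx-squared-dispersion}, $\loss_T \le C\disp(y)^2$. For the normal part, $\loss_N$ vanishes on $\mathcal M$ and $\norm{\grad\loss_N}\lesssim \dist_{\mathcal M}$ along the segment from $y$ to $\mu$, so $\loss_N(\mu)\lesssim \dist_{\mathcal M}(\mu)^2\le \tau^2\disp(y)^3$. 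Thus $\bar f(\mu)\lesssim \disp(y)^2$, and so $\bar f(\mu)/\norm{\bar g}\lesssim \disp(y)^{1/2}\lesssim \norm{y-\mu^\star}$ by \eqref{eq:dispersion-approx-distance-squared}. This gives $\norm{\mu^+-\bar\mu^+}\lesssim \kappa^{-3/2}\norm{P_{\mathcal M}(\mu)-\mu^\star}$, which is stronger than the claimed $C/\kappa$ since $\kappa\ge1$.

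The main obstacle is the normal-part bound on $\loss_N(\mu)$. The segment from $y$ to $\mu$ must stay in the region where \eqref{eq:normal-grad-ub} holds. This should follow by shrinking $V$ so that the entire tube neighborhood lies in the domain of $P_{\mathcal M}$, with $\dist_{\mathcal M}$ along the segment bounded by $\dist_{\mathcal M}(\mu)$. A cruder alternative is to bound $\loss_N(\mu)\lesssim \dist_{\mathcal M}(\mu)\le\tau\disp(y)^{3/2}$. That bound is still $\lesssim \disp(y)^{3/2}$, but it would only give $\bar f/\norm{\bar g}=O(1)$, which is insufficient. So the quadratic vanishing of $\loss_N$ is genuinely needed, and I would verify it carefully via $\loss_N\in C^1$ with $\grad\loss_N=0$ on $\mathcal M$.
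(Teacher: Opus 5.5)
Your proposal is correct and follows essentially the same route as the paper. Both arguments turn \eqref{eq:relative-value-relative} and Item~(iii) of \cref{lem:relative-gradient-annulus} into $O(\kappa^{-3/2})$ relative errors in the Polyak step: you add and subtract $\loss(\mu,\pi)\grad\loss(\mu,\bar\pi)/\norm{\grad\loss(\mu,\bar\pi)}^2$ and use the Lipschitz bound for $v \mapsto v/\norm{v}^2$, while the paper splits into $\abs{\zeta_\pi - \zeta_{\bar\pi}}$ and $\zeta_\pi\norm{\grad h(\mu)}$. Both then bound $\loss(\mu,\bar\pi)/\norm{\grad\loss(\mu,\bar\pi)} \lesssim \disp(y)^{1/2}$ through the tangent--normal decomposition and \eqref{eq:reference-gradient-pl-like-a}, and your integration of \eqref{eq:normal-grad-ub} along the segment from $y$ to $\mu$ is simply a more explicit justification of the paper's claim $\loss_N(\mu) \lesssim \dist^2_{\mathcal{M}}(\mu)$.
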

\begin{proof}
	Fix $\mu \in V \cap \mathcal{A}_{\tau, \kappa}$ and denote $h(\mu) := \loss(\mu, \pi) - \loss(\mu, \bar{\pi})$. We have
	\begin{subequations}
		\begin{align}
			\abs{h(\mu)}        & \leq \frac{C}{\kappa^{3/2}} \left(\loss(\mu, \bar{\pi}) - \loss(\theta^{\star})\right)^{5/4}, \label{eq:polyak-fval-diff} \\
			\norm{\grad h(\mu)} & \leq \frac{C}{\kappa^{3/2}} \norm{\grad \loss(\mu, \bar{\pi})}, \label{eq:polyak-grad-diff}
		\end{align}
	\end{subequations}
	with~\eqref{eq:polyak-fval-diff} following from~\eqref{eq:relative-value-relative} and~\eqref{eq:polyak-grad-diff} following from
	Item~(\ref{item:improved-gradient-perturbation-in-annulus}) of~\cref{lem:relative-gradient-annulus}.

	We now compare the Polyak steps themselves. Since $\loss(\theta^{\star}) = 0$, we have
	\begin{align}
		\norm{\mu^{+} - \bar{\mu}^{+}} & =
		\norm[\Big]{
			\underbrace{\frac{\loss(\mu, \pi)}{\norm{\grad \loss(\mu, \pi)}^2}}_{\zeta_{\pi}} \cdot \grad \loss(\mu, \pi) -
			\underbrace{\frac{\loss(\mu, \bar{\pi})}{\norm{\grad \loss(\mu, \bar{\pi})}^2}}_{\zeta_{\bar{\pi}}} \cdot \grad \loss(\mu, \bar{\pi})
		}                                     \notag \\
		                               & \leq
		\abs{\zeta_{\pi} - \zeta_{\bar{\pi}}} \norm{\grad \loss(\mu, \bar{\pi})}
		+ \zeta_{\pi} \cdot \norm{\grad h(\mu)}.
		\label{eq:polyak-step-difference}
	\end{align}
	We handle each term in~\eqref{eq:polyak-step-difference} separately. First, we argue that
	\begin{align}
		\zeta_{\pi} \norm{\grad h(\mu)} & =
		\left[ \frac{h(\mu)}{\norm{\grad \loss(\mu, \pi)}^2} +
		\frac{\loss(\mu, \bar{\pi})}{\norm{\grad \loss(\mu, \pi)}^2} \right] \cdot \norm{\grad h(\mu)}                                                      \notag          \\
		                                & \leq
		\left[ \frac{C \kappa^{-3/2} \loss(\mu, \bar{\pi})^{5/4} + \loss(\mu, \bar{\pi})}{\norm{\grad \loss(\mu, \pi)}^2} \right] \cdot \norm{\grad h(\mu)} \notag          \\
		                                & \leq
		(1 + \epsilon) \cdot \frac{\loss(\mu, \bar{\pi})}{\norm{\grad \loss(\mu, \pi)}^2} \cdot \frac{C}{\kappa^{3/2}} \norm{\grad \loss(\mu, \bar{\pi})}   \notag          \\
		                                & \leq
		\frac{(1 + \epsilon) C \kappa^{-3/2}}{(1 - C \kappa^{-3/2})^2} \cdot \zeta_{\bar{\pi}} \norm{\grad \loss(\mu, \bar{\pi})}                           \notag          \\
		                                & \leq \frac{C}{\kappa^{3/2}} \cdot \frac{\loss(\mu, \bar{\pi})}{\norm{\grad \loss(\mu, \bar{\pi})}}, \label{eq:polyak-step-diff-i}
	\end{align}
	where the first inequality follows from~\eqref{eq:polyak-fval-diff}, the second inequality follows
	from~\eqref{eq:polyak-grad-diff} and the fact that $\loss(\mu, \bar{\pi})$ dominates $\loss(\mu, \bar{\pi})^{5/4}$
	when $\mu$ is sufficiently close to $\mu^{\star}$, the penultimate inequality again follows from~\eqref{eq:polyak-grad-diff},
	and the last inequality follows from adjusting $\kappa$ and $C$ and relabeling.

	Now, we bound $\abs{\zeta_{\pi} - \zeta_{\bar{\pi}}}$. Indeed, we obtain the difference
	\begin{align}
		\abs{\zeta_{\pi} - \zeta_{\bar{\pi}}} & =
		\abs*{\frac{\loss(\mu, \pi)}{\norm{\grad \loss(\mu, \bar{\pi})}^2 (1 + O(C\kappa^{-3/2}))^2} -
		\frac{\loss(\mu, \bar{\pi})}{\norm{\grad \loss(\mu, \bar{\pi})}^2}} \notag \\
		                                      & =
		\abs*{
			\frac{h(\mu)}{\norm{\grad \loss(\mu,\bar{\pi})}^2 (1 + O(C \kappa^{-3/2}))^2} -
			\frac{\loss(\mu, \bar{\pi})}{\norm{\grad \loss(\mu, \bar{\pi})}^2} \left[
				1 - \frac{1}{(1 + O(C \kappa^{-3/2}))^2}
				\right]
		}                                                                   \notag \\
		                                      & \lesssim
		\frac{\loss(\mu, \bar{\pi})}{\norm{\grad \loss(\mu, \bar{\pi})}^2} \abs*{
			\frac{(1 + O(C \kappa^{-3/2}))^2 - 1}{(1 + O(C \kappa^{-3/2}))^2}
		}                                                                   \notag \\
		                                      & \leq
		\frac{C}{\kappa^{3/2}} \cdot \zeta_{\bar{\pi}},
		\label{eq:polyak-step-diff-ii}
	\end{align}
	again using~\eqref{eq:polyak-fval-diff} and~\eqref{eq:polyak-grad-diff}, as well
	as adjusting $\kappa$ and $C$ and relabeling if necessary.
	Finally, we plug~\cref{eq:polyak-step-diff-i,eq:polyak-step-diff-ii} into~\eqref{eq:polyak-step-difference} to obtain
	the first inequality in~\eqref{eq:relative-polyak-comparison-distance}:
	\begin{align*}
		\norm{\mu^{+} - \bar{\mu}^{+}} & \leq \frac{2C}{\kappa^{3/2}} \cdot \frac{\loss(\mu, \bar{\pi})}{\norm{\grad \loss(\mu, \bar{\pi})}}.
	\end{align*}

	We now prove the second inequality in~\eqref{eq:relative-polyak-comparison-distance}. From the tangent-normal expansion, we deduce
	\begin{align*}
		\loss(\mu, \bar{\pi}) & = \loss_{T}(y) + \loss_{N}(\mu) \lesssim \disp(y)^2 + \dist^2_{\mathcal{M}}(\mu) \lesssim \disp(y)^2 + \tau^2 \disp(y)^3,
	\end{align*}
	where the first inequality follows from Item~(\ref{item:reference-ravine-tangent-growth}) of~\cref{lem:relative-gradient-annulus}
	and the fact that $\loss_{N}$ is $C^{\infty}$ and minimized on $\mathcal{M}$, and the second inequality follows from $\mu \in \mathcal{T}_{\tau}$. After shrinking \(V\) if necessary, the term \(\tau^2\disp(y)^3\) is absorbed into
	\(\disp(y)^2\). On the other hand, Item~(\ref{item:reference-gradient-pl-like}) of
	\cref{lem:relative-gradient-annulus} gives
	\[
		\norm{\grad \loss(\mu, \bar{\pi})} \gtrsim \disp(y)^{3/2}.
	\]
	using Item~(\ref{item:reference-gradient-pl-like}) of~\cref{lem:relative-gradient-annulus}.
	Combining the two inequalities yields
	\[
		\frac{\loss(\mu, \bar{\pi})}{\norm{\grad \loss(\mu, \bar{\pi})}} \lesssim \disp(y)^{1/2} \lesssim \norm{P_{\mathcal{M}}(\mu) - \mu^{\star}},
	\]
	where the last inequality follows from Item~(\ref{item:reference-ravine-tangent-growth}) of~\cref{lem:relative-gradient-annulus}.
	This proves~\eqref{eq:relative-polyak-comparison-distance}, after potentially enlarging the constant $C > 0$.
\end{proof}

Corollary~\ref{proposition:polyak-step-contraction-reference-loss} is applicable to
iterates near the manifold $\mathcal{M}$; however, the Polyak step may push $\mu^{+}$ (respectively, $\bar{\mu}^{+}$) far from the manifold and outside
the annulus $\mathcal{A}_{\tau, \kappa}$. Our next result shows that, outside the tube $\mathcal{T}_{\tau}$, short gradient steps on $\loss(\mu, \pi)$
reduce the distance $\dist_{\mathcal{M}}(\mu)$. To do so, we leveraging the perturbative analysis in~\cref{proposition:loss-perturbation}.
First, we record a sufficient condition for re-entering the annulus $\mathcal{A}_{\tau, \kappa}$.

\begin{lemma}[Gradient descent approaches the ravine]
	\label{lem:short-gd-ravine-recovery}
	Fix $\mu^{(0)} := \mu$ and define
	\[
		\mu^{(j+1)}=\mu^{(j)}- \eta \grad \loss(\mu^{(j)},\pi).
	\]
	Then, for any fixed \(\rho>0\), there exists a neighborhood \(U\) of \(\mu^\star\),
	a stepsize threshold \(\eta_{\mathsf{ub}}>0\), and constants \(c,C>0\)
	such that the following holds: if $\eta < \eta_{\mathsf{ub}}$, \(\mu^{(j)}\in U \setD \mathcal C_{\kappa}^{\mathrm{split}}\) and $\mu^{(j)}\notin\mathcal{T}_{\tau}$ for all $j=0,...,k-1$,
	we have that:
	\[
		\dist_{\mathcal{M}}(\mu^{(j+1)}) \leq \left[
			1 - c \eta + \frac{C \eta}{\tau} \left(\rho + \frac{1}{\kappa^{3/2}}\right)
			\right] \cdot \dist_{\mathcal{M}}(\mu^{(j)}).
	\]
	Consequently, if the following inequalities hold:
	\begin{align}
		\label{eq:short-gd-annulus-entry-size}
		\frac{C}{\tau}\left(\rho+\kappa^{-3/2}\right)
		\leq
		\frac{c}{2}
		\quad \text{and} \quad
		(1-\tfrac c2\eta)^k
		\dist_{\mathcal M}(\mu^{(0)})
		\leq
		\tau\disp(y^{(k)})^{3/2},
	\end{align}
	then it follows that $\mu^{(k)} \in \mathcal{A}_{\tau, \kappa}$.
\end{lemma}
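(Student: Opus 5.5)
The plan is to view one gradient step on the mismatched loss as a perturbation of a gradient step on the reference loss $\loss(\cdot,\bar\pi)$. The reference loss admits the ravine $\mathcal{M}$ by \cref{cor:exact-fixed-weight-ravine}, so \cref{cor:ravine-short-step-input} applies to it.

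Write
\[
\mu^{(j+1)} = \underbrace{\mu^{(j)} - \eta\grad\loss(\mu^{(j)},\bar\pi)}_{\bar\mu^{(j+1)}} - \eta\bigl(\grad\loss(\mu^{(j)},\pi)-\grad\loss(\mu^{(j)},\bar\pi)\bigr).
\]
Since $\dist_{\mathcal M}(\mu^{(j+1)})\le \norm{\mu^{(j+1)} - P_{\mathcal M}(\mu^{(j)})}$, the triangle inequality gives
\[
\dist_{\mathcal M}(\mu^{(j+1)}) \le \norm{\bar\mu^{(j+1)}-P_{\mathcal M}(\mu^{(j)})} + \eta\,\const^{(1)}\Delta_\pi .
\]
For the first term, \cref{cor:ravine-short-step-input} with $p=4$ yields the bound $(1-c_{\mathrm G}\eta)\dist_{\mathcal M}(\mu^{(j)}) + \rho\eta\,\norm{y^{(j)}-\mu^\star}^3$, where $y^{(j)}=P_{\mathcal M}(\mu^{(j)})$. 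The second term is controlled by \eqref{eq:grad-perturbation}.

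Next I convert both error terms into multiples of $\dist_{\mathcal M}(\mu^{(j)})$. By \eqref{eq:dispersion-approx-distance-squared}, $\norm{y^{(j)}-\mu^\star}^3\lesssim \disp(y^{(j)})^{3/2}$. Because $\mu^{(j)}\notin\mathcal T_\tau$, we have $\disp(y^{(j)})^{3/2}<\tau^{-1}\dist_{\mathcal M}(\mu^{(j)})$. Because $\mu^{(j)}\notin\mathcal C_\kappa$, we have $\Delta_\pi < \kappa^{-3/2}\disp(y^{(j)})^{3/2} < \kappa^{-3/2}\tau^{-1}\dist_{\mathcal M}(\mu^{(j)})$. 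Substituting these gives the factor $1-c\eta+\frac{C\eta}{\tau}(\rho+\kappa^{-3/2})$ with $c=c_{\mathrm G}$ and $C$ absorbing $\const^{(1)}$ and the constants from \cref{lem:relative-gradient-annulus}. The neighborhood $U$ is chosen inside the neighborhoods of \cref{cor:ravine-short-step-input} and \cref{lem:relative-gradient-annulus}, and $\eta_{\mathsf{ub}}\le\eta_{\mathrm G}$.

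For the consequence, the first condition in \eqref{eq:short-gd-annulus-entry-size} makes the per-step factor at most $1-\tfrac c2\eta$. Iterating over $j=0,\dots,k-1$ then gives $\dist_{\mathcal M}(\mu^{(k)})\le(1-\tfrac c2\eta)^k\dist_{\mathcal M}(\mu^{(0)})\le\tau\disp(y^{(k)})^{3/2}$, so $\mu^{(k)}\in\mathcal T_\tau$. To conclude $\mu^{(k)}\in\mathcal A_{\tau,\kappa}$ we also need $\mu^{(k)}\notin\mathcal C_\kappa$; I read the hypothesis as covering $j=k$ as well, or else note that the statement presupposes it.

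The main obstacle is keeping all iterates inside $U$, since \cref{cor:ravine-short-step-input} and the Lipschitz and perturbation estimates are only local. I would handle this by taking a slightly smaller initial neighborhood. Gradient steps with small $\eta$ move each iterate by $O(\eta\norm{\grad\loss})$, and $\dist_{\mathcal M}$ decreases along the trajectory. Alternatively, since the statement assumes $\mu^{(j)}\in U$ explicitly, I would take that as given. A second subtlety is the sign and definition of $\rho$ in \cref{cor:ravine-short-step-input}, which I invoke for the reference objective with $\rho$ fixed before $U$ is shrunk.
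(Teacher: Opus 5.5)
Your proposal is correct and follows the paper's proof essentially step for step. Both arguments compare against the reference gradient step via $\dist_{\mathcal M}(\mu^{(j+1)})\le\norm{\mu^{(j+1)}-P_{\mathcal M}(\mu^{(j)})}$, apply \cref{cor:ravine-short-step-input} with $p=4$ together with \eqref{eq:grad-perturbation}, convert $\norm{y^{(j)}-\mu^\star}^3$ and $\Delta_\pi$ into multiples of $\dist_{\mathcal M}(\mu^{(j)})$ using \eqref{eq:dispersion-approx-distance-squared} and the exclusions $\mu^{(j)}\notin\mathcal T_\tau$, $\mu^{(j)}\notin\mathcal C_\kappa$, and then iterate; your caveat about $\mu^{(k)}\notin\mathcal C_\kappa$ is apt, since the paper also just asserts it ``by assumption'' although the stated hypothesis only covers $j\le k-1$.
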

\begin{proof}
	We first analyze one step of gradient descent for the reference objective.
	Since $\mathcal{M}$ is a local \(C^\infty\) Morse ravine $\loss(\mu, \bar{\pi})$, there is a neighborhood
	$U$ of $\mu^{\star}$ on which
	\begin{equation}
		\mu = y + \xi, \;\;
		\text{where} \;\; y = P_{\mathcal{M}}\mu, \; \xi \perp \mathcal{T}_{y}\mathcal{M} \;\; \text{and} \;\;
		\norm{\xi} \leq \dist_{\mathcal{M}}(\mu).
		\label{eq:tangent-normal-expansion}
	\end{equation}
	Consequently, Corollary~\ref{cor:ravine-short-step-input}, with the coefficient \(\rho>0\), shows that there exists \(c>0\) such that
	\begin{align}
		\label{eq:reference-ravine-normal-splitting}
		\norm{\xi-\eta\grad_{\mu} \loss(\mu,\bar{\pi})} \leq
		(1 - c \eta) \dist_{\mathcal{M}}(\mu) + \rho \eta \cdot \dist(y, \mathcal{S}^{\star})^{3},
	\end{align}
	for all $\mu \in U$ and $\eta \in (0, \eta_{\mathsf{ub}})$; the exponent $p-1 = 3$ since
	$\loss(\mu, \bar{\pi})$ has $4^{\text{th}}$-order growth near $\mu_{\star}$.

	We now analyze a step of gradient descent on the mismatched objective. We obtain
	\begin{align}
		\dist_{\mathcal{M}}(\mu^{+})
		 & = \inf_{u \in \mathcal{M}} \norm{\mu^{+} - u}                                                                           \notag      \\
		 & \leq \norm{\mu^{+} - P_{\mathcal{M}} \mu}                                                                                \notag     \\
		 & = \norm{\mu - P_{\mathcal{M}}\mu - \eta \grad_{\mu} \loss(\mu, \pi)}                                                     \notag     \\
		 & \leq \norm{\xi - \eta \grad_{\mu} \loss(\mu, \bar{\pi})} + \eta\const^{(1)} \Delta_{\pi}                                     \notag \\
		 & \leq (1 - c \eta) \dist_{\mathcal{M}}(\mu) + \rho \eta \cdot \dist(y, \mathcal{S}^{\star})^{3} + \eta \const^{(1)} \Delta_{\pi},
		\label{eq:ravine-normal-splitting-master-bound}
	\end{align}
	where the penultimate inequality follows from~\eqref{eq:tangent-normal-expansion} and Item (ii) of~\cref{proposition:loss-perturbation}
	and the last inequality follows from~\eqref{eq:reference-ravine-normal-splitting}. It remains to show that the last two terms on the RHS are controlled by \(\disp(y)^{3/2}\).

	\medskip

	We focus on $\Delta_{\pi}$ first. Since $\mu \notin \mathcal{C}_{\kappa}$, we have
	\begin{equation}
		\Delta_{\pi}^{2/3} \leq \frac{1}{\kappa} \disp(y)
		\quad\Longrightarrow\quad
		\eta \const^{(1)} \Delta_{\pi}
		\leq
		\eta \const^{(1)}\kappa^{-3/2}\disp(y)^{3/2}.
		\label{eq:delta-pi-residual-dominated}
	\end{equation}
	At the same time, Item~(\ref{item:reference-ravine-tangent-growth}) of~\cref{lem:relative-gradient-annulus} shows that
	\[
		\dist(y,\mathcal S^\star)^3
		\leq
		C\disp(y)^{3/2}.
	\]
	Plugging~\eqref{eq:delta-pi-residual-dominated} and the preceding inequality into~\eqref{eq:ravine-normal-splitting-master-bound}, and enlarging \(C\) if necessary, we obtain
	\begin{equation}
		\dist_{\mathcal{M}}(\mu^+)
		\leq
		(1 - c \eta)\dist_{\mathcal{M}}(\mu)
		+
		C\eta(\rho+\kappa^{-3/2})\disp(y)^{3/2}.
		\label{eq:ravine-normal-splitting-master-bound-simplified}
	\end{equation}
	Now write $y^{(j)} = P_{\mathcal{M}}(\mu^{(j)})$ and suppose that
	$\mu^{(0)}, \dots, \mu^{(k-1)} \notin \mathcal{T}_{\tau}$. This means
	\[
		\disp(y^{(j)})^{3} \leq \frac{\dist_{\mathcal{M}}^2(\mu^{(j)})}{\tau^2}.
	\]
	Plugging this back into~\eqref{eq:ravine-normal-splitting-master-bound-simplified}, we obtain the one-step improvement
	\[
		\dist_{\mathcal{M}}(\mu^{(j+1)}) \leq \left[
			1 - c \eta + \frac{C \eta}{\tau} \left(\rho + \frac{1}{\kappa^{3/2}}\right)
			\right] \cdot \dist_{\mathcal{M}}(\mu^{(j)}).
	\]
	By the first condition in~\eqref{eq:short-gd-annulus-entry-size}, the last term
	inside the contraction factor is less than $\tfrac{c}{2}$. Therefore,
	\[
		\dist_{\mathcal M}(\mu^{(j+1)})
		\le
		(1-\tfrac c2\eta)
		\dist_{\mathcal M}(\mu^{(j)}).
	\]
	Applying the preceding estimate successively for \(j=0,\dots,k-1\), we obtain
	\begin{align*}
		\dist_{\mathcal M}(\mu^{(k)})
		 & \leq (1-\tfrac c2\eta)^k\dist_{\mathcal M}(\mu^{(0)}).
	\end{align*}
	By the second condition in~\eqref{eq:short-gd-annulus-entry-size},
	\[
		\dist_{\mathcal M}(\mu^{(k)})
		\le
		\tau\disp(y^{(k)})^{3/2}.
	\]
	We conclude that \(\mu^{(k)}\in\mathcal T_\tau\), whence $\mu^{(k)} \in \mathcal{A}_{\tau, \kappa}$
	as $\mu^{(k)} \in \mathcal{C}_{\kappa}$ by assumption.
\end{proof}

\begin{theorem}[Contraction or convergence up to tolerance]
	\label{thm:tangential-annulus-contraction-or-core}
	Define the Polyak step
	\[
		\mu^+:= \mu-\frac{\loss(\mu,\pi)-\loss(\theta^\star)}{\norm{\grad_{\mu} \loss(\mu,\pi)}^2} \grad_{\mu} \loss(\mu, \pi).
	\]
	For any sufficiently small \( \tau > 0 \), there is a neighborhood \(W \subset V \) of
	\(\mu^\star\), a constant \(C > 0\) and a threshold \(\kappa_{0}\ge 1\) such that \emph{exactly one} of the following holds for every
	\(\kappa\ge \kappa_{0}\) and every \(\mu\in W \cap \mathcal A_{\tau,\kappa}\) (provided
	\(\Delta_{\pi}\) is sufficiently small):
	\begin{itemize}
		\item either we have $\mu^{+} \in \mathcal{C}_{\kappa}$; or,
		\item the following contraction holds:
		      \begin{equation}
			      \label{eq:tangential-annulus-contraction-or-core}
			      \norm{P_{\mathcal{M}}(\mu^{+}) - \mu^{\star}} \leq \left(q + \frac{C}{\kappa}\right) \cdot \norm{P_{\mathcal{M}}(\mu) - \mu^{\star}}.
		      \end{equation}
	\end{itemize}
	In particular, after enlarging \(\kappa_{0}\) if necessary, we have $q + \tfrac{C}{\kappa} < 1$.
\end{theorem}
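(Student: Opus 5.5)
The idea is to compare the actual Polyak step $\mu^{+}$ with the reference step $\bar{\mu}^{+}$ from \eqref{eq:polyak-reference}. The reference step contracts along $\mathcal{M}$ by \cref{proposition:polyak-step-contraction-reference-loss}, and the discrepancy is controlled by \cref{lemma:polyak-step-comparison}. Throughout, $\loss(\theta^{\star})=0$, so the Polyak step in the statement coincides with \eqref{eq:polyak-actual}.

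\textbf{Step 1: contraction for the reference step.} First I check that the hypothesis of \eqref{eq:small-normal-grad-implies-contraction} holds for every $\mu \in W\cap\mathcal{A}_{\tau,\kappa}$. By \eqref{eq:reference-gradient-pl-like-b}, $\norm{\grad\loss_N(\mu)}\le C\tau\disp(y)^{3/2}$. By \eqref{eq:tangent-grad-approx-dispersion-pow}, $\norm{\grad\loss_T(y)}\ge c\disp(y)^{3/2}$. So choosing $\tau\le c/(100C)$ gives $\norm{\grad\loss_N(\mu)}\le\tfrac1{100}\norm{\grad\loss_T(y)}$. This is exactly where ``sufficiently small $\tau$'' enters. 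The proposition then yields
\[
\norm{P_{\mathcal{M}}(\bar\mu^{+})-\mu^{\star}}\le q\norm{y-\mu^{\star}}
\quad\text{and}\quad
\dist_{\mathcal{M}}(\bar\mu^+)\le C\norm{y-\mu^\star}.
\]
In particular, $\bar\mu^+$ stays in a neighborhood where $P_{\mathcal{M}}$ is single-valued and Lipschitz, provided $W$ is small.

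\textbf{Step 2: transfer to the actual step.} By \eqref{eq:relative-polyak-comparison-distance}, for $\kappa\ge\kappa_0$ we have $\norm{\mu^+-\bar\mu^+}\le \tfrac{C}{\kappa}\norm{y-\mu^\star}$. Hence $\mu^+$ also lies in the neighborhood where $P_{\mathcal{M}}$ is $L$-Lipschitz, after shrinking $W$ so that $\norm{y-\mu^\star}$ is small. Then
\[
\norm{P_{\mathcal{M}}(\mu^+)-\mu^\star}\le \norm{P_{\mathcal{M}}(\bar\mu^+)-\mu^\star}+L\norm{\mu^+-\bar\mu^+}\le \Big(q+\tfrac{LC}{\kappa}\Big)\norm{y-\mu^\star}.
\]
After relabeling $C$, this is \eqref{eq:tangential-annulus-contraction-or-core}. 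Enlarging $\kappa_0$ so that $LC/\kappa_0<1-q$ gives the final claim.

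\textbf{Step 3: the dichotomy.} The estimate of Step 2 holds for every $\mu\in W\cap\mathcal{A}_{\tau,\kappa}$. The statement is then obtained by splitting on whether $\mu^+\in\mathcal{C}_\kappa$: if it is, the first alternative holds; otherwise the contraction holds. Read as ``at least one,'' this needs nothing more. The ``exactly one'' phrasing is not literally exclusive, since contraction may also hold inside the core. I would interpret it as the case split above, noting that the contraction bound is in fact available in both cases. If exclusivity is really demanded, I would state the second bullet as ``$\mu^+\notin\mathcal{C}_\kappa$ and contraction holds.''

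\textbf{Main obstacle.} The delicate point is ensuring all the neighborhoods are consistent. Lemmas \ref{lem:relative-gradient-annulus}, \ref{lemma:relative-value-annulus} and \ref{lemma:polyak-step-comparison} require $\disp(y)$ small and $\mu\in V$. $P_{\mathcal M}$ must be Lipschitz at both $\mu^+$ and $\bar\mu^+$, which are a Polyak step away from $\mu$. I would handle this through the bound $\loss(\mu,\bar\pi)/\norm{\grad\loss(\mu,\bar\pi)}\lesssim\norm{y-\mu^\star}$ from the proof of \cref{lemma:polyak-step-comparison}. This bound shows $\norm{\bar\mu^+-\mu}$ and $\norm{\mu^+-\mu}$ are both $O(\norm{y-\mu^\star}+\dist_{\mathcal M}(\mu))$, so choosing $W$ a small enough ball keeps everything inside $V$. ``$\Delta_\pi$ sufficiently small'' is needed only so that $\mathcal{A}_{\tau,\kappa}\cap W$ is nonempty and the perturbation lemmas apply.
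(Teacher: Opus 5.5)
Your proposal is correct and follows essentially the same route as the paper. Like the paper, you first verify the small-normal-gradient antecedent of \cref{proposition:polyak-step-contraction-reference-loss} from the tube condition and the tangent-gradient lower bound, then transfer the contraction to $\mu^{+}$ using the triangle inequality, local Lipschitzness of $P_{\mathcal{M}}$, and \cref{lemma:polyak-step-comparison}. Your additional observations are also sound and go beyond what the paper's proof makes explicit: the contraction actually holds in both cases, so ``exactly one'' is not literally exclusive, and $\mu^{+}$ and $\bar\mu^{+}$ must stay in the region where $P_{\mathcal{M}}$ is Lipschitz, which the paper glosses over.
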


\begin{proof}
	Fix \(\kappa\ge \kappa_{0}\), \(\mu\in W\cap \mathcal A_{\tau,\kappa}\), and
	\(
	y:=P_{\mathcal{M}}(\mu),
	\)
	and suppose that
	\(
	\mu \notin \mathcal{C}_{\kappa}.
	\)

	We first verify the antecedent condition in~\eqref{eq:small-normal-grad-implies-contraction}. Since
	\(\mu\in \mathcal A_{\tau,\kappa}\subset \mathcal T_\tau\), we have
	\[
		\dist_{\mathcal M}(\mu)\le \tau \disp(y)^{3/2}.
	\]
	Moreover,~\eqref{eq:reference-gradient-pl-like-b} combined with~\eqref{eq:tangent-grad-approx-dispersion-pow} shows
	\[
		\norm{\grad \loss_N(\mu)}
		\lesssim
		\dist_{\mathcal M}(\mu)
		\leq
		C\tau \disp(y)^{3/2} \lesssim
		\tau \norm{\grad \loss_{T}(y)}
		\leq \frac{1}{100} \norm{\grad \loss_{T}(y)},
	\]
	after shrinking \(\tau\) if necessary. We have the following chain of inequalities:
	\begin{align*}
		\norm{P_{\mathcal{M}}(\mu^+) - \mu^{\star}} & \leq
		\norm{P_{\mathcal{M}}(\bar{\mu}^+) - \mu^{\star}} +
		\norm{P_{\mathcal{M}}(\mu^{+}) - P_{\mathcal{M}}(\bar{\mu}^+)} \\
		                                            & \leq
		q \cdot \norm{P_{\mathcal{M}}(\mu) - \mu^{\star}} +
		C \norm{\mu^{+} - \bar{\mu}^{+}}                               \\
		                                            & \leq
		\left(q + \frac{C}{\kappa}\right) \norm{P_{\mathcal{M}}(\mu) - \mu^{\star}},
	\end{align*}
	where the second inequality follows from~\cref{proposition:polyak-step-contraction-reference-loss}
	and local Lipschitz continuity of $P_{\mathcal{M}}$ and the last inequality follows from
	\cref{lemma:polyak-step-comparison} and relabeling.
	Enlarging \(\kappa_{0}\) if necessary gives the desired contraction.
\end{proof}
Theorem~\ref{thm:tangential-annulus-contraction-or-core}
suggests that Polyak steps either contract the distance to $\mu^{\star}$
along the ravine or enter the set $\mathcal{C}_{\kappa}$, whereupon the algorithm makes no further progress.
However, a Polyak step may land outside the annulus $\mathcal{A}_{\tau, \kappa}$; our next result
shows that gradient descent steps either restore proximity to the manifold without undoing
the progress achieved by the Polyak step, or land inside $\mathcal{C}_{\kappa}$ themselves.

\begin{corollary}[Distance contraction after \(\gdpolyak\) epochs]
	\label{cor:distance-contraction-after-gdpolyak-epochs}
	Fix $\tau>0$ sufficiently small. After possibly shrinking the neighborhood
	$W\subset V$ from~\cref{thm:tangential-annulus-contraction-or-core}, there exist
	constants $q_{\mathrm{amb}}\in(0,1)$, $\eta_{\mathrm{ub}}>0$, $\kappa_0\ge1$, and
	$\delta_\pi>0$ such that the following implication holds:
	Let $\kappa\ge\kappa_0$, $0<\eta<\eta_{\mathrm{ub}}$, and
	$\mu\in W\cap\mathcal A_{\tau,\kappa}$, and assume $\Delta_\pi\le\delta_\pi$.
	Let
	\[
		\mu^{(0)}
		:=
		\mu-
		\frac{\loss(\mu,\pi)-\loss(\theta^\star)}
		{\norm{\grad_\mu\loss(\mu,\pi)}^2}
		\grad_\mu\loss(\mu,\pi),
	\]
	followed by gradient steps $\mu^{(j+1)}
		=
		\mu^{(j)}-\eta\grad_\mu\loss(\mu^{(j)},\pi)$, for $j \geq 0$.
	Moreover, define
	\[
		K:=
		\min\set{
			j\ge0:
			\mu^{(j)}\in\mathcal T_\tau\cup\mathcal C_\kappa}.
	\]
	If $\mu^{(j)}\in W$ for all $j=0,\dots,K$, then one of the following holds:
	\begin{itemize}
		\item $\mu^{(K)}\in\mathcal C_\kappa$; or
		\item $\norm{\mu^{(K)} - \mu^{\star}}
			      \le
			      q_{\mathrm{amb}}
			      \norm{\mu - \mu^{\star}}.$
	\end{itemize}

\end{corollary}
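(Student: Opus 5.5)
The plan is to split on whether the Polyak step already lands in $\mathcal C_\kappa$. If it does not, I chain three estimates: the contraction of the projection from \cref{thm:tangential-annulus-contraction-or-core}, the ravine recovery of \cref{lem:short-gd-ravine-recovery}, and a drift bound for the gradient steps along $\mathcal{M}$. At the end I convert distances along the ravine back to ambient distances.

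\textbf{Step 1 (Polyak step).} Apply \cref{thm:tangential-annulus-contraction-or-core} to $\mu\in W\cap\mathcal A_{\tau,\kappa}$. If $\mu^{(0)}\in\mathcal C_\kappa$, then $K=0$ and the first alternative holds. Otherwise set $y=P_{\mathcal M}(\mu)$ and $y^{(j)}=P_{\mathcal M}(\mu^{(j)})$. The theorem gives
\[
\norm{y^{(0)}-\mu^\star}\le q'\norm{y-\mu^\star}, \qquad q':=q+C/\kappa<1 .
\]
\cref{proposition:polyak-step-contraction-reference-loss}, combined with \cref{lemma:polyak-step-comparison} and the Lipschitz continuity of $P_{\mathcal M}$, also gives the normal offset bound $\dist_{\mathcal M}(\mu^{(0)})\le C\norm{y-\mu^\star}$. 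Since $\mu\in\mathcal T_\tau$, we have $\norm{\mu-y}\le\tau\disp(y)^{3/2}\lesssim\norm{y-\mu^\star}^3$. Together with \cref{lemma:distance-conversion}, this yields $\norm{y-\mu^\star}\asymp\norm{\mu-\mu^\star}$ up to a factor $1+O(\norm{\mu-\mu^\star})$.

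\textbf{Step 2 (gradient phase: normal decay and tangential drift).} For $0\le j<K$ the iterate $\mu^{(j)}$ lies outside $\mathcal T_\tau\cup\mathcal C_\kappa$. Choosing $\rho$ small and $\kappa_0$ large so that the first condition in~\eqref{eq:short-gd-annulus-entry-size} holds, \cref{lem:short-gd-ravine-recovery} gives
\[
\dist_{\mathcal M}(\mu^{(j)})\le(1-\tfrac c2\eta)^j\dist_{\mathcal M}(\mu^{(0)}).
\]
To control the drift of the shadow iterates, I use the fact that $P_{\mathcal M}$ is $C^1$ with $\nabla P_{\mathcal M}(y)=P_{\mathcal T_y\mathcal M}$. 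This gives
\[
\norm{y^{(j+1)}-y^{(j)}}\le\eta\norm{P_{\mathcal T_{y^{(j)}}\mathcal M}\grad\loss(\mu^{(j)},\pi)}+O\bigl(\eta\,\dist_{\mathcal M}(\mu^{(j)})\norm{\grad\loss(\mu^{(j)},\pi)}\bigr)+O(\eta^2\norm{\grad\loss}^2).
\]
The tangential component of the reference gradient is $O(\norm{\grad\loss_T(y^{(j)})}+\dist_{\mathcal M}(\mu^{(j)}))$, using the normal/tangent decomposition and the estimate $\norm{\grad\loss_N}\lesssim\dist_{\mathcal M}$ from \cref{lem:relative-gradient-annulus}. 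The mismatch contributes $\const^{(1)}\Delta_\pi$, and $\Delta_\pi\le\kappa^{-3/2}\disp(y^{(j)})^{3/2}$ outside $\mathcal C_\kappa$. Outside $\mathcal T_\tau$ we have $\disp(y^{(j)})^{3/2}<\dist_{\mathcal M}(\mu^{(j)})/\tau$, so every term is $O(\eta\,\dist_{\mathcal M}(\mu^{(j)})/\tau)$. Summing the geometric series from Step 1 gives
\[
\norm{y^{(K)}-y^{(0)}}\lesssim\tau^{-1}\,\dist_{\mathcal M}(\mu^{(0)})\lesssim\tau^{-1}\norm{y-\mu^\star}.
\]

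\textbf{Main obstacle.} This drift bound is only of the same order as $\norm{y-\mu^\star}$, so it does not by itself preserve the contraction factor $q'$. I need a sharper estimate. My first attempt uses that the tangential gradient near the ravine is cubic: $\norm{\grad\loss_T(y^{(j)})}\lesssim\norm{y^{(j)}-\mu^\star}^3$. The cross term then carries an extra factor of $\norm{y-\mu^\star}$, by the $C^2$ regularity of $P_{\mathcal M}$. This would make the total drift $O(\norm{y-\mu^\star}^2/\tau+\kappa^{-3/2}\norm{y-\mu^\star})$, which is absorbed after shrinking $W$ and enlarging $\kappa_0$. If the cross term resists this, the fallback is a potential-function argument: $\loss(\cdot,\pi)$ is nonincreasing along small gradient steps, and the quartic tangent growth~\eqref{eq:tangent-loss-approx-squared-dispersion} caps $\norm{y^{(K)}-\mu^\star}$ by a constant multiple of $\loss(\mu^{(0)},\pi)^{1/4}$. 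The Polyak step decreased the loss, since $\zeta_\pi$ is comparable to $\zeta_{\bar\pi}$, so this cap should already sit below $\norm{y-\mu^\star}$ by a fixed factor.

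\textbf{Step 3 (conclusion).} Suppose $\mu^{(K)}\notin\mathcal C_\kappa$, so $\mu^{(K)}\in\mathcal T_\tau$. Then $\norm{\mu^{(K)}-y^{(K)}}\lesssim\tau\norm{y^{(K)}-\mu^\star}^3$, and hence
\[
\norm{\mu^{(K)}-\mu^\star}\le(1+O(\norm{y^{(K)}-\mu^\star}^2))\,\norm{y^{(K)}-\mu^\star}\le\bigl(q'+o(1)\bigr)\bigl(1+O(\norm{\mu-\mu^\star})\bigr)\norm{\mu-\mu^\star}.
\]
Shrinking $W$ makes this at most $q_{\mathrm{amb}}\norm{\mu-\mu^\star}$ for some $q_{\mathrm{amb}}\in(q',1)$.
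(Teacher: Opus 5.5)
There is a genuine gap in Step~2, the step the corollary hinges on (the drift Claim in the paper's proof). You need $\norm{y^{(K)}-\mu^\star}\le\norm{y^{(0)}-\mu^\star}+\varepsilon r$ with $r:=\norm{y-\mu^\star}$ and $\varepsilon$ small, and neither of your routes delivers it. Because you bound the path length $\sum_{j<K}\norm{y^{(j+1)}-y^{(j)}}$, two contributions remain that you can only show are $O(r)$, with constants that do not shrink.

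(a) Your tangential-component bound $O(\norm{\grad\loss_T(y^{(j)})}+\dist_{\mathcal M}(\mu^{(j)}))$ contains a leakage term $\eta\,\dist_{\mathcal M}(\mu^{(j)})$ coming from $\grad\loss_N$. Summed against the geometric decay, it is $O(\dist_{\mathcal M}(\mu^{(0)}))=O(r)$ with a non-small constant. The extra factor of $r$ from $C^2$ regularity of $P_{\mathcal M}$ only helps the $\grad P_{\mathcal M}(\mu^{(j)})-P_{\mathcal T_{y^{(j)}}\mathcal M}$ cross term, not this one. Removing the leakage needs a structural input: $\mathcal T_{\mu^\star}\mathcal M=\ker\grad^2_{\mu\mu}\loss(\mu^\star,\bar\pi)$ (\cref{thm:exact-grouped-ravine-geometry,cor:exact-fixed-weight-ravine}). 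Hence the Hessian has no tangent--normal coupling at $\mu^\star$, and $\grad P_{\mathcal M}(\mu^{(j)})\grad\loss(\mu^{(j)},\bar\pi)=\grad\loss_T(y^{(j)})+o_W(1)\,\dist_{\mathcal M}(\mu^{(j)})$; the paper takes this from \citet[Theorem~4.2]{davis2024gradient}.

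(b) For the genuine tangential motion, summing the cubic bound over $j<K$ gives $\eta K r^3$, and you never control the hitting time $K$. The other available bound, $\disp^{3/2}<\dist_{\mathcal M}/\tau$, just gives back $r/\tau$. The paper never sums this motion. It tracks $\norm{y^{(j)}-\mu^\star}$ directly and uses that the pure tangential step does not increase it: $\norm{y^{(j)}-\eta\grad\loss_T(y^{(j)})-\mu^\star}\le\norm{y^{(j)}-\mu^\star}$ \citep[Lemma~6.2]{davis2024gradient}. So only the errors $\eta\,(o_W(1)+C\tau^{-1}\kappa^{-3/2})\,\dist_{\mathcal M}(\mu^{(j)})$ are summed against the geometric decay, which gives $\varepsilon r$. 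Once (a) is fixed, a path-length argument can be rescued by interpolation, e.g.\ $\norm{\grad\loss_T(y^{(j)})}\lesssim\disp(y^{(j)})^{3/4}(\dist_{\mathcal M}(\mu^{(j)})/\tau)^{1/2}$ together with a bootstrap $\norm{y^{(j)}-\mu^\star}\le 2r$, yielding $O(r^2\tau^{-1/2})$. None of this is in your proposal.

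Your fallback fails because its premise is false: the Polyak step is a long step that leaves the ravine, and it need not decrease the loss. Take the model $f(x,y)=x^2+y^4$ started at $x=\tau y^3$, which is inside the tube. The Polyak step lands near $(-\tau y/8,\,3y/4)$, so $f$ jumps from $\approx y^4$ to $\approx\tau^2y^2/64$ whenever $\abs{y}\ll\tau$. The quartic cap then only gives $\abs{y^{(K)}}\lesssim(\tau\abs{y})^{1/2}\gg\abs{y}$. Comparability of $\zeta_\pi$ and $\zeta_{\bar\pi}$ concerns stepsizes, not loss decrease. Even with a monotone loss, the ratio of the upper and lower constants in~\eqref{eq:tangent-loss-approx-squared-dispersion} would not by itself yield a factor below one. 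Steps~1 and~3 are fine and match the paper.
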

\begin{proof}
	Let $y^{(j)}:=P_{\mathcal M}(\mu^{(j)})$, $y := P_{\mathcal{M}}(\mu)$, and $r:=\norm{y-\mu^\star}$; moreover, assume
	$\mu^{(0)} \notin \mathcal{C}_{\kappa}$ (since otherwise the first alternative holds trivially).
	By~\cref{thm:tangential-annulus-contraction-or-core}, we have
	\begin{equation}
		\label{eq:epoch-polyak-projected-contraction}
		\norm{P_{\mathcal M}(\mu^{(0)})-\mu^\star}
		\le
		\left(q+\frac{C}{\kappa}\right) \norm{y - \mu^{\star}}.
	\end{equation}
	After possibly enlarging $C$, we have the inequality
	\begin{equation}
		\label{eq:epoch-polyak-offshoot-bound}
		\dist_{\mathcal M}(\mu^{(0)})
		\le
		\dist_{\mathcal M}(\bar\mu^{(0)})
		+
		\norm{\mu^{(0)}-\bar\mu^{(0)}}
		\le
		\left(C + \frac{C}{\kappa}\right) \norm{y - \mu^{\star}}
		\leq C \norm{y - \mu^{\star}},
	\end{equation}
	where $\bar{\mu}^{(0)}$ denotes the corresponding reference Polyak step, the second inequality follows from~\eqref{eq:small-normal-grad-implies-contraction} applied to the reference
	Polyak step and~\eqref{eq:relative-polyak-comparison-distance}, and the last inequality uses $\kappa\ge 1$.

	If $\mu^{(K)}\in\mathcal C_\kappa$, then the first alternative holds; henceforth, let $\mu^{(K)}\notin\mathcal C_\kappa$.
	By definition of $K$, this implies that $\mu^{(K)} \in \mathcal{T}_{\tau} \setminus \mathcal{C}_{\kappa} = \mathcal{A}_{\tau, \kappa}$.
	It remains to show that gradient descent retains the progress made by the Polyak step towards the solution.

	\begin{claim}
		\label{claim:short-gd-projected-drift}
		For every \(\varepsilon>0\), after shrinking \(W\) and enlarging \(\kappa_0\) if necessary,
		\begin{equation}
			\label{eq:short-gd-projected-drift-claim}
			\norm{y^{(K)}-\mu^\star}
			\le
			\norm{y^{(0)}-\mu^\star}
			+
			\varepsilon \norm{y - \mu^{\star}}.
		\end{equation}
	\end{claim}
	\begin{proof}
		All occurrences of \(o_W(1)\) below are uniform over the iterates and tend to zero as
		the neighborhood \(W\) shrinks to \(\mu^\star\). Fix \(j<K\). Since \(P_{\mathcal M}\) is \(C^2\) locally, a Taylor expansion gives
		\begin{align*}
			y^{(j+1)} & =
			P_{\mathcal{M}}(\mu^{(j)} - \eta \grad_{\mu} \loss(\mu, \pi)) \\
			          & =
			P_{\mathcal{M}}(\mu^{(j)}) -
			\eta \grad P_{\mathcal{M}}(\mu) \grad_{\mu} \loss(\mu^{(j)}, \pi) +
			O(\eta^2 \norm{\grad_{\mu} \loss(\mu^{(j)}, \pi)}^2)
		\end{align*}
		We focus on the first-order term above. In particular, we have
		\begin{align*}
			 & \grad P_{\mathcal{M}}(\mu^{(j)}) \grad \loss(\mu^{(j)}, \pi) \\
			 & =
			\grad \loss_{T}(y^{(j)}) +
			\grad P_{\mathcal{M}}(\mu^{(j)}) \grad \loss(\mu^{(j)},\bar{\pi})
			- \grad \loss_{T}(y^{(j)})
			+ \grad P_{\mathcal{M}}(\mu^{(j)})(\grad \loss(\mu^{(j)}, \pi)
			- \grad \loss(\mu^{(j)}, \bar{\pi}))                            \\
			 & =
			\grad \loss_{T}(y^{(j)}) +
			\grad P_{\mathcal{M}}(\mu^{(j)}) \grad \loss(\mu^{(j)},\bar{\pi})
			- \grad \loss_{T}(y^{(j)})
			+ O(\Delta_{\pi})                                               \\
			 & =
			\grad \loss_{T}(y^{(j)}) +
			o_{W}(1) \cdot \dist_{\mathcal{M}}(\mu^{(j)}) + O(\Delta_{\pi}),
		\end{align*}
		where the penultimate equality follows from~\cref{proposition:loss-perturbation} and the last equality follows from
		Item (i) of~\cite[Theorem 4.2(i)]{davis2024gradient}.
		Since \(\mu^{(j)}\notin\mathcal C_\kappa\), $\Delta_\pi\le\kappa^{-3/2}\disp(y^{(j)})^{3/2}$, and since \(\mu^{(j)}\notin\mathcal T_\tau\), $\disp(y^{(j)})^{3/2}\le\tau^{-1}\dist_{\mathcal M}(\mu^{(j)})$.
		Therefore,
		\[
			\grad P_{\mathcal{M}}(\mu^{(j)}) \grad \loss(\mu^{(j)}, \pi)
			= \grad \loss_{T}(y^{(j)}) +
			\left( o_{W}(1) + \frac{C \kappa^{-3/2}}{\tau}\right) \dist_{\mathcal{M}}(\mu^{(j)}).
		\]
		At the same time, the second-order remainder term satisfies
		\begin{align*}
			\norm{\grad_{\mu} \loss(\mu^{(j)}, \pi)} & \leq
			\norm{\grad_{\mu} \loss(\mu^{(j)}, \pi) - \grad_{\mu} \loss(\mu^{(j)}, \bar{\pi})}
			+ \norm{\grad_{\mu} \loss(\mu^{(j)}, \bar{\pi}) - \grad_{\mu} \loss(\mu^{\star}, \bar{\pi})}              \\
			                                         & \lesssim
			\Delta_{\pi} + \norm{\grad_{\mu} \loss(\mu^{(j)}, \bar{\pi})}                                             \\
			                                         & \leq
			\left( \tfrac{\disp(y^{(j)})}{\kappa} \right)^{3/2} + \norm{\grad_{\mu} \loss(\mu^{(j)}, \bar{\pi})}      \\
			                                         & \leq
			\frac{\dist_{\mathcal{M}}(\mu^{(j)})}{\tau \kappa^{3/2}} + \norm{\grad_{\mu} \loss(\mu^{(j)}, \bar{\pi})} \\
			                                         & \lesssim
			\frac{\dist_{\mathcal{M}}(\mu^{(j)})}{\tau \kappa^{3/2}} +
			o_{W}(1) \norm{\grad \loss_{T}(\mu^{(j)})} + \dist_{\mathcal{M}}(\mu^{(j)})                               \\
			                                         & \lesssim
			\frac{\dist_{\mathcal{M}}(\mu^{(j)})}{\tau \kappa^{3/2}} +
			o_{W}(1) \norm{\grad \loss_{T}(y^{(j)})} + \dist_{\mathcal{M}}(\mu^{(j)})                                 \\
			                                         & \lesssim
			\frac{\dist_{\mathcal{M}}(\mu^{(j)})}{\tau \kappa^{3/2}} +
			o_{W}(1) \disp(y)^{3/2} + \dist_{\mathcal{M}}(\mu^{(j)})                                                  \\
			                                         & \lesssim \dist_{\mathcal{M}}(\mu^{(j)}),
		\end{align*}
		where the second inequality follows from~\cref{proposition:loss-perturbation}, the third inequality
		follows from $\mu^{(j)} \notin \mathcal{C}_{\kappa}$, the fourth inequality follows from $\mu^{(j)} \notin \mathcal{T}_{\tau}$, the
		fifth inequality follows from~\cite[Lemma 4.1 \& Theorem 4.2(3)]{davis2024gradient}, the sixth inequality
		follows from local smoothness of $\loss_{T}$, the penultimate inequality follows from~\cref{lem:relative-gradient-annulus},
		and the last inequality follows from the fact that $\mu^{(j)} \notin \mathcal{T}_{\tau}$. Putting everything together,
		\[
			\eta^2\norm{\grad_\mu\loss(\mu^{(j)},\pi)}^2
			\leq
			\eta^2 \dist_{\mathcal{M}}^2(\mu^{(j)})
			\leq
			\eta^2 \norm{\mu^{(j)} - \mu^{\star}} \dist_{\mathcal{M}}(\mu) \leq
			\eta^2 o_W(1)\dist_{\mathcal M}(\mu^{(j)}),
		\]
		Absorbing the remainder into the first order term, we have
		\[
			y^{(j+1)}
			=
			y^{(j)}
			-
			\eta\grad\loss_T(y^{(j)})
			+
			\eta \left(
			o_{W}(1) + \frac{C \kappa^{-3/2}}{\tau}
			\right) O(\dist_{\mathcal{M}}(\mu^{(j)})).
		\]
		Therefore, we obtain the progress bound
		\begin{align}
			\norm{y^{(j+1)}-\mu^\star}
			 & \le
			\norm{y^{(j)}-\eta\grad\loss_T(y^{(j)})-\mu^\star}
			+
			\eta\left(o_{W}(1) + \frac{C \kappa^{-3/2}}{\tau}\right) \dist_{\mathcal{M}}(\mu^{(j)}) \notag \\
			 & \le
			\norm{y^{(j)}-\mu^\star}
			+
			\eta
			\left(
			o_W(1)+\frac{C}{\tau}\kappa^{-3/2}
			\right)
			\dist_{\mathcal M}(\mu^{(j)}),
			\label{eq:short-gd-progress-bound}
		\end{align}
		where the second inequality follows from~\cite[Lemma 6.2]{davis2024gradient}.

		Next, since $\mu^{(j)}\notin\mathcal T_\tau$ and $\mu^{(j)}\notin\mathcal C_\kappa$, the proof of~\cref{lem:short-gd-ravine-recovery} gives
		\begin{align}
			\dist_{\mathcal M}(\mu^{(j+1)})
			 & \le
			(1-c\eta)\dist_{\mathcal M}(\mu^{(j)})
			+
			C\eta(\rho+\kappa^{-3/2})\disp(y^{(j)})^{3/2} \notag \\
			 & \le
			\left[
				1-c\eta+\frac{C\eta}{\tau}(\rho+\kappa^{-3/2})
				\right]
			\dist_{\mathcal M}(\mu^{(j)})                 \notag \\
			 & \le
			(1-\tfrac c2\eta)\dist_{\mathcal M}(\mu^{(j)}),
			\label{eq:manifold-distance-contraction}
		\end{align}
		where the second inequality follows from the inclusion \(\mu^{(j)}\notin\mathcal T_\tau\) and the last inequality follows from choosing \(\rho\) and \(\kappa_0\) sufficiently small and sufficiently
		large, respectively, so that they satisfy
		\[
			\frac{C}{\tau}(\rho+\kappa^{-3/2})\le \frac c2.
		\]
		Iterating the inequality in~\eqref{eq:manifold-distance-contraction} from $j = 0$ to an arbitrary $k < K$, we obtain
		\begin{equation}
			\dist_{\mathcal M}(\mu^{(k)})
			\le
			(1-\tfrac c2\eta)^k\dist_{\mathcal M}(\mu^{(0)}) \implies
			\eta\sum_{j=0}^{K-1}\dist_{\mathcal M}(\mu^{(j)})
			\le
			\frac{2}{c}\dist_{\mathcal M}(\mu^{(0)}),
			\label{eq:manifold-distance-geometric-series}
		\end{equation}
		where the last inequality follows from the geometric series sum formula.
		Telescoping~\eqref{eq:short-gd-progress-bound},
		\begin{align*}
			\norm{y^{(K)}-\mu^\star}
			 & \le
			\norm{y^{(0)}-\mu^\star}
			+
			\eta
			\left(
			o_W(1)+\frac{C}{\tau}\kappa^{-3/2}
			\right)
			\sum_{j=0}^{K-1}\dist_{\mathcal M}(\mu^{(j)}) \\
			 & \le
			\norm{y^{(0)}-\mu^\star}
			+
			\frac{2}{c}
			\left(
			o_W(1)+\tau^{-1}\kappa^{-3/2}
			\right)
			\dist_{\mathcal M}(\mu^{(0)})                 \\
			 & \le
			\norm{y^{(0)}-\mu^\star}
			+
			C
			\left(
			o_W(1)+\tau^{-1}\kappa^{-3/2}
			\right) \norm{y - \mu^{\star}},
		\end{align*}
		where the penultimate inequality follows from~\eqref{eq:manifold-distance-geometric-series} and the last inequality
		follows from~\eqref{eq:epoch-polyak-offshoot-bound} and relabeling $C$. Finally, we shrink \(W\) and enlarge \(\kappa_0\) so that
		\[
			C\left(o_W(1)+\tau^{-1}\kappa^{-3/2}\right)\le \varepsilon.
		\]
		This proves~\eqref{eq:short-gd-projected-drift-claim}.
	\end{proof}

	We now prove the contraction estimate. Combining~\eqref{eq:short-gd-projected-drift-claim} with
	\eqref{eq:epoch-polyak-projected-contraction}, we obtain
	\[
		\norm{P_{\mathcal M}(\mu^{(K)})-\mu^\star}
		\le
		\left(q+\frac{C}{\kappa}+\varepsilon\right)
		\norm{P_{\mathcal M}(\mu)-\mu^\star}.
	\]
	Choosing \(\varepsilon > 0\) sufficiently small and \(\kappa_0\) sufficiently large so that
	\[
		q+\frac{C}{\kappa}+\varepsilon
		=:
		q'
		<1
		\qquad
		\text{for all }\kappa\ge\kappa_0,
	\]
	we obtain a contraction towards $\mu^{\star}$ along the manifold for $\mu^{(K)}$:
	\begin{equation}
		\norm{P_{\mathcal M}(\mu^{(K)})-\mu^\star}
		\le
		q'
		\norm{P_{\mathcal M}(\mu)-\mu^\star}.
		\label{eq:almost-ambient-contraction}
	\end{equation}
	Finally, since \(\mu^{(K)}\in\mathcal T_\tau\) by definition of the hitting time $K$, we have
	\begin{equation}
		\dist_{\mathcal M}(\mu^{(K)})
		\le
		\tau \disp^{3/2}(y^{(K)}) \lesssim \tau \norm{y^{(K)} - \mu^{\star}}^{3},
		\label{eq:last-iterate-distance-from-M}
	\end{equation}
	where the second inequality follows from Item~\eqref{item:reference-ravine-tangent-growth} of
	\cref{lem:relative-gradient-annulus}.
	Consequently,
	\begin{align*}
		\norm{\mu^{(K)} - \mu^{\star}} & \leq
		\norm{y^{(K)} - \mu^{\star}} + \dist_{\mathcal{M}}(\mu^{(K)})            \\
		                               & \leq
		(1 + C \tau \norm{y^{(K)} - \mu^{\star}}^2) \norm{y^{(K)} - \mu^{\star}} \\
		                               & \leq
		(1 + \varepsilon') \cdot q' \cdot \norm{y - \mu^{\star}} \\
        &\leq
		(1 + \varepsilon') \cdot q' \cdot \big[ 1 + C_{W} \norm{\mu - \mu^{\star}} \big] \cdot \norm{\mu - \mu^{\star}},
	\end{align*}
	where the first inequality follows from the triangle inequality, the second
	inequality follows from~\eqref{eq:last-iterate-distance-from-M}, the third inequality follows from~\eqref{eq:almost-ambient-contraction},
    and the last inequality follows from~\cref{lemma:distance-conversion}.
    We label $q_{\mathrm{amb}} = (1 + \varepsilon') q' \cdot \big[1 + C_{W} \norm{\mu - \mu^{\star}}\big] < 1$ by suitably shrinking $W$ if necessary.
\end{proof}

The reader may notice that~\cref{cor:distance-contraction-after-gdpolyak-epochs} contains the
inclusion $\mu^{(j)} \in W$ as an explicit condition. Our final result shows that we can choose nested neighborhoods
$W_{\text{in}} \subset W_{\text{out}} \subset W$ such that every iterate remains in $W_{\text{out}}$.
\begin{lemma}[Gradient EM iterates remain close to $\mu^{\star}$]
    There exists a constant $\lambda_{\mathsf{ub}} \in (0, 1)$ such that, for all $\lambda < \lambda_{\mathsf{ub}}$, the following holds in the setting of~\cref{cor:distance-contraction-after-gdpolyak-epochs}:
    if $\mu \in (\lambda W) \cap \mathcal{A}_{\tau, \kappa}$, then all iterates $\mu^{(j)} \in W$ for $j \in \set{1, \dots, K}$. In
    particular, $\mu^{(K)} \in \lambda W$ itself.
\end{lemma}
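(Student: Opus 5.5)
We interpret $W$ as a ball $B(\mu^{\star}, R)$ and $\lambda W := B(\mu^{\star}, \lambda R)$. The goal is an a priori bound
\[
\norm{\mu^{(j)} - \mu^{\star}} \leq C_{\mathrm{traj}} \norm{\mu - \mu^{\star}}
\]
for all $j \leq K$, with $C_{\mathrm{traj}}$ independent of $\lambda$. Taking $\lambda_{\mathsf{ub}} < 1/C_{\mathrm{traj}}$ then keeps every iterate in $W$. The final claim $\mu^{(K)} \in \lambda W$ follows from the second alternative of \cref{cor:distance-contraction-after-gdpolyak-epochs}, since $q_{\mathrm{amb}} < 1$. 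In the first alternative, $\mu^{(K)} \in \mathcal{C}_{\kappa}$, we instead use the trajectory bound together with the smallness of $\Delta_{\pi}$ to stay within the shrunken ball.

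\textbf{Step 1: the Polyak step.} Write $y = P_{\mathcal M}(\mu)$. We bound $\norm{\mu^{(0)} - \mu^{\star}}$ by
\[
\norm{P_{\mathcal M}(\mu^{(0)}) - \mu^{\star}} + \dist_{\mathcal M}(\mu^{(0)}).
\]
The first term is at most $(q + C/\kappa)\norm{y - \mu^{\star}}$ by \cref{thm:tangential-annulus-contraction-or-core}. The second is at most $C\norm{y - \mu^{\star}}$ by \eqref{eq:epoch-polyak-offshoot-bound}, which rests on \cref{proposition:polyak-step-contraction-reference-loss} and \cref{lemma:polyak-step-comparison}. These estimates were derived only for $\mu \in W \cap \mathcal{A}_{\tau,\kappa}$, which holds here because $\mu \in \lambda W \subset W$. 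By \cref{lemma:distance-conversion}, $\norm{y - \mu^{\star}} \leq 2\norm{\mu - \mu^{\star}}$, so
\[
\norm{\mu^{(0)} - \mu^{\star}} \leq C_1 \norm{\mu - \mu^{\star}} \leq C_1 \lambda R.
\]

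\textbf{Step 2: the gradient steps, by induction.} Suppose $\mu^{(0)}, \dots, \mu^{(j)} \in W$ with $j < K$. Then these iterates lie outside $\mathcal{T}_{\tau} \cup \mathcal{C}_{\kappa}$, so the hypotheses of \cref{lem:short-gd-ravine-recovery} and of the Claim inside \cref{cor:distance-contraction-after-gdpolyak-epochs} hold up to index $j$. Those arguments give two bounds.
\begin{itemize}
\item The distance to the ravine contracts: $\dist_{\mathcal M}(\mu^{(j+1)}) \leq \dist_{\mathcal M}(\mu^{(0)})$.
\item The tangential drift is controlled, via \eqref{eq:short-gd-progress-bound} summed as in \eqref{eq:manifold-distance-geometric-series}:
\[
\norm{y^{(j+1)} - \mu^{\star}} \leq \norm{y^{(0)} - \mu^{\star}} + \varepsilon \norm{y - \mu^{\star}}.
\]
\end{itemize}
Each estimate uses only quantities at steps $i \leq j$, so no circularity arises. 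The drift bound needs the telescoped version rather than a single-step bound; since the hypotheses only involve steps $i \leq j$, the partial telescoping sum to $j+1$ is valid. The one gap is that the Taylor expansion of $P_{\mathcal M}$ at step $j+1$ requires the segment $[\mu^{(j)}, \mu^{(j+1)}]$ to lie in the domain where $P_{\mathcal M}$ is $C^2$. We handle this by shrinking $\eta_{\mathrm{ub}}$ relative to $R$, using a uniform gradient bound on $W$.

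Combining the two bullets,
\[
\norm{\mu^{(j+1)} - \mu^{\star}} \leq \norm{y^{(j+1)} - \mu^{\star}} + \dist_{\mathcal M}(\mu^{(j+1)}) \leq C_2 \norm{\mu - \mu^{\star}} \leq C_2 \lambda R.
\]
This is below $R$ once $\lambda < 1/C_2$, closing the induction. We set $C_{\mathrm{traj}} = \max(C_1, C_2)$.

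\textbf{Main obstacle.} The delicate point is uniformity. The constants $C_1$ and $C_2$ must not depend on $\lambda$, and the neighborhood-dependent constants (the $o_W(1)$ terms, $\kappa_0$, $\eta_{\mathrm{ub}}$) must be fixed on the outer set $W$ before $\lambda$ is chosen. This forces a definite order of choices: first fix $W$, $\tau$, $\kappa_0$ and $\eta_{\mathrm{ub}}$; only then choose $\lambda_{\mathsf{ub}}$. This ordering is what makes the scale-invariant bound $C_{\mathrm{traj}}\norm{\mu - \mu^{\star}}$ legitimate.
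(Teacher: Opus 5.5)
Your argument for the containment $\mu^{(j)}\in W$ is essentially the paper's. The paper argues by contradiction at the first exit time $\tau_W<K$ from $W$. It bounds $\norm{\mu^{(\tau_W)}-\mu^\star}\le\norm{y^{(\tau_W)}-\mu^\star}+\dist_{\mathcal M}(\mu^{(\tau_W)})$ using the same ingredients as you: the contraction of $\dist_{\mathcal M}$ from \cref{lem:short-gd-ravine-recovery}, the drift estimate of \cref{claim:short-gd-projected-drift}, the Polyak-step bounds \eqref{eq:epoch-polyak-projected-contraction} and \eqref{eq:epoch-polyak-offshoot-bound}, and \cref{lemma:distance-conversion}. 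This yields $(1+\varepsilon+C)[1+C_W\norm{\mu-\mu^\star}]\norm{\mu-\mu^\star}$, and the paper then takes $W_{\mathrm{in}}=(1+\varepsilon+C)^{-1}W$. Your forward induction is the same estimate. Your remarks about keeping $[\mu^{(j)},\mu^{(j+1)}]$ inside the region where $P_{\mathcal M}$ is $C^2$, and about fixing $W,\tau,\kappa_0,\eta_{\mathrm{ub}}$ before choosing $\lambda$, make explicit what the paper leaves implicit.

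The step that does not go through is your handling of the final clause $\mu^{(K)}\in\lambda W$ in the alternative $\mu^{(K)}\in\mathcal C_\kappa$. The trajectory bound only places $\mu^{(K)}$ in $C_{\mathrm{traj}}\lambda W$, and $C_{\mathrm{traj}}$ is not below $1$, since it contains the offshoot constant from \eqref{eq:epoch-polyak-offshoot-bound}. Smallness of $\Delta_\pi$ does not repair this. Membership in $\mathcal C_\kappa$ bounds $\disp(y^{(K)})$, and hence $\norm{y^{(K)}-\mu^\star}$ via \cref{lem:relative-gradient-annulus}, but it says nothing about $\dist_{\mathcal M}(\mu^{(K)})$.

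If $\mu^{(K)}\in\mathcal T_\tau$, the normal part is harmless: the drift claim uses only indices $j<K$, so the argument of \cref{cor:distance-contraction-after-gdpolyak-epochs} gives a factor $q_{\mathrm{amb}}<1$ anyway. The uncovered case is $\mu^{(K)}\in\mathcal C_\kappa\setminus\mathcal T_\tau$, for instance $K=0$ with the Polyak step landing directly in $\mathcal C_\kappa$. There the only available estimate is $\dist_{\mathcal M}(\mu^{(K)})\le\dist_{\mathcal M}(\mu^{(0)})\le C\norm{y-\mu^\star}$, which is of the same order as $\norm{\mu-\mu^\star}$. Closing this case would need a quantitative smallness of the offshoot constant, whereas \cref{thm:ddj-ravine-step-input} only supplies some $C_{\mathrm P}>0$.

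The paper's own proof also stops at $\mu^{(j)}\in W$ for $j\le K$. The ``in particular'' clause is therefore supported only in the contraction alternative (or whenever $\mu^{(K)}\in\mathcal T_\tau$), and you should state it that way rather than claim it in general.
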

\begin{proof}
Let $W_{\text{out}} \equiv W$ and $\tau_{W}$ denote the following stopping time:
\[
    \tau_{W} := \inf\set{k \in \mathbb{N}
    \mid \mu^{(k)} \notin W_{\text{out}}
    },
\]
and suppose that $\tau_{W} < K < \infty$, where $K$ is the stopping time defined in~\cref{cor:distance-contraction-after-gdpolyak-epochs}.
By definition, $\mu^{(0)}, \dots, \mu^{(\tau_W - 1)} \in W_{\mathrm{out}}$. For the iterate $\mu^{(\tau_W)}$ itself, we have
\begin{align*}
    \norm{\mu^{(\tau_W)} - \mu^{\star}} &\leq
    \norm{P_{\mathcal{M}}(\mu^{(\tau_W)}) - \mu^{\star}}
    + \dist_{\mathcal{M}}(\mu^{(\tau_W)}) \\
    &\leq
    \norm{y^{(\tau_W)} - \mu^{\star}} +
    \left(1 - \frac{c \eta}{2}\right)^{(\tau_W)}
    \dist_{\mathcal{M}}(\mu^{(0)}) \\
    &\leq
    \norm{y^{(0)} - \mu^{\star}} + \dist_{\mathcal{M}}(\mu^{(0)})
    + \varepsilon \norm{y - \mu^{\star}} \\
    &\leq
    (1 + \varepsilon) \norm{y - \mu^{\star}}
    + \dist_{\mathcal{M}}(\mu^{(0)}) \\
    &\leq
    (1 + \varepsilon + C) \norm{y - \mu^{\star}} \\
    &\leq
    (1 + \varepsilon + C) \left[1
    + C_{W} \norm{\mu - \mu^{\star}} \right]\norm{\mu - \mu^{\star}} 
\end{align*}
where the second inequality follows from~\cref{lem:short-gd-ravine-recovery},
the third inequality holds for any $\varepsilon > 0$ by suitably shrinking $W$ (following the proof of~\cref{claim:short-gd-projected-drift}),
the fourth inequality follows from~\cref{thm:tangential-annulus-contraction-or-core}, the penultimate inequality follows from~\eqref{eq:epoch-polyak-offshoot-bound}, and the last inequality
follows from~\cref{lemma:distance-conversion}. By suitably shrinking $W$ and defining $W_{\text{in}} = (1 + \varepsilon + C)^{-1} W_{\text{out}} \subset W$, we arrive at a contradiction; therefore, $\mu^{(j)} \in W_{\text{out}}$ for all
$j \leq K$.
\end{proof}

\end{document}